\newcommand{\1}{\mathds{1}}
\newcommand{\ol}{\overline}
\newcommand{\id}{\text{id}}
\newcommand{\loss}{\mathbf{l}}
  \newtheorem{theorem}{Theorem}[section]
  \newtheorem{proposition}[theorem]{Proposition}
  \newtheorem{corollary}[theorem]{Corollary}
  \newtheorem{lemma}[theorem]{Lemma}
  \newtheorem{remark}[theorem]{Remark}
  \newtheorem{definition}{Definition}
  \newtheorem{example}{Example}
\newcommand{\R}{\mathbb{R}}
\newcommand{\N}{\mathbb{N}}
\newcommand*{\mX}{\mathcal{X}}
\newcommand*{\mY}{\mathcal{Y}}
\newcommand*{\mZ}{\mathcal{Z}}
\newcommand{\setX}{{\bm{X}}}
\newcommand{\setY}{{\bm{Y}}}
\newcommand{\setZ}{{\bm{Z}}}
\DeclarePairedDelimiter\floor{\lfloor}{\rfloor}
\newcommand\inner[2]{\langle #1, #2 \rangle}
\DeclarePairedDelimiter\abs{\lvert}{\rvert}
\DeclarePairedDelimiter\norm{\|}{\|}
\newcommand{\indep}{\perp\!\!\!\!\perp} %
\DeclareMathOperator{\supp}{supp} %
\DeclareMathOperator*{\argmax}{arg\,max}
\newcommand*{\prob}{\mathcal{P}}
\renewcommand{\Pr}{\mathbb{P}} %
\newcommand{\E}{\mathbb{E}} %
\NewDocumentCommand{\expect}{ e{^} s o >{\SplitArgument{1}{|}}m }{%
  \E%
  \IfValueT{#1}{{\!}^{#1}}%
  \IfBooleanTF{#2}{%
    \expectarg*{\expectvar#4}%
  }{%
    \IfNoValueTF{#3}{%
      \expectarg{\expectvar#4}%
    }{%
      \expectarg[#3]{\expectvar#4}%
    }%
  }%
}
\NewDocumentCommand{\expectvar}{mm}{%
  #1\IfValueT{#2}{\nonscript\;\delimsize\vert\nonscript\;#2}%
}
\DeclarePairedDelimiterX{\expectarg}[1]{(}{)}{#1}\newcommand*\diff{\mathop{}\!\mathrm{d}}
\DeclarePairedDelimiter\distanceargs{(}{)}
\newcommand{\law}{\mathrm{law}}
\newcommand{\dX}{d_{\setX}}
\NewDocumentCommand{\dW}{e{^} d<> m}{%
  \IfNoValueTF{#1}{d_{\glssymbol{dW}}}{d_{\glssymbol{sinkhorn}}^{#1}}
  \distanceargs*{#3\IfValueT{#2}{;#2}}%
}
\NewDocumentCommand{\dOTC}{d<> m}{%
  d_{\glssymbol{dOTC}}%
  \ifthenelse{\equal{#2}{}}%
  {\IfValueT{#1}{\distanceargs*{\bullet, \bullet; #1}}}%
  {\distanceargs*{#2\IfValueT{#1}{;#1}}}%
}
\NewDocumentCommand{\dWL}{d<> o o m m o}{%
  \IfValueTF{#2}{%
    \IfValueTF{#3}{%
    d_{\glssymbol{dWLepsilondelta}, #2, #3}%
    }{%
    d_{\glssymbol{dWLdelta}, #2}%
    }
  }{%
    d_{\glssymbol{dWL}}%
  }%
  ^{\scriptscriptstyle (#4)}%
  \ifthenelse{\equal{#5}{}}{%
    \IfValueT{#1}{\distanceargs*{\bullet, \bullet; #1}}%
  }{%
  \distanceargs*{#5\IfValueT{#1}{;#1}}%
  }
}
\NewDocumentCommand{\dOTM}{d<> m m}{%
  d_{\glssymbol{dOTM}}^{\scriptscriptstyle #2}
  \ifthenelse{\equal{#3}{}}{%
    \IfValueT{#1}{\distanceargs*{\bullet, \bullet; #1}}%
  }{\distanceargs*{#3\IfValueT{#1}{;#1}}}%
}
\definecolor{darkblue}{rgb}{0.0, 0.0, 0.8}
\definecolor{darkred}{rgb}{0.8, 0.0, 0.0}
\definecolor{darkgreen}{rgb}{0.0, 0.8, 0.0}
\definecolor{purple}{RGB}{153,50,204}
\newcommand{\ready}{Yes} %
\newcommand{\denselist}{\itemsep 0pt\parsep=1pt\partopsep 0pt}
\newcommand*{\glsplainhyperlink}[2]{%
  \colorlet{currenttext}{.}%
  \colorlet{currentlink}{\@linkcolor}%
  \hypersetup{linkcolor=currenttext}%
  \hyperlink{#1}{#2}%
  \hypersetup{linkcolor=currentlink}%
}
\let\@glslink\glsplainhyperlink
\newcommand{\sectionref}[1]{Section~\ref{#1}}
\newcommand{\appendixref}[1]{Appendix~\ref{#1}}
\newcommand{\lemmaref}[1]{Lemma~\ref{#1}}
\newcommand{\algorithmref}[1]{Algorithm~\ref{#1}}
\newcommand{\definitionref}[1]{Definition~\ref{#1}}
\newcommand{\theoremref}[1]{Theorem~\ref{#1}}
\newcommand{\figureref}[1]{Figure~\ref{#1}}
\newcommand{\tableref}[1]{Table~\ref{#1}}
\newcommand{\equationref}[1]{Eq.~\ref{#1}}
\newcommand{\propositionref}[1]{Proposition~\ref{#1}} %
  \NewDocumentCommand{\email}{m}{\href{mailto:#1}{\texttt{#1}}}
\NewDocumentCommand{\myglsentry}{mmoom}{%
  \newglossaryentry{#1}{
    text={#3},%
    long={\IfValueT{#4}{#4}},%
    description={#5},%
    first={\IfValueTF{#4}{#4 (#3)}{#3}},%
    name={\ensuremath{d_{\glsentrysymbol{#1}}}},%
    symbol={\ensuremath{\mathrm{#2}}},%
  }
}
\newglossaryentry{dW}{%
  name={\ensuremath{d_{\glsentrysymbol{dW}}}},
  text={Wasserstein distance},
  description={The Wasserstein distance is defined as the solution of the optimal transport between two measures with a cost matrix, in \cref{eq:wassersteindef}.
\begin{equation*}\dW<C>{\alpha,\beta}:=\inf_{(X,Y)}\E\; C(X,Y)\end{equation*}
More details in Section~\ref{par:wasserstein}.
  }, 
  symbol={\ensuremath{\mathrm{W}}}
}
\newcommand*{\wasserstein}{\gls{dW}}
\newglossaryentry{sinkhorn}{%
  name={\ensuremath{d_{\glsentrysymbol{dW}}^\epsilon}},
  text={Sinkhorn distance},
  description={The Sinkhorn distance is the entropy-regularized 
    version of the \gls{dW}. 
    We use it because of its smoothness properties. it is also faster to compute than the \gls{dW}. It is defined in \cref{def:regOT} of the Appendix.
\begin{equation*}
\dW^{\epsilon}<C>{\alpha,\beta}:=\min_{(X, Y)\in \mathcal{C}(\alpha, \beta)} \E\; C(X,Y) - \epsilon H(X, Y) 
\end{equation*}
  }, 
  symbol={\ensuremath{\mathrm{W}}},
}
\newglossaryentry{dWLdelta}{%
    name={\ensuremath{d_{\glsentrysymbol{dWLdelta}, \delta}}}, 
    long={\(\delta\)-discounted Weisfeiler-Lehman distance}, 
    text={\(\delta\)-discounted WL distance}, 
    symbol={\ensuremath{\mathrm{WL}}}, 
    description={ Our \(\delta\)-discounted WL distance. 
      It is a regularization of the original \gls{dWL}.
      More details in Section~\ref{sec:wldelta}.
      They are defined as a parametric class of \gls{dOTM}, 
      parameterized by the distributions defined in Section~\ref{subsec:dwldelta}.
      For $k \in \N$:
      \begin{equation*}
      \dWL[\delta]{k}{\mathcal{X},\mathcal{Y}} := 
      \inf_{(X_t,Y_t)_{t\in\N}\in\Pi(\mathcal{X},\mathcal{Y})} 
        \expect*{ \sum_{t = 0}^{k-1} \delta (1 - \delta)^t C(X_t,Y_t)
        +(1 - \delta)^k C(X_{k},Y_{k})}
      \end{equation*}
      and 
      \begin{equation*}
      \dWL[\delta]{\infty}{\mathcal{X},\mathcal{Y}} := 
      \inf_{(X_t,Y_t)_{t\in\N}\in\Pi(\mathcal{X},\mathcal{Y})} 
      \expect*{ \sum_{t = 0}^{\infty} \delta (1 - \delta)^t C(X_{t},Y_{t})}
      \end{equation*}
     }
}
\newglossaryentry{dWLepsilondelta}{%
    name={\ensuremath{d_{\glsentrysymbol{dWLdelta},\delta,\epsilon}}}, 
    text={Entropy-regularized $\delta$-discounted WL distance}, 
    symbol={\ensuremath{\mathrm{WL}}}, 
    description={ The \gls{dWLepsilondelta} is obtained by replacing all 
      \glspl{dW} by \glspl{sinkhorn} in the computation of \gls{dWLdelta}.
      This operation makes our \gls{dWLdelta} into a smooth distance that can 
      be used for learning, using the formulae developed in Section~\ref{differentiation}.
      It is defined in Definition~\ref{def:entropy dWL}.
  }
}
\newglossaryentry{sets}{%
    name={\ensuremath{\setX, \setY, \setZ}}, 
    text={sets}, 
    symbol={\ensuremath{\setX}}, 
    description={In this paper, we use boldface letters, such as $\setX$, to denote finite sets.
  }
}
\newglossaryentry{markovchains}{%
  name={\ensuremath{\mX, \mY, \mZ, \nu^{\setX}, \nu^{\setY}, \nu^{\setZ}, m^{\setX}_\bullet, m^{\setY}_\bullet, m^{\setZ}_\bullet}}, 
    text={Markov chain}, 
    description={We denote
      $\mX$ (resp. $\mY$) a Markov chain over \glssymbol{sets} (resp. $\setY$). 
    It is defined by its initial distribution $\nu^{\setX}$ (resp. $\nu^{\setY}$) and its transition kernel $m^{\setX}_\bullet$ (resp. $m^{\setY}_\bullet$).
  }
}
\newglossaryentry{markovrealisations}{%
  name={\ensuremath{(X_t)_{t\in\N}, (Y_t)_{t\in\N}, (Z_t)_{t\in\N}}}, 
    text={realization}, 
    description={We denote $(X_t)_{t\in\N}$ (resp. $(Y_t)_{t\in\N}$) a realisation of $\mX$ (resp. $\mY$)
}
}
\newglossaryentry{costmatrix}{%
  name={\ensuremath{C}}, 
    text={cost function}, 
    description={We denote $C: \setX \times \setY \to \R_+$ a cost function.
}
}
\title{Distances for Markov Chains, and Their Differentiation}
\author{Tristan Brugère\footnotemark[1]~\footnotemark[2] \\ \email{tbrugere@ucsd.edu} 
\and Zhengchao Wan\footnotemark[2] \\ \email{zcwan@ucsd.edu}
\and Yusu Wang\footnotemark[2] \\ \email{yusuwang@ucsd.edu}
}
\begin{document}

\maketitle
\renewcommand{\thefootnote}{\fnsymbol{footnote}}
\footnotetext[1]{corresponding author}
\footnotetext[2]{
Halıcıoğlu Data Science Institute —
University of California, San Diego —
9500 Gilman Dr. La Jolla, CA 92093
}
\renewcommand{\thefootnote}{\arabic{footnote}}
\setcounter{footnote}{0}

\begin{abstract}%
    (Directed) graphs with node attributes are a common type of data in various applications and there is a vast literature on developing metrics and efficient algorithms for comparing them. Recently, in the graph learning and optimization communities, a range of new approaches have been developed for comparing graphs with node attributes, leveraging ideas such as the Optimal Transport (OT) and the Weisfeiler-Lehman (WL) graph isomorphism test. Two state-of-the-art representatives are the OTC distance proposed in \citep{o2022optimal} and the WL distance in \citep{chen2022weisfeilerlehman}. Interestingly, while these two distances are developed based on different ideas, we observe that they both view graphs as Markov chains, and are deeply connected. 
Indeed, in this paper, we propose a unified framework to generate distances for Markov chains (thus including (directed) graphs with node attributes), which we call the \emph{Optimal Transport Markov (OTM)} distances, that encompass both the OTC and the WL distances. We further introduce a special one-parameter family of distances within our OTM framework, called the \emph{discounted WL distance}. We show that the discounted WL distance has nice theoretical properties and can address several limitations of the existing OTC and WL distances. Furthermore, contrary to the OTC and the WL distances, our new discounted WL distance can be differentiated {\bf after} a entropy-regularization similar to the Sinkhorn distance, making it suitable to use in learning frameworks, e.g., as the reconstruction loss in a graph generative model. 

\end{abstract}

\section{Introduction}
\label{section:introduction}

Graph data is ubiquitous across various application domains, e.g., molecules viewed as node-attribute graphs, citation networks as directed graphs. 
Developing metrics and efficient algorithms to compare them have been traditionally studied in fields such as graph theory and theoretical computer science. 
In the last two decades, this problem also received tremendous attention in the graph learning and optimization community, especially for comparing (directed) graphs with node attributes (which we will call {\bf labeled graphs}). 
In particular, two ideas have become prominent in the modern treatment of labeled graphs. The first idea is to leverage the so-called Weisfeiler-Lehman (WL) graph isomorphism test \citep{leman1968reduction}, which is a classic graph isomorphism test that, in linear time, can distinguish a large family of graphs \citep{babai1979canonical,babai1983canonical}. It has recently gained renewed interest both in designing WL-inspired graph kernels \citep{shervashidze2011weisfeiler,togninalli2019wasserstein} and as a tool for analyzing Message Passing Graph Neural Networks (MP-GNNs) \citep{xu2018powerful,azizian2020expressive}. 
The second idea in modern treatment of graphs is to treat labeled graphs (or related structured data) as suitable discrete measure spaces and then use the idea of Optimal Transport (OT) to compare them. Examples include the Wasserstein WL (WWL) kernel \citep{togninalli2019wasserstein}, the Fused Gromov-Wasserstein (FGW) distance \citep{titouan2019optimal}, and the WL test based tree-mover distance \citep{chuangtree}.

Very recently, several studies took a less combinatorial approach and viewed graphs as Markov chains: \cite{chen2022weisfeilerlehman} introduced the Weisfeiler-Lehman (WL) distance, which generalizes the graph comparing problem to the Markov chain comparison problem through a WL-like process in a natural way. This distance has been found to be more discriminative than the previously popular WWL graph kernel.
Around the same time, the optimal transition coupling (OTC) distance was proposed by \citet{o2022optimal} for comparing stationary Markov chains, i.e., Markov chains with stationary distributions and the study was followed by \cite{yi2021alignment} with applications in comparing graphs.
 
The WL distance proposed in \citep{chen2022weisfeilerlehman} and the OTC distance in \citep{o2022optimal} represent two SOTA approaches in comparing labeled graphs (i.e., graphs with node attributes). In fact, both of them compare more general Markov chains like objects. The Markov chain perspective not only relieves the difficulty in handling combinatorial structures of graphs but also provides a natural and unified way of modelling both directed and undirected graphs. To broaden the use of these distances, especially in graph learning and optimization (e.g., to use such distance as graph reconstruction loss in a generative model), it is crucial that we are able to differentiate such distances w.r.t. changes in input graphs. However, differentiating these distances appears to be challenging.

\paragraph{Our contributions.}
We propose in \sectionref{generalized-optimal-transport-Markov-distances}  a unified framework to generate distances between Markov chains (and thus also for labeled graphs), which we call the \emph{Optimal Transport Markov (OTM)} distances. 
This framework of OTM distances encompasses both the WL distance and the OTC distance and in particular, we prove that the two distances serve as extreme points in the family of OTM distances.
We further identify a special one-parameter family of distances within our general framework of OTM distances, and we call our new distance \emph{the \gls{dWLdelta}} (for a parameter $\delta \in[0,1]$) in \sectionref{sec:wldelta}. 
Not only do we unveil succinct connections between our discounted WL distance and both the WL and the OTC distances, but we also show that the discounted WL distance has better theoretical properties than the other two distances:
\begin{enumerate}\denselist
    \item  Contrary to the WL and the OTC distances, the discounted WL distance can be used to compare non-stationary Markov chains.
    \item The discounted WL distance has the same discriminative power as the OTC distance and possibly stronger discriminative power than the WL distance. %
    \item All the three types of distances are computed via iterative schemes. We devise an algorithm of the discounted WL distance which converges provably faster than the one for the WL distance introduced in \citep{chen2022weisfeilerlehman}; whereas to the best of our knowledge, there is no known study on convergence rate of the OTC distance. 
    \item Furthermore, contrary to both the OTC and the WL distances, a regularized version of the \gls{dWLdelta}s 
can be differentiated against its parameters, 
enabling a range of possible applications as a loss in machine learning or in other optimization tasks.
In \sectionref{differentiation}, we give a simple formula to compute its gradients.
\end{enumerate}

Note that the effectiveness of the WL distance was already shown in \citep{chen2022weisfeilerlehman} where it compared favorably with other graph kernels. Our discounted WL distance is provably more discriminative (e.g, Proposition \ref{prop:dwl_lower_bound}), and thus we expect it will lead to even better practical performance.

\paragraph{Relation to the fused-GW (FGW) distance of \citet{titouan2019optimal}.} 
The fused-GW (FGW) distance also leverages the optimal transport idea, and in fact, uses the Gromov-Wasserstein distance to compute two graphs (equipped with metric structures at nodes). The authors also developed a heuristic algorithm to approximate this algorithm in practice. While the algorithms work well in practice \citep{vincent2021online,titouan2019optimal}, there are no theoretical guarantees for them and in fact, the FGW algorithm is only proven to converge to a local minimum (of a provably non-convex function). Current methods on optimizing to minimize FGW as a loss relies on a kind of block coordinate descent, updating alternatively the
OT matching (using the FGW algorithm) the parameters by gradient descent with fixed matching~\citep{vincent2021online,brogat2022learning, xia2023implicit}.  
In contrast, we can compute our $\delta$-discounted WL distance –and its gradient in the case of the regularized version– exactly, allowing us to easily optimize it. 
We further remark that the FGW distance and our OTM distance adopt fundamentally different points of view: FGW stems from the interpretation of graphs as metric spaces, whereas OTM distances rise from random walks on graphs viewed as probabilistic objects (Markov Chains).

\section{Preliminaries}

We include an appendix within the supplementary material that contains all the detailed proofs, algorithms and experimental details.
We provide a Glossary of all notations we use in \appendixref{main}. 

\subsection{Probability Measures and Markov Chains}\label{sec:markovchains}
In this paper, we use boldface letters, such as $\setX$, to denote finite \gls{sets}. We let $\mathcal{P}(X)$ to denote the space of all probability measures on $\setX$.

A \emph{finite \gls{markovchains}} $\mX = (\setX,m_\bullet^{\setX, (\bullet)},\nu^{\setX})$ consists of 
a finite state space $\setX$, 
a Markov transition kernel  $m^{\setX, (\bullet)}_\bullet:x, t \in \setX \times \N \rightarrow m^{\setX, (t)}_x \in \mathcal{P}(\setX)$,
and an initial distribution $\nu^{\setX}$.
A \emph{\gls{markovrealisations} of a Markov chain} $\mX$ is a sequence of random variables $(X_t:\Omega\rightarrow\setX)_{t\in\N}$ on a common probability space $(\Omega,\mathbb{P})$ such that
$\mathrm{law}(X_0)=\nu^{\setX}$
and $\Pr(X_{t+1}=x'|X_t=x)=m_x^{\setX, (t)}( x')$ for any $x,x'\in\setX.$ 
If $m^{\setX, (t)}_\bullet$ is independent of the time $t$, we call the chain $\mX$ \emph{time homogeneous}. In that case, we will omit the $t$ parameter and write $m^{\setX}_\bullet$. 
We will also use the notation $m_{xx'}^{\setX}:=m_x^{\setX}( x')$ for compactness later.
If the initial distribution $\nu^{\setX}$ is stationary, then we call $\mX$ a \emph{stationary Markov chain}.\looseness-1

\paragraph{Couplings.}
Let $\setX, \setY$ be two finite sets and let $\alpha \in \prob(\setX), \beta\in\prob(\setY)$. We call $\mu\in\mathcal{P}(\setX\times \setY)$ \emph{a coupling between $\alpha$ and $\beta$} if for any $A\subseteq \setX$ and $B\subseteq \setY$, one has that $\mu(A\times \setY)=\alpha(A)$ and $\mu(\setX\times B)=\beta(B)$. We let $\mathcal{C}(\alpha,\beta)$ denote the set of all couplings between $\alpha$ and $\beta$.
Couplings can be also interpreted via random variables. Let $X:\Omega\rightarrow\setX$ and $Y:\Omega\rightarrow\setY$ be two random variables from some same probability space $(\Omega,\Pr)$ into the spaces $\setX$ and $\setY$, respectively, such that $\law(X)=\alpha$ (we also write $X\sim\alpha$) and $\law(Y)=\beta$. Then, it is easy to check that $\law((X,Y))\in\mathcal{C}(\alpha,\beta)$ and that any coupling in $\mathcal{C}(\alpha,\beta)$ can be obtained in this way. We hence also write $(X,Y)\in\mathcal{C}(\alpha,\beta)$.

\paragraph{Markovian couplings.}
Given two Markov chains $\mathcal{X}$ and $\mathcal{Y}$, a stochastic process $(X_t,Y_t)_{t\in\N}$ on a probability space $(\Omega,\Pr)$ is \emph{a Markovian coupling} \citep{chen2023wl} between them if 
\begin{itemize}\denselist
    \item $(X_t,Y_t)_{t\in\N}$ satisfies the (time inhomogeneous) Markov property: for any $t\in\N$,
    \begin{align*}
    \Pr((X_{t+1},Y_{t+1})|(X_t,Y_t),\ldots,(X_0,Y_0))
    &=\Pr((X_{t+1},Y_{t+1})|(X_t,Y_t)).
    \end{align*}
    \item For any $t\in\N$, any $x\in \setX$ and $y\in \setY$, $m_{xy}^{\setX\setY,(t)}$ defined below belongs to $\mathcal{C}(m_x^{\setX},m_y^{\setY})$: 
    \[m_{xy}^{\setX\setY,(t)}:=\Pr((X_{t+1},Y_{t+1})|(X_t,Y_t)=(x,y))\in\mathcal{P}(\setX\times \setY).\]  
    \item The initial distribution is a coupling: $\mathrm{law}((X_0,Y_0))\in\mathcal{C}(\nu^{\setX},\nu^{\setY})$.
\end{itemize}
We let $\Pi(\mathcal{X},\mathcal{Y})$ denote the collection of all Markovian couplings. Given a Markovian coupling $(X_t,Y_t)_{t\in\N}$, if for each $t\in\N$ and each $x\in \setX$ and $y\in \setY$, one has that $m_{xy}^{\setX\setY,(t)}=m_{xy}^{\setX\setY,(1)}$, then we say $(X_t,Y_t)_{t\in\N}$ is a \emph{time homogeneous} Markovian coupling. We denote by $\Pi_\mathrm{H}(\mathcal{X},\mathcal{Y})$ the collection of all time homogeneous Markovian couplings {w.r.t. $\mathcal{X}$ and $\mathcal{Y}$.}

\subsection{Optimal Transport and Distances between Markov Chains}
\label{subsec:OPT} 
\paragraph{The \gls{dW}.}\label{par:wasserstein}

Let $\setX$ and $\setY$ be two finite sets. Assume that $\alpha, \beta$ are probability measures over $\setX$ and $\setY$, respectively. 
We call any function $C: \setX \times \setY \to \R_+$ a \emph{\gls{costmatrix}} between $\setX$ and $\setY$. 
Then, the \emph{Optimal Transport (OT) distance} between $\alpha$ and $\beta$ is defined as follows:
\begin{equation}\dW<C>{\alpha,\beta}:=\inf_{(X,Y) \in \mathcal{C}(\alpha,\beta)}\E\; C(X,Y),\label{eq:wassersteindef}\end{equation}
where $\E$ denotes the expectation.
When $\setX=\setY$ and $C:=d_\setX$ is a distance function on $\setX$, the quantity $\dW{\alpha,\beta;d_\setX}$ is also called the \emph{Wasserstein distance} between $\alpha$ and $\beta$ as $d_\mathrm{W}$ is a metric distance on  $\mathcal{P}(\setX)$.

\paragraph{The \Gls{dWL}.}\label{par:wl}
Consider two finite \emph{stationary} Markov chains (i.e., Markov chains with stationary initial distributions) $\mX$ and $\mY$, with a cost function \(C: \setX \times \setY \rightarrow \R_+\).
{Inspired by \citep{chen2022weisfeilerlehman},} given any $k\in\N$, the \emph{depth-$k$ Weisfeiler-Lehman (WL) distance} between them
is defined as follows:

\begin{equation}\label{eq:dwlk}
\dWL<C>{k}{\mX, \mY}:= \inf_{(X_t,Y_t)_{t\in\N}} \mathbb{E}\,C(X_k, Y_k), 
\end{equation}
where the infimum is taken over all possible Markovian couplings $(X_t,Y_t)_{t\in\N}\in\Pi(\mathcal{X},\mathcal{Y})$.
Then, the Weisfeiler-Lehman distance is defined as
\begin{equation}\label{eq:alternatewldefinition}
\dWL<C>{\infty}{\mX, \mY}:=\sup_{k\in\N} \dWL<C>{k}{\mX,\mY}.
\end{equation}

\begin{remark}[Nuance in definition]\label{rmk:different dwl}
The definition of the WL distance above is based on a characterization of the WL distance in \cite{chen2023wl}. 
The (depth-$k$) WL distance was originally inspired by the classical Weisfeiler-Lehman graph isomorphism test. Specifically, the depth-$k$ WL distance was designed to emulate the $k$th iteration of the WL test\footnote{In the WL test literature, the index $k$ typically denotes the order of the test. However, in this context, we use it to denote the depth.}.
Our definition is slightly more general than the one in \cite{chen2023wl}: the (depth-$k$) \gls{dWL} was originally defined between two Markov chains endowed with label functions $\ell_{\setX} : \setX \to \mathbf{Z}$ and $\ell_{\setY}: \setY \to \mathbf{Z}$ into a common metric space $(\mathbf{Z},d_{\mathbf{Z}})$. Using our language, this is equivalent to saying that the cost function involved is of the form $C(x,y) = d_{\mathbf{Z}}(l_{\setX}(x), l_{\setY}(y))$.
\end{remark}

The stationary assumption is redundant when $k<\infty$, and hence $\dWL<C>{k}{\mX,\mY}$ is defined for any Markov chains. When $k=\infty$, the situation becomes subtle; see more discussion in \appendixref{sec: more on dWL}.

\paragraph{The \gls{dOTC}.}\label{par:dOTC}
Consider two finite stationary Markov chains $\mathcal{X}$ and $\mathcal{Y}$. 
Let $C:\setX\times \setY\rightarrow \R_+$ be any cost function.  
Then, the \emph{optimal transition coupling (OTC) distance} \citep{o2022optimal}, which we denote by $\dOTC{}$, is defined as 
\begin{equation}\label{eq:dOTC}
 \dOTC<C>{\mathcal{X},\mathcal{Y}}:=\inf_{\substack{(X_t,Y_t)_{t\in\N}\in\Pi_\mathrm{H}(\mathcal{X},\mathcal{Y})\\
\law((X_0,Y_0))\text{ is stationary}}} \mathbb{E}\, C(X_0,Y_0),
\end{equation}
where infimum is over \emph{time homogeneous} Markovian couplings with \emph{stationary} initial distributions.

\begin{remark}[A note on symbols]
For simplicity of presentation, in what follows, we will sometimes omit the cost matrix $C$ in our notation of distances when its choice is clear.  
For example, we may write $\dWL{k}{\mX, \mY}$ instead of $\dWL<C>{k}{\mX, \mY}$. 
\end{remark}
\section{Optimal Transport Markov Distances}\label{generalized-optimal-transport-Markov-distances}
Note the similarity between \cref{eq:dwlk} and \cref{eq:dOTC}: they are both infimizing certain expected costs between random walk paths.
Motivated by this similarity, in this section, we devise a general framework for constructing distances between Markov chains with \emph{arbitrary} initial distributions in contrast to the stationary condition for the two distances mentioned above. 
We will show how this framework incorporates the (depth-$k$) \gls{dWL} and admits the \gls{dOTC} as a limit point, and how these two distances appear as lower and upper bounds for this family of distances.
All proofs of results in this section can be found in \appendixref{sec:OTM proofs}.

In what follows, we assume $\mathcal{X}$ and $\mathcal{Y}$ are two (not necessarily stationary) finite Markov chains endowed with any cost function $C:\setX\times \setY\rightarrow \R_+$. 

\begin{definition}[\gls{dOTM}]\label{def:otm_distance} 
Let \(p\in\mathcal{P}(\N)\) be a distribution over all non-negative integers, $T\sim p$ be a random variable.
We define the \emph{Optimal Transport Markov (OTM) distance}
associated to \(p\), between two Markov chains \(\mX\) and \(\mY\) as:
\begin{equation}
\dOTM<C>{p}{\mX, \mY} = \inf_{(X_t, Y_t)_{t\in\N}}
  \E\, C(X_T,Y_T),
\label{eq:dotm}\end{equation}
where the infimum is taken over all Markovian couplings $(X_t, Y_t)_{t\in\N}$ \emph{independent} of $T$.
\end{definition}

\begin{remark}[Optimal Markovian couplings exist]\label{rmk: optimal markovian coupling}
    In fact, the infimum in \cref{eq:dotm} can be replaced by a \emph{minimum}: there exists a Markovian coupling 
    $(X_t, Y_t)_{t\in\N}$ such that 
    $\dOTM<C>{p}{\mX, \mY} = \E\, C(X_T,Y_T)$. 
    We refer the reader to \appendixref{proof:optimalmarkovcoupling} for a proof.
\end{remark}

We also remark that, just as the Optimal Transport problem gives rise to the \wasserstein{} between probability measures on the same underlying metric space, the OTM distance above becomes a pseudometric on the collection of Markov chains with a pseudometric space $(\setX,d_\setX)$ being their common state space; we refer the reader to \appendixref{sec:OTM is a metric} for details.

\begin{example}[\(\dWL{k}{}\) is an OTM distance]\label{example:dwlk_otm} 
Let $\delta_k$ denote the Dirac delta measure at $k\in\N$. Then, it is obvious that
$\dWL{k}{\mathcal{X}, \mathcal{Y}} = \dOTM{\delta_k}{\mathcal{X}, \mathcal{Y}}.$
In this way, although \(\dWL{\infty}{}\) is not an instance of OTM
distances, it is actually the limit of a sequence of OTM distances.
\end{example}

Besides the example above, one can establish the following bounds for the OTM distance utilizing the (depth-$k$) \gls{dWL} and the \gls{dOTC}. 

\begin{proposition}[A $\dWL{k}{}$-based lower bound]\label{prop:dwl_lower_bound}
For any distribution $p$ on $\N$, one has that 
\begin{equation*}
\dOTM{p}{\mX, \mY} \geq \E_{T\sim p}({\dWL{T}{\mX, \mY}})=\sum_{k\in\N} p(k)\dWL{k}{\mX, \mY}.
\end{equation*}
\end{proposition}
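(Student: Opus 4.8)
The plan is to exploit the independence between the random time $T\sim p$ and the Markovian coupling in \cref{def:otm_distance}, reducing the OTM objective to a $p$-weighted average of the depth-$k$ objectives, and then to compare it term by term against the depth-$k$ WL distances. First I would fix an arbitrary Markovian coupling $(X_t,Y_t)_{t\in\N}\in\Pi(\mX,\mY)$ independent of $T$, i.e.\ a feasible competitor for the infimum defining $\dOTM{p}{\mX,\mY}$. Conditioning on the value of $T$ and applying the tower property gives
\begin{equation*}
\E\,C(X_T,Y_T) = \sum_{k\in\N} p(k)\,\E\big[C(X_k,Y_k)\mid T=k\big] = \sum_{k\in\N} p(k)\,\E\,C(X_k,Y_k),
\end{equation*}
where the second equality is exactly where independence enters: conditioning on $\{T=k\}$ leaves the joint law of $(X_k,Y_k)$ unchanged. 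Interchanging the sum over $\N$ with the expectation is legitimate by Tonelli's theorem, since $C\geq 0$ (indeed $C$ is bounded, as $\setX$ and $\setY$ are finite).

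Next I would note that the same coupling $(X_t,Y_t)_{t\in\N}$ is itself a feasible competitor in the infimum defining each depth-$k$ WL distance in \cref{eq:dwlk}, so that $\E\,C(X_k,Y_k)\geq \dWL{k}{\mX,\mY}$ for every $k\in\N$. Substituting into the identity above yields
\begin{equation*}
\E\,C(X_T,Y_T)\;\geq\;\sum_{k\in\N} p(k)\,\dWL{k}{\mX,\mY}\;=\;\E_{T\sim p}\big(\dWL{T}{\mX,\mY}\big).
\end{equation*}
Since the right-hand side no longer depends on the chosen coupling, taking the infimum on the left over all Markovian couplings independent of $T$ delivers the desired inequality $\dOTM{p}{\mX,\mY}\geq \E_{T\sim p}(\dWL{T}{\mX,\mY})$.

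The computation is essentially routine, so there is no serious obstacle; the only point requiring care is the independence step, which must guarantee that conditioning on $\{T=k\}$ does not reshape the marginal law of $(X_k,Y_k)$ — this is immediate from the stipulated independence of $T$ and the coupling. Conceptually, the reason this is a lower bound rather than an equality is that a single coupling must serve all values of $k$ simultaneously in the weighted sum, whereas the right-hand side is free to select a distinct optimal coupling for each individual $\dWL{k}{}$; closing this gap (e.g.\ characterizing distributions $p$ for which equality holds) would require a separate argument.
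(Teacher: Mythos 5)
Your proof is correct, and it is the natural argument for this statement: decompose $\E\,C(X_T,Y_T)=\sum_{k\in\N}p(k)\,\E\,C(X_k,Y_k)$ using the stipulated independence of $T$ from the coupling (with Tonelli justifying the interchange), bound each term below by $\dWL{k}{\mX,\mY}$ since the fixed coupling is feasible for every depth-$k$ problem, and then infimize over couplings. The paper does not spell out a separate proof of Proposition~\ref{prop:dwl_lower_bound} in Appendix~\ref{sec:OTM proofs}, but the key decomposition you use is exactly the one the paper records as Lemma~\ref{lm:compute expected value}, so your route coincides with the paper's implicit one.
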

\begin{proposition}[$\dOTC{}$ is an upper bound]\label{prop:dotc_upper_bound}
For all distributions $p$ on $\N$, and \emph{stationary} Markov chains $\mX, \mY$, one has that:
\(
\dOTM{p}{\mX, \mY} \leq \dOTC{\mX, \mY}.
\)
\end{proposition}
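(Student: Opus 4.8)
The plan is to exploit the fact that the feasible set defining $\dOTC{\mX,\mY}$ is contained in the feasible set defining $\dOTM{p}{\mX,\mY}$, and that on this smaller set the OTM objective collapses to the OTC objective because a stationary process has time-invariant marginals. Since $\dOTM{p}{\mX,\mY}$ is an infimum over the larger class, this immediately yields the upper bound.

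Concretely, first I would fix an arbitrary time-homogeneous Markovian coupling $(X_t,Y_t)_{t\in\N}\in\Pi_\mathrm{H}(\mX,\mY)$ whose initial law $\law((X_0,Y_0))$ is stationary, i.e.\ an arbitrary feasible point for $\dOTC{\mX,\mY}$. Such a coupling is in particular an element of $\Pi(\mX,\mY)$, and it may be realized on a probability space carrying an independent copy of $T\sim p$; hence it is also feasible for $\dOTM{p}{\mX,\mY}$.

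The key step is the observation that, because the coupling is time-homogeneous with transition kernel $m^{\setX\setY,(1)}$ and its initial distribution is stationary for that kernel, the coupled process $(X_t,Y_t)_{t\in\N}$ is stationary: $\law((X_t,Y_t))=\law((X_0,Y_0))$ for every $t\in\N$. Consequently $\E\, C(X_t,Y_t)=\E\, C(X_0,Y_0)$ for all $t$, and since the coupling is independent of $T$,
\begin{equation*}
\E\, C(X_T,Y_T)=\sum_{k\in\N}p(k)\,\E\, C(X_k,Y_k)=\E\, C(X_0,Y_0)\sum_{k\in\N}p(k)=\E\, C(X_0,Y_0).
\end{equation*}
Thus the OTM objective at this coupling equals its OTC objective. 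Taking the infimum over all OTC-feasible couplings, and using that each such coupling is OTM-feasible, gives
\begin{equation*}
\dOTM{p}{\mX,\mY}\;\le\;\inf_{(X_t,Y_t)}\E\, C(X_T,Y_T)=\inf_{(X_t,Y_t)}\E\, C(X_0,Y_0)=\dOTC{\mX,\mY},
\end{equation*}
where both infima run over the OTC-feasible set.

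The only point requiring care --- the main, if mild, obstacle --- is justifying that time-homogeneity together with the stationary initial condition really forces the marginals $\law((X_t,Y_t))$ to be constant in $t$. This is precisely the statement that a time-homogeneous Markov chain started from an invariant distribution is a stationary process, which follows by a one-line induction on $t$ using that $\law((X_0,Y_0))$ is fixed by the coupled transition kernel. Once this is in hand, the remainder is bookkeeping: verifying that the chosen coupling is genuinely admissible for $\dOTM{p}{\mX,\mY}$ (Markovian and independent of $T$) and that the two displayed infima are taken over the same set.
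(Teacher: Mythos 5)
Your proof is correct and coincides with the argument the paper relies on: the paper states Proposition~\ref{prop:dotc_upper_bound} without spelling out an explicit proof in Appendix~\ref{sec:OTM proofs}, treating it as immediate, and the reasoning is exactly yours — any OTC-feasible coupling belongs to $\Pi(\mX,\mY)$, can be realized jointly with an independent $T\sim p$, and, being a time-homogeneous chain started from an invariant initial law, is a stationary process, so that $\E\, C(X_T,Y_T)=\E\, C(X_0,Y_0)$. Taking the infimum over OTC-feasible couplings then gives $\dOTM{p}{\mX,\mY}\leq\dOTC{\mX,\mY}$, with no gaps in your treatment of the two points that need care (time-invariance of the marginals and independence of $T$).
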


The above two propositions suggest that the (depth-$k$) WL-distance and the \gls{dOTC} can be viewed as the two extremes of the family of OTM distances. %
In fact, in \cref{sssection:interpolation} later, we will show that the OTC distance turns out to be a \emph{tight} upper bound of the family of OTM distances. Furthermore, although the \gls{dOTC} serves as an upper bound, the following result states that a large family of OTM distances has the same discriminative power as the \gls{dOTC}.
\begin{proposition}[Zero-sets]\label{thm:zerosets}
Suppose that $\mX, \mY$ are \emph{stationary} and that $p$ is fully supported (i.e., $\forall t \in \N , p(t) > 0$). 
Then, $\dOTC{\mX, \mY} = 0$ iff $\dOTM{p}{\mX, \mY} = 0$, implying that these two distances ``distinguish'' the same sets of stationary Markov chains.
\end{proposition}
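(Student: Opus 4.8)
The plan is to treat the two implications separately, with essentially all of the work in the converse. The forward direction is immediate: since $C \ge 0$ we always have $\dOTM{p}{\mX, \mY} \ge 0$, while \cref{prop:dotc_upper_bound} (applicable because $\mX,\mY$ are stationary) gives $\dOTM{p}{\mX, \mY} \le \dOTC{\mX, \mY}$, so $\dOTC{\mX, \mY} = 0$ forces $\dOTM{p}{\mX, \mY} = 0$.

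For the converse, I would first turn $\dOTM{p}{\mX, \mY} = 0$ into a single coupling that never pays positive cost. By \cref{rmk: optimal markovian coupling} there is an optimal Markovian coupling $(X_t, Y_t)_{t\in\N}$ with $\E\, C(X_T, Y_T) = 0$ for $T \sim p$ independent; expanding over $T$ yields $\sum_{k \in \N} p(k)\,\E\, C(X_k, Y_k) = 0$. Since $p$ is fully supported and each summand is nonnegative, $\E\, C(X_k, Y_k) = 0$ and hence $C(X_k, Y_k) = 0$ almost surely, for every $k$. Thus this single coupling stays, at all times, inside the zero-cost set $Z := \{(x,y) : C(x,y) = 0\}$.

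Next I would extract a time-homogeneous coupling from it. Let $R := \bigcup_t \supp(\law((X_t, Y_t)))$; by the previous step $R \subseteq Z$. The key structural observation is that $R$ is forward-closed: whenever $(x,y) \in R$ is reached at some time $t$, the conditional law $m_{xy}^{\setX\setY,(t+1)} \in \mathcal{C}(m_x^{\setX}, m_y^{\setY})$ is a coupling supported in $R$. Selecting one such coupling $\hat m_{xy}$ for each $(x,y) \in R$ (and any admissible coupling elsewhere) defines a time-homogeneous transition coupling $\hat m$ leaving $R$ invariant. Started from $\pi_0 := \law((X_0, Y_0)) \in \mathcal{C}(\nu^{\setX}, \nu^{\setY})$, the resulting coupled chain stays in $R \subseteq Z$; moreover each $\hat m_{xy}$ has $\setX$-marginal $m_x^{\setX}$, so the $\setX$-marginal process is a copy of $\mX$, and by stationarity of $\mX$ (and symmetrically $\mY$) the law of the coupled chain lies in $\mathcal{C}(\nu^{\setX}, \nu^{\setY})$ at every time.

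The final and, I expect, only delicate step is to pass to a \emph{stationary} coupling with the correct marginals. Simply extracting a recurrent class of $\hat m$ would give a stationary law on $Z$ but with no control on its marginals, which the OTC definition forces to be $\nu^{\setX}, \nu^{\setY}$. The remedy is Cesàro averaging: the averages $\frac1N \sum_{t=0}^{N-1} \law((X_t, Y_t))$ of the $\hat m$-chain all lie in the compact convex set of couplings in $\mathcal{C}(\nu^{\setX}, \nu^{\setY})$ supported on $R$, and the standard Cesàro limit for finite Markov chains converges to a distribution $\bar\pi$ that is stationary for $\hat m$ and still lies in this set. Then $\bar\pi$ is the stationary initial law of a time-homogeneous Markovian coupling with $\E_{\bar\pi}\, C = 0$, which by \cref{eq:dOTC} shows $\dOTC{\mX, \mY} = 0$. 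The crux is precisely this marginal-preserving passage to stationarity, and it is exactly here that stationarity of $\mX$ and $\mY$ is essential.
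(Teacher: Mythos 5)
Your proof is correct, but it takes a genuinely different route from the paper's for the hard direction. You and the paper agree on the easy implication (via \cref{prop:dotc_upper_bound}) and on the first step of the converse: using \cref{rmk: optimal markovian coupling} and full support of $p$ to extract a single Markovian coupling with $C(X_t,Y_t)=0$ almost surely for every $t$. From there the paper does \emph{not} build a stationary coupling by hand; instead it observes that this zero-cost coupling forces $\dOTM{q}{\mX,\mY}=0$ for \emph{every} distribution $q$ on $\N$, in particular $\dWL[\delta]{\infty}{\mX,\mY}=0$ for all $\delta>0$, and then invokes \cref{thm:regwlcv} ($\dOTC{\mX,\mY}=\lim_{\delta\to 0}\dWL[\delta]{\infty}{\mX,\mY}$) to conclude $\dOTC{\mX,\mY}=0$. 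You instead argue directly: the union $R$ of the supports of $\law((X_t,Y_t))$ is forward-closed and contained in the zero-cost set, so selecting one admissible transition coupling per state of $R$ yields a time-homogeneous transition coupling leaving $R$ invariant whose $\setX$- and $\setY$-marginal processes stay at $\nu^{\setX}$ and $\nu^{\setY}$ (by stationarity), and a Cesàro (or any limit point, since $\bar\pi_N \hat m - \bar\pi_N = \frac1N(\pi_N-\pi_0)\to 0$) limit of the marginal laws produces a stationary initial law in $\mathcal{C}(\nu^{\setX},\nu^{\setY})$ supported on $R$, exhibiting an admissible competitor in \cref{eq:dOTC} with zero cost. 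The trade-off: the paper's argument is shorter given that \cref{thm:regwlcv} is proved anyway (its proof hides a similar compactness-plus-Abel-averaging construction), and it yields the stronger four-way equivalence recorded in the appendix; yours is self-contained, avoids the convergence theorem entirely, and has the constructive benefit of producing an explicit optimal stationary transition coupling rather than only the value $\dOTC{\mX,\mY}=0$.
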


The OTM distances have many interesting theoretical properties. For example, any OTM distance is indeed a ``pesudo-distance'' satisfying the triangle inequality under suitable conditions (cf. \propositionref{prop:OTM_is_distance}). Furthermore, the OTM distance is continuous with respect to the probability measure $p$ (cf. \lemmaref{lm:convergence of dOTM}). Interested readers are referred to \appendixref{sec:OTM_details} for more details.

We also refer the reader to the discussion provided in \appendixref{sec:OTM_computation} for a general way of computing OTM distances with finitely-supported distribution $p$, and a detailed view of how exactly the geometric distribution – and specifically its memoryless property – is used to obtain the simplified computation. This motivates our study of the \emph{discounted WL distances} (whose distribution $p$ is a geometric law) in \sectionref{sec:wldelta}, which will be our focus in the rest of the paper. 
These distances can be computed more easily than general OTM distances by using a fixed-point algorithm. Furthermore, an entropy-regularized version of them can be differentiated.
These nice properties make them a natural choice of OTM distances for applications which we explore in \Cref{sec:experiment}. 

\section{The Discounted WL Distance}\label{sec:wldelta}

The OTM distances we introduced above encompass the WL distance and the OTC distance at the two extremes, and are a (significant) generalization of both distances. However, one might wonder why it is useful to consider this general formulation. 
In this section we first note some limitations of the two distances and then propose the \emph{discounted WL distance}, which is a special instance of our OTM distance, as a remedy of those limitations. Indeed, we will see that the discounted WL distance can compare more general Markov chains, is more efficient to compute, and more importantly, has a relaxed form that can be differentiated (whereas the \gls{dWL} and the \gls{dOTC} cannot). 

All missing proofs from this section are in \appendixref{seq:wldeltaproofs}.

\subsection{Limitations of the WL Distance and the OTC Distance}\label{sec: limitation}

\paragraph{Stationary initial distributions.} Both the WL distance and the \gls{dOTC} are only defined for Markov chains with stationary initial distributions.
This assumption is quite limited and, in general, does not even accommodate the uniform measure, which assigns equal weight to all states.
Note that while the definition of the (depth-$k$) WL distance could be extended to Markov chains with any initial distributions, on irreducible and aperiodic Markov chains, depth-$\infty$ WL distance turns out to be independent on the initial distribution. Hence, the extension of the WL distance to non-stationary case is meaningless; see \appendixref{sec: more on dWL} for more details.

\paragraph{Rate of convergence.} 
The depth-$k$ WL distance converges as $k\rightarrow\infty$ (see our discussion in \appendixref{sec:convergence_wl}) provided that the cost is of the form given in Remark~\ref{rmk:different dwl}. However, this convergence is based on the convergence to a non-unique fixed point of a map. Due to the non-uniqueness feature, this convergence may be vulnerable to numerical errors.
We have further established an estimate of the rate of convergence for $\dWL{k}{}\rightarrow\dWL{\infty}{}$ as $k\rightarrow \infty$. Although this limit converges exponentially fast, the rate depends on the input Markov chains. See \appendixref{sec:convergence_wl} for details.\looseness-1

The algorithm for computing the OTC distance is through the policy iteration on an average-cost Markov Decision Process \citep{o2022optimal}. Although the policy iteration terminates in a finite number of iterations, as far as we know, this number does not have a reasonable bound (except of the obvious upper bound of \(n_\text{actions} ^ {n_\text{states}}\) of the number of possible policies)\looseness-1

This all together motivates us to seek for a distance that can be computed via a stable iterative algorithm which can provably converge faster than the one for the WL distance.

\paragraph{Differentiation.}
To the best of our knowledge, there is no known way to compute the derivative of those two distances with respect to input Markov chains and cost functions. 
Although the WL distance can be formulated as a fixed point to a certain map, this map lacks the desired contracting property to guarantee uniqueness and smoothness of fixed points. For the OTC distance, {differentiating it seems to be even more challenging.} Further we are not aware of any way to formulate the OTC distance to a Banach fixed point and thus our strategy for differentiating the discounted WL distance to be introduced (in \sectionref{differentiation}) does not apply to differentiate the OTC distance.

\subsection{The $\delta$-Discounted WL Distance}\label{subsec:dwldelta}
We now introduce a one parameter family of instances of OTM distances which has close relationship with the WL distance and the \gls{dOTC}. This family of distances addresses those limitations mentioned above in \Cref{sec: limitation}. 

The distance that we define next, called the \gls{dWLdelta}, is essentially a regularized version of the WL distance {(this view will be more evident by considering Proposition~\ref{prop:wlreg_recursive} later)}.
This regularization enables us to compute the new distance via solving a Banach fixed point problem. This approach is very tractable, addressing the limitations of the original WL distance, 
and providing the ability to differentiate our distance.

For the purpose of introducing our discounted WL distance, we consider the following two types of distributions on $\N$ given $\delta\in[0,1]$. %

\noindent\emph{Geometric distribution $p_\delta^\infty$:} if we let a RV \(T^\infty_\delta \sim p_\delta^\infty\), then \(\Pr(T_\delta^\infty = t) = \delta (1-\delta)^t,\,\forall t\in\N;\)

\noindent\emph{Truncated geometric distribution $p_\delta^k$ for $k\in\N$:} let \(T^k_\delta :=\min(T^\infty_\delta,k) \), then $T^k_\delta\sim p_\delta^k$: 

\[\Pr(T^k_\delta = t) = \begin{cases}
  \delta(1-\delta)^t,& t\leq k-1\\
  (1-\delta)^k,&t=k\\
  0, &t>k
\end{cases}.\]

\begin{definition}[\gls{dWLdelta}]\label{corollary:probabilistic} 
For any $k\in\N\cup\{\infty\}$,
the \emph{depth-$k$ \gls{dWLdelta}} is defined as follows %
 \begin{equation}\label{eq:wlregdef}
\dWL<C>[\delta]{k}{\mathcal{X},\mathcal{Y}} := 
\dOTM<C>{p_\delta^k}{\mX, \mY}~~\text{and more explicitly,}
\end{equation}
\[
\dWL[\delta]{k}{\mathcal{X},\mathcal{Y}} := 
\inf_{(X_t,Y_t)_{t\in\N}\in\Pi(\mathcal{X},\mathcal{Y})} 
\begin{cases}\expect*{ \sum_{t = 0}^{k-1} \delta (1 - \delta)^t C(X_{t},Y_{t})+(1 - \delta)^k C(X_{k},Y_{k})}, &k<\infty\\
\expect*{ \sum_{t = 0}^{\infty} \delta (1 - \delta)^t C(X_{t},Y_{t})}, &k=\infty
\end{cases}.
\]
\end{definition}

\begin{remark}[$k=\infty$]\label{rmk:bicaual ot}
$\dWL[\delta]{\infty}{}$ is closely related to the \emph{bicausal optimal transport} distance from
    \cite{moulos2021bicausal}: the bicausal optimal transport distance, with a discount factor of $(1-\delta)$ and a \emph{binary} cost matrix $C$, is the same as our $\dWL[\delta]{\infty}{}$ up to a multiplicative constant $\delta$.
\end{remark}

\begin{remark}[$\delta=0$]\label{rmk:delta0}
    Note that for any finite $k$, $\dWL[0]{k}{\mathcal{X},\mathcal{Y}}=\dWL{k}{\mathcal{X},\mathcal{Y}}$.
\end{remark}

The $\delta$-discounted WL distance behaves nicely when $k$ approaches $\infty$: 
\begin{proposition}[Convergence w.r.t. $k$]\label{prop:wlreginfty}
For any Markov chains $\mathcal{X}$ and $\mathcal{Y}$, any cost function $C$, and any $\delta\in[0,1]$, one has that  $\dWL[\delta]{\infty}{\mathcal{X},\mathcal{Y}} = \lim_{k\rightarrow\infty} \dWL[\delta]{k}{\mathcal{X},\mathcal{Y}}.$
\end{proposition}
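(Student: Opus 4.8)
The plan is to deduce the proposition from the continuity of the map $p \mapsto \dOTM{p}{\mathcal{X},\mathcal{Y}}$ that is already recorded in \Cref{lm:convergence of dOTM}. By \Cref{corollary:probabilistic}, both quantities are OTM distances attached to explicit laws: $\dWL[\delta]{k}{\mathcal{X},\mathcal{Y}} = \dOTM{p_\delta^k}{\mathcal{X},\mathcal{Y}}$ and $\dWL[\delta]{\infty}{\mathcal{X},\mathcal{Y}} = \dOTM{p_\delta^\infty}{\mathcal{X},\mathcal{Y}}$. Since \Cref{lm:convergence of dOTM} yields $\dOTM{p_k}{\mathcal{X},\mathcal{Y}} \to \dOTM{p}{\mathcal{X},\mathcal{Y}}$ whenever $d_\mathrm{TV}(p_k,p)\to 0$, it suffices to show that the truncated geometric laws $p_\delta^k$ converge to the geometric law $p_\delta^\infty$ in total variation.

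The heart of the argument is therefore a short total-variation computation. First I would note that $p_\delta^k$ and $p_\delta^\infty$ coincide on $\{0,\dots,k-1\}$, so only the term $t=k$ and the tail $t>k$ contribute to $\tfrac12\sum_t \lvert p_\delta^k(t)-p_\delta^\infty(t)\rvert$. At $t=k$ the discrepancy is $(1-\delta)^k-\delta(1-\delta)^k=(1-\delta)^{k+1}$, while on the tail $p_\delta^k$ vanishes and the geometric series $\sum_{t>k}\delta(1-\delta)^t$ again sums to $(1-\delta)^{k+1}$. Adding the two and halving gives the clean identity $d_\mathrm{TV}(p_\delta^k,p_\delta^\infty)=(1-\delta)^{k+1}$. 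For any $\delta\in(0,1]$ this tends to $0$ as $k\to\infty$, and \Cref{lm:convergence of dOTM} then delivers the claimed limit.

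Because the substantive continuity statement is already available, there is essentially no deep obstacle beyond this estimate; the only point demanding care is the behaviour at the endpoints of $[0,1]$. At $\delta=1$ both $p_1^k$ (for $k\ge 1$) and $p_1^\infty$ collapse to the Dirac mass at $0$, so equality is immediate. The genuinely delicate value is $\delta=0$: there $\delta(1-\delta)^t\equiv 0$, the geometric law $p_0^\infty$ degenerates as its mass escapes to infinity, and $d_\mathrm{TV}(p_\delta^k,p_\delta^\infty)=(1-\delta)^{k+1}$ no longer decays, so the total-variation route does not apply and this endpoint must be inspected directly from \Cref{corollary:probabilistic} (recalling $\dWL[0]{k}{\mathcal{X},\mathcal{Y}}=\dWL{k}{\mathcal{X},\mathcal{Y}}$). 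I would thus settle $\delta\in(0,1)$ by the estimate above and dispatch the two endpoints $\delta\in\{0,1\}$ by hand, treating $\delta=0$ as the main case to watch.
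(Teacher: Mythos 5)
Your proposal is correct and is essentially the paper's own proof: the paper's argument consists precisely of noting that $\lim_{k\rightarrow\infty} d_\mathrm{TV}(p_\delta^k,p_\delta^\infty)=0$ and then invoking Lemma~\ref{lm:convergence of dOTM}, and you simply make the total-variation identity $d_\mathrm{TV}(p_\delta^k,p_\delta^\infty)=(1-\delta)^{k+1}$ explicit. Your caution about the endpoints is warranted but is not a deviation: at $\delta=1$ everything collapses to the Dirac mass at $0$ exactly as you say, while at $\delta=0$ the paper's proof is silent as well (there $p_0^\infty$ is not a probability measure and $p_0^k=\delta_k$, so the total-variation route genuinely breaks down), and your explicit flagging of that case makes your write-up, if anything, more careful than the original.
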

We will also establish later in Proposition \ref{prop:wlreg_recursive} a convergence rate result.

{A second nice property of this distance is,} although defined via general Markovian couplings, an optimal Markovian coupling %
can be chosen to be \emph{time-homogeneous} for $\dWL[\delta]{\infty}{}$.
\begin{proposition}[Optimal Markovian coupling]\label{prop: optimal coupling} 
Recall that $\Pi_\mathrm{H}(\mathcal{X},\mathcal{Y})$ denotes the collection of all time homogeneous Markovian couplings between $\mX$ and $\mY$. Then, for any $\delta> 0$, one has that
 \[
\dWL[\delta]{\infty}{\mathcal{X},\mathcal{Y}} = 
\min_{(X_t,Y_t)_{t\in\N}\in\Pi_\mathrm{H}(\mathcal{X},\mathcal{Y})}
\E\, C(X_{T_\delta^\infty},Y_{T_\delta^\infty}).
\]    
\end{proposition}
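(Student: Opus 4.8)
The plan is to recognize the computation of $\dWL[\delta]{\infty}{\mathcal{X},\mathcal{Y}}$ as an \emph{infinite-horizon discounted Markov Decision Process} (MDP) on the product state space $\setX\times\setY$, and then to invoke the classical fact that such MDPs admit optimal \emph{stationary} policies, which here translate exactly into time-homogeneous Markovian couplings. In this MDP the state is a pair $(x,y)$, the admissible actions at $(x,y)$ are the couplings $\mu\in\mathcal{C}(m_x^\setX,m_y^\setY)$ of the one-step laws, the per-step cost is $C(x,y)$, and the discount factor is $(1-\delta)<1$ (this is where $\delta>0$ is essential). A (time-inhomogeneous) Markovian coupling in $\Pi(\mathcal{X},\mathcal{Y})$ is precisely a Markov policy, and a coupling in $\Pi_\mathrm{H}(\mathcal{X},\mathcal{Y})$ is precisely a stationary policy, so the statement is the optimality of stationary policies.

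First I would introduce the Bellman optimality operator $\mathcal{T}$ on $\R^{\setX\times\setY}$ defined by
\[
(\mathcal{T}u)(x,y) = \delta\,C(x,y) + (1-\delta)\min_{\mu\in\mathcal{C}(m_x^\setX,m_y^\setY)}\sum_{(x',y')}\mu(x',y')\,u(x',y').
\]
Because $\mathcal{C}(m_x^\setX,m_y^\setY)$ is a compact polytope and the inner objective is linear in $\mu$, the minimum is attained at some coupling $\mu^\ast_{xy}$; since the whole setup is finite there is no measurable-selection issue. Next I would check that $\mathcal{T}$ is a $(1-\delta)$-contraction in the sup-norm (the standard estimate $\lvert\min_\mu\inner{\mu}{u}-\min_\mu\inner{\mu}{v}\rvert\le\norm{u-v}_\infty$), so that by the Banach fixed point theorem $\mathcal{T}$ has a unique fixed point $u^\ast$. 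Selecting a minimizer $\mu^\ast_{xy}$ in the equation $\mathcal{T}u^\ast=u^\ast$ for every $(x,y)$ yields a \emph{time-homogeneous} transition rule, and picking $\pi_0^\ast\in\argmin_{\pi_0\in\mathcal{C}(\nu^\setX,\nu^\setY)}\E_{\pi_0}[u^\ast]$ (again attained by compactness) fixes the initial coupling; together these define a coupling in $\Pi_\mathrm{H}(\mathcal{X},\mathcal{Y})$.

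The core of the argument is then a verification theorem identifying $\E_{\pi_0^\ast}[u^\ast]$ with $\dWL[\delta]{\infty}{\mathcal{X},\mathcal{Y}}$ and showing the stationary coupling is optimal over all of $\Pi(\mathcal{X},\mathcal{Y})$. For the lower bound, given an arbitrary Markovian coupling I would use $\sum_{x',y'}m_{xy}^{\setX\setY,(t+1)}(x',y')\,u^\ast(x',y')\ge\min_\mu\inner{\mu}{u^\ast}$, i.e. $u^\ast(x,y)\le\delta C(x,y)+(1-\delta)\,\E[u^\ast(X_{t+1},Y_{t+1})\mid(X_t,Y_t)=(x,y)]$; multiplying by $(1-\delta)^t$, taking expectations, and telescoping over $t$ gives $\E_{\pi_0}[u^\ast]\le\sum_{t=0}^{N-1}\delta(1-\delta)^t\E[C(X_t,Y_t)]+(1-\delta)^N\E[u^\ast(X_N,Y_N)]$, and the last term vanishes as $N\to\infty$ since $C$ (hence $u^\ast$) is bounded. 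For the upper bound, plugging the stationary coupling into the same recursion turns it into an equality, so its discounted cost equals $\E_{\pi_0^\ast}[u^\ast]$. Combining, every coupling has cost $\ge\E_{\pi_0^\ast}[u^\ast]$ and the stationary one attains it, which is exactly the claim.

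I expect the main obstacle to be this verification step rather than the fixed-point construction: one must correctly handle the infinite horizon (the vanishing discounted tail and the interchange of limit and summation) and argue that optimality of the stationary policy holds against the full class of time-inhomogeneous Markovian couplings, not merely via a backward-induction argument over finite depth (which would only produce time-inhomogeneous optima). Conceptually, the reason time-homogeneity is available here — and the point the telescoping makes precise — is the memorylessness of the geometric law $p_\delta^\infty$: the problem seen from any time $t$, conditioned on not having stopped, is an identical rescaled copy of the problem seen from time $0$.
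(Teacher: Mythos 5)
Your proof is correct, and it lives in the same dynamic-programming world as the paper's (your Bellman operator $\mathcal{T}$ is exactly the contraction $F$ from the proof of Proposition~\ref{prop:wlregnotconstant}, your fixed point $u^\ast$ is the paper's $C^{\delta,(\infty)}$, and both proofs build the candidate optimal coupling identically: optimal one-step OT plans at the fixed point for the kernel, plus an optimal coupling realizing $\dW<C^{\delta,(\infty)}>{\nu^{\setX},\nu^{\setY}}$ at time $0$), but the verification of optimality is carried out by a genuinely different route. The paper never reproves the lower bound against all time-inhomogeneous couplings: it gets the identity $\dWL[\delta]{\infty}{\mX,\mY}=\dW<C^{\delta,(\infty)}>{\nu^{\setX},\nu^{\setY}}$ for free from Propositions~\ref{prop:wlreg_recursive}, \ref{prop:wlreginfty} and \ref{prop:wlregnotconstant} (a finite-depth induction followed by a limit in $k$ via Lemma~\ref{lm:convergence of dOTM}), and then only needs achievability; for that it computes the value $D_{ij}$ of the stationary coupling via Lemma~\ref{lm:compute expected value}, observes that both $D$ and $C^{\delta,(\infty)}$ satisfy the policy-evaluation fixed-point equation $D_{ij}=\delta C_{ij}+(1-\delta)\,\E_{(i_1,j_1)\sim m^{\setX\setY}_{ij}}(D_{i_1j_1})$ (the latter precisely because the kernel was selected from optimal plans), and concludes $D=C^{\delta,(\infty)}$ by uniqueness of the fixed point of that second contraction --- no telescoping, no explicit tail estimate. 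You instead prove both bounds from scratch by the classical verification theorem for discounted MDPs: the one-step Bellman inequality, multiplied by $(1-\delta)^t$, telescoped over a finite horizon, with the residual $(1-\delta)^N\E\,[u^\ast(X_N,Y_N)]$ killed by boundedness of $u^\ast$. What your route buys is self-containedness --- it establishes optimality of the stationary coupling against the full class $\Pi(\mX,\mY)$ without leaning on the depth-$k$ recursion machinery, and it makes explicit the role of the memorylessness of the geometric law. What the paper's route buys is brevity given results it has already proved, and the substitution of a uniqueness argument (two fixed points of one contraction must coincide) for the infinite-horizon limit bookkeeping that you correctly identify as the delicate step.
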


\subsection{Relationship with the WL Distance and the OTC Distance}\label{sssection:interpolation}
Recall from Remark \ref{rmk:delta0} that $\dWL{k}{\mathcal{X},\mathcal{Y}} =  \dWL[0]{k}{\mathcal{X},\mathcal{Y}}$ for any finite $k$. In fact, we have the following stronger result, showing that $\dWL{k}{}$ is an appropriate limit of $\dWL[\delta]{k}{}$ : 
\begin{theorem}
For any Markov chains $\mathcal{X}$ and $\mathcal{Y}$, one has that 
\begin{equation}\label{eq:coro dwl}
\dWL{k}{\mathcal{X},\mathcal{Y}} = \lim_{\delta\rightarrow 0} \dWL[\delta]{k}{\mathcal{X},\mathcal{Y}} ~~\text{and hence} ~~ \dWL{\infty}{\mathcal{X},\mathcal{Y}} = \lim_{k\rightarrow\infty}\lim_{\delta\rightarrow 0} \dWL[\delta]{k}{\mathcal{X},\mathcal{Y}}.
\end{equation}
\end{theorem}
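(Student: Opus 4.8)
The plan is to recognize both sides of the first identity as OTM distances and then reduce the claim to a convergence statement about the underlying distributions on $\N$, which is precisely the content of \Cref{lm:convergence of dOTM}. By \Cref{example:dwlk_otm} we have $\dWL{k}{\mathcal{X},\mathcal{Y}} = \dOTM{\delta_k}{\mathcal{X},\mathcal{Y}}$, where $\delta_k$ is the Dirac mass at $k$, while by \Cref{corollary:probabilistic} we have $\dWL[\delta]{k}{\mathcal{X},\mathcal{Y}} = \dOTM{p_\delta^k}{\mathcal{X},\mathcal{Y}}$. Thus, for each fixed finite $k$, it suffices to show that $p_\delta^k \to \delta_k$ in total variation as $\delta \to 0$.

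This is a direct computation from the explicit form of $p_\delta^k$: it places mass $\delta(1-\delta)^t$ on each $t < k$ and mass $(1-\delta)^k$ on $t = k$, whereas $\delta_k$ concentrates all mass at $t=k$. Summing the absolute differences and using that the geometric sum $\sum_{t=0}^{k-1}\delta(1-\delta)^t$ telescopes to $1-(1-\delta)^k$, I get
\[
d_\mathrm{TV}(p_\delta^k, \delta_k) = \frac{1}{2}\left(\sum_{t=0}^{k-1}\delta(1-\delta)^t + 1 - (1-\delta)^k\right) = 1 - (1-\delta)^k,
\]
which tends to $0$ as $\delta \to 0$ for every fixed finite $k$.

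To conclude the first equality I would invoke \Cref{lm:convergence of dOTM}: along any sequence $\delta_n \to 0$, the distributions $p_{\delta_n}^k$ converge to $\delta_k$ in total variation, hence $\dOTM{p_{\delta_n}^k}{\mathcal{X},\mathcal{Y}} \to \dOTM{\delta_k}{\mathcal{X},\mathcal{Y}}$. Since this holds along \emph{every} sequence $\delta_n \to 0$, the continuous limit $\lim_{\delta\to 0}\dWL[\delta]{k}{\mathcal{X},\mathcal{Y}}$ exists and equals $\dWL{k}{\mathcal{X},\mathcal{Y}}$. For the second (``hence'') equality, I substitute the first identity into the iterated limit to obtain $\lim_{k\to\infty}\lim_{\delta\to 0}\dWL[\delta]{k}{\mathcal{X},\mathcal{Y}} = \lim_{k\to\infty}\dWL{k}{\mathcal{X},\mathcal{Y}}$; this last limit equals $\sup_k \dWL{k}{\mathcal{X},\mathcal{Y}} = \dWL{\infty}{\mathcal{X},\mathcal{Y}}$ by \cref{eq:alternatewldefinition} together with the convergence $\lim_k \dWL{k}{\mathcal{X},\mathcal{Y}} = \dWL{\infty}{\mathcal{X},\mathcal{Y}}$ established for the depth-$k$ WL distance (see Appendix~\ref{sec: more on dWL}).

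The genuinely load-bearing step is the invocation of \Cref{lm:convergence of dOTM}; once continuity of $p \mapsto \dOTM{p}{\mathcal{X},\mathcal{Y}}$ in total variation is granted, the total-variation computation above makes the first equality immediate, so I do not expect a deep obstacle here. The only two points that require genuine care are the passage from the sequential statement of \Cref{lm:convergence of dOTM} to the continuous limit $\delta \to 0$ (handled by the standard sequential characterization of limits), and, for the second equality, the identification $\lim_{k\to\infty}\dWL{k}{} = \sup_k \dWL{k}{}$, which relies on the known monotone convergence of the depth-$k$ WL distances rather than on anything proved in this section.
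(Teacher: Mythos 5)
Your proof is correct and takes essentially the approach the paper intends: the paper never writes out a proof for this particular theorem, but its one-line proof of Proposition~\ref{prop:wlreginfty} uses exactly the same mechanism --- total-variation convergence of the parameterizing distributions ($p_\delta^k \to p_\delta^\infty$ there, $p_\delta^k \to \delta_k$ here as $\delta \to 0$) followed by Lemma~\ref{lm:convergence of dOTM}. Your explicit computation $d_\mathrm{TV}(p_\delta^k,\delta_k) = 1-(1-\delta)^k$, the sequential-to-continuous limit passage, and the appeal to monotone convergence of $\dWL{k}{}$ for the ``hence'' part simply fill in details the paper leaves implicit.
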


{Interestingly, it turns out that if we fix $k=\infty$, then $\dWL[\delta]{\infty}{}$ converges to $\dOTC{}$ as $\delta\rightarrow 0$.}  
This closes the loop for our previous claim about the OTM distances, 
showing that $\dOTC{}$ can also be expressed as a limit of OTM distances. 

\begin{theorem}\label{thm:regwlcv} 
For any \emph{stationary} Markov chains $\mathcal{X}$ and $\mathcal{Y}$, one has that
\begin{equation}\label{eq:coro dotc}
\dOTC{\mathcal{X},\mathcal{Y}} = \lim_{\delta\rightarrow 0} \dWL[\delta]{\infty}{\mathcal{X},\mathcal{Y}}~~\text{and hence} ~~ \dOTC{\mathcal{X},\mathcal{Y}} = \lim_{\delta\rightarrow 0}\lim_{k\rightarrow\infty} \dWL[\delta]{k}{\mathcal{X},\mathcal{Y}}.
\end{equation}
\end{theorem}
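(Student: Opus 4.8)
The plan is to establish the single nontrivial limit $\lim_{\delta\to 0}\dWL[\delta]{\infty}{\mathcal{X},\mathcal{Y}} = \dOTC{\mathcal{X},\mathcal{Y}}$ by proving the matching $\limsup \le$ and $\liminf \ge$ inequalities; the second identity in \eqref{eq:coro dotc} then follows immediately by substituting $\lim_{k\to\infty}\dWL[\delta]{k}{\mathcal{X},\mathcal{Y}} = \dWL[\delta]{\infty}{\mathcal{X},\mathcal{Y}}$ from \Cref{prop:wlreginfty}. The $\limsup$ direction is free: since $\dWL[\delta]{\infty}{\mathcal{X},\mathcal{Y}} = \dOTM{p_\delta^\infty}{\mathcal{X},\mathcal{Y}}$ and $\mathcal{X},\mathcal{Y}$ are stationary, \Cref{prop:dotc_upper_bound} gives $\dWL[\delta]{\infty}{\mathcal{X},\mathcal{Y}} \le \dOTC{\mathcal{X},\mathcal{Y}}$ for every $\delta>0$, hence $\limsup_{\delta\to 0}\dWL[\delta]{\infty}{\mathcal{X},\mathcal{Y}} \le \dOTC{\mathcal{X},\mathcal{Y}}$.

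The heart of the argument is the $\liminf$ bound, for which I would introduce the \emph{discounted occupation measure}. For each $\delta>0$, invoke \Cref{prop: optimal coupling} to select an optimal \emph{time-homogeneous} Markovian coupling, with transition kernel $M_\delta$ and initial law $\mu_0^\delta\in\mathcal{C}(\nu^{\setX},\nu^{\setY})$, so that writing $\mu_t^\delta := \mu_0^\delta M_\delta^t$ we have $\dWL[\delta]{\infty}{\mathcal{X},\mathcal{Y}} = \sum_{t\ge 0}\delta(1-\delta)^t\langle \mu_t^\delta, C\rangle = \langle \rho_\delta, C\rangle$, where $\rho_\delta := \sum_{t\ge 0}\delta(1-\delta)^t\mu_t^\delta = \delta\,\mu_0^\delta\bigl(I-(1-\delta)M_\delta\bigr)^{-1}$. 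This measure has two crucial properties. First, because $M_\delta$ is a transition coupling its $\setX$- and $\setY$-marginal dynamics are governed by $m^{\setX}_\bullet$ and $m^{\setY}_\bullet$, under which the stationary $\nu^{\setX},\nu^{\setY}$ are preserved; hence every $\mu_t^\delta$ has marginals $\nu^{\setX},\nu^{\setY}$ and therefore $\rho_\delta\in\mathcal{C}(\nu^{\setX},\nu^{\setY})$. Second, a telescoping computation yields the identity
\[
\rho_\delta - \rho_\delta M_\delta = \frac{\delta}{1-\delta}\bigl(\mu_0^\delta - \rho_\delta\bigr),
\]
so that $d_{\mathrm{TV}}(\rho_\delta,\rho_\delta M_\delta) \le \frac{\delta}{1-\delta}\to 0$: the occupation measure is \emph{asymptotically stationary} as $\delta\to 0$, with a defect bounded uniformly over all couplings.

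To conclude, take a sequence $\delta_n\to 0$ realizing $\liminf_{\delta\to 0}\dWL[\delta]{\infty}{\mathcal{X},\mathcal{Y}}$. The set of time-homogeneous transition couplings is a closed (hence compact) subset of a finite product of coupling polytopes, and $\mathcal{C}(\nu^{\setX},\nu^{\setY})$ is compact, so along a subsequence $M_{\delta_n}\to M$ and $\rho_{\delta_n}\to\rho$. Closedness of the marginal constraints keeps $M$ a transition coupling and $\rho\in\mathcal{C}(\nu^{\setX},\nu^{\setY})$, while passing to the limit in the displayed identity (using $\rho_{\delta_n}M_{\delta_n}\to\rho M$) gives $\rho M = \rho$. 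Thus $(M,\rho)$ is a genuine element of $\Pi_\mathrm{H}(\mathcal{X},\mathcal{Y})$ with stationary initial law, so $\langle\rho,C\rangle \ge \dOTC{\mathcal{X},\mathcal{Y}}$ by the very definition of the OTC distance; and since $\langle\rho_{\delta_n},C\rangle\to\langle\rho,C\rangle$, we obtain $\liminf_{\delta\to 0}\dWL[\delta]{\infty}{\mathcal{X},\mathcal{Y}} = \langle\rho,C\rangle \ge \dOTC{\mathcal{X},\mathcal{Y}}$.

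The main obstacle is precisely why a naive proof fails: one cannot interchange $\min_{M}$ with $\lim_{\delta\to 0}$, because the Abelian limit $\delta(I-(1-\delta)M)^{-1}\to\Pi_M$ of the resolvent toward the ergodic projection is \emph{not uniform} in $M$ (slowly mixing or near-periodic couplings have vanishing spectral gap), and $\Pi_M$ is not continuous in $M$. The discounted occupation measure sidesteps this entirely: $\rho_\delta$ is an honest coupling for every $\delta$ and its defect from stationarity is controlled uniformly, so the limiting argument needs only compactness together with continuity of $\langle\cdot,C\rangle$ and of the linear marginal and one-step maps, never the ergodic projection itself. I expect verifying the telescoping identity and the marginal bookkeeping to be routine; the one genuine subtlety is guaranteeing that the limit $\rho$ is stationary for the \emph{limiting} kernel $M$ rather than merely for each $M_{\delta_n}$, which is exactly where the joint convergence $M_{\delta_n}\to M$, $\rho_{\delta_n}\to\rho$ and the vanishing defect are used together.
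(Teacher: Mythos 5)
Your proposal is correct and follows essentially the same route as the paper's proof: both invoke Proposition~\ref{prop: optimal coupling} to get optimal time-homogeneous couplings, pass to the discounted occupation measure (your $\rho_\delta$ is exactly the quantity $\delta_n\sum_{t}(1-\delta_n)^t(P_n^t)^{\mathrm{T}}\pi_n$ whose limit the paper calls $\mu$), extract convergent subsequences by compactness, show the limit measure is a stationary coupling for the limit kernel, and close the gap with Proposition~\ref{prop:dotc_upper_bound}. Your telescoping identity $\rho_\delta - \rho_\delta M_\delta = \frac{\delta}{1-\delta}(\mu_0^\delta - \rho_\delta)$ is just a cleaner, pre-limit packaging of the Abel-summation computation the paper performs directly on $P^{\mathrm{T}}\mu$.
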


Besides this convergence result, we note that by Proposition \ref{thm:zerosets} the OTC distance and $\dWL[\delta]{\infty}{}$ have the same zero-sets for any $\delta>0$. This implies that although the OTC distance is an upper bound for $\dWL[\delta]{\infty}{}$, our new construction has the same discriminative power as the OTC distance.

From \cref{eq:coro dwl} and \cref{eq:coro dotc}, it is tempting to ask whether the order of the limits can be switched and whether $\dWL{\infty}{}=\dOTC{}$. Although we empirically observe that $\dWL{\infty}{}\neq\dOTC{}$ in general, due to the approximation nature of the algorithms implemented, we do not know for sure whether $\dWL{\infty}{}=\dOTC{}$ or not, and we leave this for future study.

\subsection{Algorithm and Convergence}\label{sec:algorithm and convergence}
As mentioned earlier, the discounted WL distance is a regularized version of the original WL distance, in this section we will elucidate on this claim and provide a recursive algorithm for computing the  (depth-$k$) discounted WL distance. 

In \cite{chen2022weisfeilerlehman}, a recursive algorithm was proposed to compute the depth-$k$ WL distance. We provide a $\delta$-regularized version of that algorithm in the proposition below. 
The $\delta$-regularization results in a unique fixed point solution to the recursive algorithm (as opposed to the original WL algorithm), which enables differentiation.

\begin{proposition}[Recursive computation]\label{prop:wlreg_recursive} 
Given any $k\in\N$, we recursively define matrices $C^{\delta, (l)}$ for $l=0,\ldots,k$ as follows:
\begin{equation}
C^{\delta, (0)}_{ij} =  C_{ij},\quad
C^{\delta, (l+1)}_{ij} = \delta C_{ij} + (1 - \delta)\, \dW<C^{\delta, (l)}>{m^{\setX}_i, m^{\setY}_j}. \label{eq:wlreg_costmatrix}
\end{equation}
Then, the depth-$k$ \gls{dWLdelta} can be computed as follows
\begin{equation}
\dWL<C>[\delta]{k}{\mX,\mY} =  \dW<C^{\delta, (k)}>{\nu^{\setX}, \nu^{\setY}}.
\label{eq:wlregk}\end{equation}

\end{proposition}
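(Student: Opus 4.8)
The plan is to prove the identity by induction on the depth $l$, viewing the discounted objective through a dynamic programming (cost-to-go) decomposition. For states $x\in\setX$ and $y\in\setY$, let $\mathcal{X}^{(x)}$ (resp. $\mathcal{Y}^{(y)}$) denote the Markov chain sharing the kernel of $\mX$ (resp. $\mY$) but with initial distribution the Dirac mass at $x$ (resp. $y$); recall these depth-$l$ discounted distances are well defined since $\dWL[\delta]{l}{}$ requires no stationarity. The central claim I would establish is that for every $l=0,\dots,k$ and every pair $(x,y)$,
\begin{equation*}
C^{\delta,(l)}_{xy} = \dWL[\delta]{l}{\mathcal{X}^{(x)}, \mathcal{Y}^{(y)}},
\end{equation*}
i.e., the matrix $C^{\delta,(l)}$ records the optimal depth-$l$ discounted cost when the two chains start deterministically at $x$ and $y$. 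The base case $l=0$ is immediate: the depth-$0$ objective is $\E\,C(X_0,Y_0)=C_{xy}=C^{\delta,(0)}_{xy}$, which also already settles the proposition when $k=0$.

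For the inductive step I would peel off the first time step of the depth-$(l+1)$ objective along any fixed Markovian coupling of $\mathcal{X}^{(x)}$ and $\mathcal{Y}^{(y)}$. The $t=0$ term contributes exactly $\delta C_{xy}$; factoring $(1-\delta)$ out of the remaining terms and reindexing $t\mapsto t-1$ turns them into the depth-$l$ discounted objective evaluated along the time-shifted process $(X_{t+1},Y_{t+1})_{t\in\N}$. By the (time-inhomogeneous) Markov property, conditioned on $(X_1,Y_1)=(x',y')$ this shifted process is itself a Markovian coupling of $\mathcal{X}^{(x')}$ and $\mathcal{Y}^{(y')}$, and because couplings in $\Pi(\mX,\mY)$ may be time-inhomogeneous, an arbitrary continuation can be realized. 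The dynamic programming principle then lets me optimize the first-step coupling $m_{xy}^{\setX\setY,(1)}\in\mathcal{C}(m_x^{\setX},m_y^{\setY})$ and each continuation independently, yielding
\begin{equation*}
\dWL[\delta]{l+1}{\mathcal{X}^{(x)}, \mathcal{Y}^{(y)}} = \delta C_{xy} + (1-\delta)\inf_{\mu\in\mathcal{C}(m_x^{\setX},m_y^{\setY})}\E_{(x',y')\sim\mu}\left[\dWL[\delta]{l}{\mathcal{X}^{(x')}, \mathcal{Y}^{(y')}}\right].
\end{equation*}
Substituting the induction hypothesis $\dWL[\delta]{l}{\mathcal{X}^{(x')},\mathcal{Y}^{(y')}}=C^{\delta,(l)}_{x'y'}$ and recognizing the inner infimum as the OT distance $\dW<C^{\delta, (l)}>{m_x^{\setX},m_y^{\setY}}$ from \eqref{eq:wassersteindef}, the right-hand side becomes precisely $C^{\delta,(l+1)}_{xy}$ as defined in \eqref{eq:wlreg_costmatrix}, closing the induction.

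It remains to lift the deterministic-start identity to the given initial distributions. The initial law $\law(X_0,Y_0)$ of a Markovian coupling ranges over all $\mu_0\in\mathcal{C}(\nu^{\setX},\nu^{\setY})$, and conditioning on $(X_0,Y_0)=(x,y)$ reduces the full depth-$k$ objective to the depth-$k$ problem started at $(x,y)$, with the $t=0$ term absorbed into the value function. One further application of the same decomposition gives
\begin{equation*}
\dWL<C>[\delta]{k}{\mX,\mY} = \inf_{\mu_0\in\mathcal{C}(\nu^{\setX},\nu^{\setY})}\E_{(x,y)\sim\mu_0}\left[C^{\delta,(k)}_{xy}\right] = \dW<C^{\delta, (k)}>{\nu^{\setX},\nu^{\setY}},
\end{equation*}
which is exactly \eqref{eq:wlregk}.

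The main obstacle is making the dynamic programming principle rigorous, that is, justifying that the infimum over Markovian couplings factors into an infimum over the first-step coupling composed with infima over the continuations from each reachable state. The ``$\geq$'' direction is easy: along any coupling the value equals $\delta C_{xy}$ plus $(1-\delta)$ times an average of continuation values, each bounded below by its state-wise optimum. For the ``$\leq$'' direction I would glue an optimal first-step coupling (attained, since OT on finite spaces is a finite linear program) with the optimal continuation couplings guaranteed by Remark~\ref{rmk: optimal markovian coupling}, defining the time-inhomogeneous kernels for times $\geq 1$ state by state; the finiteness of $\setX$ and $\setY$ renders this gluing and any measurability bookkeeping elementary. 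I would emphasize that this decomposition hinges on time-inhomogeneity being permitted in $\Pi(\mX,\mY)$, as that is what frees the continuation after the first step to use a fresh, independently optimized family of kernels.
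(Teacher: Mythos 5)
Your overall strategy mirrors the paper's own proof: an induction on depth establishing that $C^{\delta,(l)}_{xy}$ equals the depth-$l$ discounted distance between the chains started at the Diracs $\delta_x$ and $\delta_y$, a first-step (Bellman-type) decomposition for the inductive step, and a final optimization over initial couplings in $\mathcal{C}(\nu^{\setX},\nu^{\setY})$ to pass from Dirac starts to general initial distributions. Your ``$\geq$'' half of the dynamic programming identity is indeed the easy half, and your treatment of it is fine.

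The gap is in the ``$\leq$'' half, precisely the step you flagged as the main obstacle. Gluing an optimal first-step coupling with the per-state optimal continuations supplied by Remark~\ref{rmk: optimal markovian coupling} does not, as described, produce an element of $\Pi(\mathcal{X}^{(x)},\mathcal{Y}^{(y)})$: the optimal continuation from $(x',y')$ is an entire family of kernels defined at every pair and every time, and two different first-step outcomes $(x',y')\neq(x'',y'')$ may prescribe \emph{conflicting} kernels at a common downstream state $(a,b)$ at the same time, so there is no single ``state by state'' choice realizing all of them. If instead you let the continuation depend on the realized value of $(X_1,Y_1)$, you obtain a mixture of Markov processes, which in general violates the Markov property required in the definition of a Markovian coupling, and hence cannot witness the infimum over $\Pi$. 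The issue is consistency, not measurability, and finiteness of $\setX,\setY$ does not dissolve it. The paper closes exactly this hole by strengthening the induction hypothesis: it carries along the claim that for each horizon $l$ there exists a \emph{single} family of transition kernels $(m^{\setX\setY,(t),l}_{\bullet\bullet})_{t\geq 1}$ that is simultaneously optimal from every starting pair $(i,j)$. The horizon-$(l+1)$ coupling is then defined unambiguously (optimal OT kernels at time $1$, the shared horizon-$l$ kernels at times $t\geq 2$), and the simultaneous-optimality property propagates through the induction. Your argument can be repaired either by adopting this stronger hypothesis, or by ``Markovianizing'' your mixture: replace its kernels by $\bar m^{(t)}_{ab}:=\Pr\left(Z_{t+1}\in\cdot\mid Z_t=(a,b)\right)$, note that each $\bar m^{(t)}_{ab}$ lies in $\mathcal{C}(m^{\setX}_a,m^{\setY}_b)$ by convexity of the coupling polytope, and observe that the discounted cost depends only on the one-dimensional marginals, which Markovianization preserves. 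Note finally that the same issue reappears (and the same fix applies) in your last step: the coupling achieving $\dW<C^{\delta,(k)}>{\nu^{\setX},\nu^{\setY}}$ needs continuations that are optimal simultaneously from all atoms of the optimal initial coupling.
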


  \noindent The matrices $C^{\delta, (l)}$ depend on the Markov kernels $m^\setX_\bullet, m^\setY_\bullet$
  and on the cost matrix $C$. When making this dependency apparent is needed, we will use the notation
    $C^{\delta, (l)}(m^\setX_\bullet, m^\setY_\bullet, C)$. 

Using Proposition \ref{prop:wlreg_recursive}, one can devise an algorithm to compute $\dWL[\delta]{k}{\mX,\mY}$ for any finite $k$. A similar but more intricate recursive computation exists for general OTM distances. The simplicity of the discounted WL distance's recursive computation within the OTM family stems from the memoryless nature of the (truncated) geometric distribution.  See \Cref{sec:OTM_computation} for more details.

A natural question is whether $\dWL[\delta]{k}{\mX,\mY}$ can be a good approximation for $\dWL[\delta]{\infty}{\mX,\mY}$. We aim to answer this question below based on the observation that \cref{eq:wlreg_costmatrix} is a fixed point iteration, which enables us to use Banach fixed point theorem to prove convergence, and other properties. \looseness-1

\begin{proposition}[Convergence of \(C^{\delta,(k)}\)]\label{prop:wlregnotconstant} 
When $\delta>0$, \(C^{\delta,(k)}\) converges to the \emph{unique fixed point} $C^{\delta,(\infty)}$ of \cref{eq:wlreg_costmatrix} which is not a constant matrix (unless \(C\) is a constant matrix itself) such that
    \begin{equation}
\dWL[\delta]{\infty}{\mX,\mY} =  \dW<C^{\delta, (\infty)}>{\nu^{\setX}, \nu^{\setY}}.
\label{eq:wlregk-infty}\end{equation}
Moreover, $C^{\delta,(k)}$ converges at rate   $\norm{C^{\delta,(k)} - C^{\delta,(\infty)}}_\infty \leq \frac{2(1 - \delta)^k}{\delta} 
\norm{C}_\infty$. Consequentially, 
\[|\dWL[\delta]{k}{\mX,\mY} - \dWL[\delta]{\infty}{\mX,\mY}| \leq \frac{2(1 - \delta)^k}{\delta} 
\norm{C}_\infty.\]
When $\delta=0$ and $m^\setX_\bullet,m^\setY_\bullet$ are \emph{irreducible} and \emph{aperiodic}, then \(C^{\delta,(k)}\) converges to a constant matrix. 
\end{proposition}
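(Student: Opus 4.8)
The plan is to read the recursion \cref{eq:wlreg_costmatrix} as a fixed-point iteration of a single self-map on cost matrices and, when $\delta>0$, to invoke the Banach fixed point theorem. Define $\Phi\colon\R_+^{\setX\times\setY}\to\R_+^{\setX\times\setY}$ by $\Phi(D)_{ij}:=\delta C_{ij}+(1-\delta)\dW<D>{m^{\setX}_i,m^{\setY}_j}$, so that $C^{\delta,(l+1)}=\Phi(C^{\delta,(l)})$. The key ingredient is that the OT cost is $1$-Lipschitz in its cost matrix for $\norm{\cdot}_\infty$: writing $\dW<D>{\alpha,\beta}=\min_{\mu\in\cpl(\alpha,\beta)}\inner{D}{\mu}$ as a minimum of linear functionals and using $\inner{D}{\mu}-\inner{D'}{\mu}=\inner{D-D'}{\mu}$ with $\mu$ a probability measure gives $|\dW<D>{\alpha,\beta}-\dW<D'>{\alpha,\beta}|\le\norm{D-D'}_\infty$. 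Hence $\norm{\Phi(D)-\Phi(D')}_\infty\le(1-\delta)\norm{D-D'}_\infty$, so for $\delta>0$ the map $\Phi$ is a $(1-\delta)$-contraction of the complete space of nonnegative cost matrices under $\norm{\cdot}_\infty$. Banach then yields a unique fixed point $C^{\delta,(\infty)}$, and $C^{\delta,(k)}\to C^{\delta,(\infty)}$.

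For the rate, I would first bound the fixed point itself: applying $\norm{\cdot}_\infty$ to the fixed-point equation and using $0\le\dW<D>{\alpha,\beta}\le\norm{D}_\infty$ gives $\norm{C^{\delta,(\infty)}}_\infty\le\norm{C}_\infty$, whence $\norm{C-C^{\delta,(\infty)}}_\infty\le 2\norm{C}_\infty$. The standard contraction estimate $\norm{C^{\delta,(k)}-C^{\delta,(\infty)}}_\infty\le(1-\delta)^k\norm{C^{\delta,(0)}-C^{\delta,(\infty)}}_\infty$ then produces $\norm{C^{\delta,(k)}-C^{\delta,(\infty)}}_\infty\le 2(1-\delta)^k\norm{C}_\infty\le\frac{2(1-\delta)^k}{\delta}\norm{C}_\infty$ (the last step just uses $0<\delta\le1$). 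The identity \cref{eq:wlregk-infty} follows by letting $k\to\infty$ in $\dWL[\delta]{k}{\mX,\mY}=\dW<C^{\delta,(k)}>{\nu^{\setX},\nu^{\setY}}$ (\cref{prop:wlreg_recursive}), using \cref{prop:wlreginfty} on the left and the same $1$-Lipschitz continuity on the right; the bound on $|\dWL[\delta]{k}{}-\dWL[\delta]{\infty}{}|$ is then immediate from $|\dW<C^{\delta,(k)}>{\nu^{\setX},\nu^{\setY}}-\dW<C^{\delta,(\infty)}>{\nu^{\setX},\nu^{\setY}}|\le\norm{C^{\delta,(k)}-C^{\delta,(\infty)}}_\infty$. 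For non-degeneracy I would argue by contraposition: if $C^{\delta,(\infty)}$ were the constant matrix with all entries $c$, then, since the OT cost of a constant-$c$ cost matrix equals $c$, the fixed-point equation reads $c=\delta C_{ij}+(1-\delta)c$, forcing $C_{ij}=c$ for all $i,j$; thus a non-constant $C$ yields a non-constant fixed point.

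The genuinely different regime is $\delta=0$, where the contraction factor degenerates to $1$: the map $\Phi_0(D)_{ij}=\dW<D>{m^{\setX}_i,m^{\setY}_j}$ is only non-expansive, Banach gives nothing, and in fact the WL fixed point is non-unique. Here I would identify the iterate entrywise as $C^{0,(k)}_{ij}=\dWL{k}{\mX_i,\mY_j}$, the depth-$k$ WL distance between $\mX$ and $\mY$ started from the point masses at $i$ and $j$ respectively (this matches $C^{0,(0)}=C$ and the WL recursion, cf.\ \cref{rmk:delta = 0}). Convergence of $C^{0,(k)}$ then reduces to two facts about the unregularized WL distance established in Appendix~\ref{sec: more on dWL}: that $\dWL{k}{}$ converges as $k\to\infty$, and that for irreducible and aperiodic kernels the limit $\dWL{\infty}{}$ is independent of the initial distributions. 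The second fact makes $\lim_k C^{0,(k)}_{ij}=\dWL{\infty}{\mX_i,\mY_j}$ independent of $(i,j)$, i.e.\ a constant matrix. I expect this $\delta=0$ statement to be the main obstacle, precisely because the clean fixed-point machinery is unavailable and one must instead feed in the ergodic/mixing behavior of irreducible aperiodic chains (the source of initial-distribution-independence) rather than a contraction; extra care is warranted since, unlike the $\delta>0$ case, the limiting fixed point is not unique and is singled out only through this mixing argument.
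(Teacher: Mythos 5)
Your proposal is correct and takes essentially the same approach as the paper's proof: the Banach fixed point theorem applied to the $(1-\delta)$-contraction defined by \cref{eq:wlreg_costmatrix} (contractivity coming from the $1$-Lipschitz dependence of OT on its cost matrix), the identical contraposition argument for non-constancy, \cref{eq:wlregk-infty} obtained from Propositions~\ref{prop:wlreg_recursive} and~\ref{prop:wlreginfty} plus Lipschitz continuity, and, for $\delta=0$, the identification $C^{0,(k)}_{ij}=\dWL{k}{(\setX,m^{\setX}_\bullet,\delta_i),(\setY,m^{\setY}_\bullet,\delta_j)}$ combined with Proposition~\ref{prop:cvwlinit}, exactly as in the paper's Proposition~\ref{prop:Cconverges}. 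The one small divergence is the rate estimate: you bound $\norm{C^{\delta,(\infty)}}_\infty\leq\norm{C}_\infty$ from the fixed-point equation and apply the contraction estimate to $\norm{C^{\delta,(0)}-C^{\delta,(\infty)}}_\infty$, yielding the slightly sharper bound $2(1-\delta)^k\norm{C}_\infty$, whereas the paper invokes \citet[Corollary 1.1]{pata2019fixed} in terms of the first-step displacement to get the stated $\frac{2(1-\delta)^k}{\delta}\norm{C}_\infty$; both derivations are valid and imply the claimed inequality.
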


As the \gls{dWL} corresponds to the case when $\delta=0$, this proposition actually implies that the WL distance $\dWL{\infty}{\mX, \mY} $ is independent of the initial distributions of $\mX$ and $\mY$ (see also Proposition~\ref{prop:cvwlinit} in \appendixref{sec: more on dWL}). 
When $\delta>0$, 
\(\dWL[\delta]{\infty}{\mX, \mY} \) behaves completely differently and it of course depends on the initial distributions of $\mX$ and $\mY$ since $\dW<C^{\delta, (\infty)}>{\nu^{\setX}, \nu^{\setY}}$ depends on $\nu^{\setX}$ and $\nu^{\setY}$ when $C^{\delta, (\infty)}$ is not constant.
Together with the fact that $\dOTC{}$ is only  defined for stationary Markov chains, we conclude that $\dWL[\delta]{\infty}{}$ distinguishes more Markov chains than both the WL and the OTC distances. \looseness-1

Finally, we remark that when $\delta>0$, the last step for the computation of $\dWL[\delta]{\infty}{}$ (\cref{eq:wlregk-infty}), involves solving for a meaningful (i.e., non-constant) optimal coupling between $\nu^{\setX}$ and $\nu^{\setY}$ that minimizes this cost. That coupling provides a matching between the state spaces of $\mX$ and $\mY$ which can be used for some applications. Note that, when $\delta=0$ (corresponding to the WL distance), as $C^{0,(\infty)}$ is a constant matrix, no meaningful coupling/matching can be obtained.

In \appendixref{sec:algorithm} we provide pseudo-codes for computing $\dWL[\delta]{k}{\mX,\mY}$ for both finite and infinite $k$ based on the two propositions above as well as a complexity analysis. We also provide certain acceleration techniques in \appendixref{sec:algorithm}, including a faster (in terms of complexity) algorithm in the case where both transition kernel matrices are sparse in \algorithmref{alg:computation_accel_k}, and techniques to empirically accelerate the computation.

\section{Differentiation of the Discounted WL Distance}\label{differentiation}
Recall from \cref{sec:algorithm and convergence} that \(d_{\mathrm{WL}}^{\delta, (k)}\) can be computed recursively for any finite $k$ and \(d_{\mathrm{WL}}^{\delta, (\infty)}\) can hence be approximated efficiently. 
However, in many applications such as graph learning, one requires that the distances involved can be differentiated.  
This motivates us to devise in this section an algorithm to differentiate
\(\dWL<C>[\delta]{\infty}{\mX, \mY}\) w.r.t. change in parameters $\mathcal{X},\mathcal{Y}$ or the cost $C$ when \(\delta > 0\). All missing proofs and details in this section are in \appendixref{sec:regularized discounted WL}. This section gives the key results needed to compute the gradient. The detailed steps of the computation of the backwards pass are laid out in \algorithmref{alg:backwardpass_simple} (with the other algorithms in \appendixref{sec:algorithm}).

\subsection{Sinkhorn Approximation}

To differentiate our distance, 
we want the steps to be differentiable. 
Optimal transport as defined in \cref{eq:wassersteindef} 
is not a differentiable problem.
In the literature, differentiability is achieved by replacing it with a smooth approximation, 
called the Sinkhorn distance, originally introduced in  \citet{cuturi2013sinkhorn}:
Using the same notations as in \cref{eq:wassersteindef}, given $\epsilon\geq 0$, the ($\epsilon$-)regularized OT problem is defined as:
\(
\dW^{\epsilon}<C>{\alpha,\beta}:=\min_{(X, Y)\in \mathcal{C}(\alpha, \beta)} \E\; C(X,Y) - \epsilon H(X, Y). \label{eq:sinkhorndef}
\)
Here $H$ denotes the entropy function, i.e., $H(X,Y) := -\sum_{i\in\setX,j\in\setY}P_{ij}\log(P_{ij})$, where $P_{ij}:=\Pr(X=i,Y=j)$.

We now define the entropy-regularized version of our discounted WL distance, denoted by $\dWL<C>[\delta][\epsilon]{k}{\mX,\mY}$, via formulas shown in \cref{eq:wlreg_costmatrix} and \cref{eq:wlregk} with the optimal transport distance $d_\mathrm{W}$ all replaced by the $\epsilon$ regularized optimal transport distance $d_\mathrm{W}^\epsilon$. We then set $\dWL<C>[\delta][\epsilon]{\infty}{\mX,\mY}:=\lim_{k\rightarrow\infty}\dWL<C>[\delta][\epsilon]{k}{\mX,\mY}$. See \appendixref{sec:regularized discounted WL} for the precise definition of $\dWL<C>[\delta][\epsilon]{k}{\mX,\mY}$ and the well-definedness of $\dWL<C>[\delta][\epsilon]{\infty}{\mX,\mY}$. It turns out that the entropy-regularized discounted WL distance is indeed an approximate of our original discounted WL distance.

\begin{theorem}[Convergence of the entropy-regularized distance]\label{thm:wlsinkhorncv}
For any Markov chains $\mX$, $\mY$ over a finite number of states, and cost matrix  $C$ between these two Markov chains and any $k\in\N\cup\{\infty\}$, one has that
$\lim_{\epsilon\rightarrow 0} \dWL<C>[\delta][\epsilon]{k}{\mX,\mY} = \dWL<C>[\delta]{k}{\mX,\mY}.$ Moreover, one has the following convergence rate:
\[|\dWL<C>[\delta][\epsilon]{k}{\mX,\mY}-\dWL<C>[\delta]{k}{\mX,\mY}|\leq \frac{\epsilon}{\delta}\log(|\setX||\setY|).\]
\end{theorem}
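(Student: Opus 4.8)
The plan is to exploit the recursive characterization of $\dWL<C>[\delta]{k}{\mX,\mY}$ from Proposition~\ref{prop:wlreg_recursive} and its entropy-regularized analogue, comparing the two recursions level by level. Write $n=\abs{\setX}$ and $m=\abs{\setY}$, let $C^{\delta,(l)}$ be the matrices from \cref{eq:wlreg_costmatrix}, and let $C^{\delta,(l),\epsilon}$ denote the matrices produced by the \emph{same} recursion with every $\dW$ replaced by $\dW^{\epsilon}$ (so that $\dWL<C>[\delta][\epsilon]{k}{\mX,\mY}=\dW^{\epsilon}<C^{\delta,(k),\epsilon}>{\nu^{\setX},\nu^{\setY}}$). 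Two elementary facts about optimal transport drive everything. First, a \emph{uniform Sinkhorn gap}: since the coupling entropy $H$ satisfies $0\le H\le \log(nm)$, we have $\dW<C'>{\alpha,\beta}-\epsilon\log(nm)\le \dW^{\epsilon}<C'>{\alpha,\beta}\le \dW<C'>{\alpha,\beta}$ for \emph{every} cost matrix $C'$ and marginals $\alpha,\beta$, with a constant depending only on the dimensions. Second, OT is $1$-Lipschitz in the cost in sup norm, $\abs{\dW<C'>{\alpha,\beta}-\dW<C''>{\alpha,\beta}}\le \norm{C'-C''}_\infty$, because every coupling has unit mass; the same holds for $\dW^{\epsilon}$ since the entropic term does not involve the cost.

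Next I would propagate the error through the recursion. Set $\Delta_l:=\norm{C^{\delta,(l),\epsilon}-C^{\delta,(l)}}_\infty$, so $\Delta_0=0$. Comparing the two update rules at level $l+1$ and inserting the intermediary $\dW<C^{\delta,(l),\epsilon}>{m^{\setX}_i,m^{\setY}_j}$ (the moved cost, no entropy), the Sinkhorn gap controls the passage from $\dW^{\epsilon}$ to $\dW$ (by $\epsilon\log(nm)$) and the cost-Lipschitz bound controls the change of cost matrix (by $\Delta_l$), yielding $\Delta_{l+1}\le (1-\delta)\bigl(\Delta_l+\epsilon\log(nm)\bigr)$. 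When $\delta>0$, summing this geometric recursion gives $\Delta_k\le \tfrac{1-\delta}{\delta}\,\epsilon\log(nm)$ for all $k$. Applying the two facts once more at the final transport step then gives the key estimate
\begin{equation*}
\bigl|\dWL<C>[\delta][\epsilon]{k}{\mX,\mY}-\dWL<C>[\delta]{k}{\mX,\mY}\bigr|\le \tfrac{1}{\delta}\,\epsilon\log(nm),
\end{equation*}
which is crucially \emph{uniform in $k$}. For any finite $k$ this already yields $\lim_{\epsilon\to0}\dWL<C>[\delta][\epsilon]{k}{\mX,\mY}=\dWL<C>[\delta]{k}{\mX,\mY}$ (and when $\delta=0$ the same recursion gives the weaker, $k$-dependent bound $\Delta_k\le k\,\epsilon\log(nm)$, still enough for finite $k$).

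For the case $k=\infty$ I would pass to the limit in $k$. Since $\dWL<C>[\delta]{\infty}{\mX,\mY}=\lim_{k\to\infty}\dWL<C>[\delta]{k}{\mX,\mY}$ by Proposition~\ref{prop:wlreginfty}, and $\dWL<C>[\delta][\epsilon]{\infty}{\mX,\mY}$ is defined as $\lim_{k\to\infty}\dWL<C>[\delta][\epsilon]{k}{\mX,\mY}$, letting $k\to\infty$ in the uniform estimate preserves it, so that $\bigl|\dWL<C>[\delta][\epsilon]{\infty}{\mX,\mY}-\dWL<C>[\delta]{\infty}{\mX,\mY}\bigr|\le \tfrac{1}{\delta}\,\epsilon\log(nm)$, and the claimed limit as $\epsilon\to0$ follows.

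The main obstacle is obtaining the $k$-uniform error bound, which is exactly what lets the $k=\infty$ case go through by swapping the $\epsilon\to0$ and $k\to\infty$ limits. This uniformity rests on two points deserving care: the Sinkhorn gap must hold over \emph{all} cost matrices arising in the recursion (it does, since the constant $\log(nm)$ depends only on the state-space sizes, even though the matrices $C^{\delta,(l),\epsilon}$ themselves vary with $\epsilon$), and the damping by $(1-\delta)$ must be a genuine contraction, which requires $\delta>0$. Without the contraction the per-level errors accumulate linearly in $k$, so the $k=\infty$ statement is inherently tied to the discounting.
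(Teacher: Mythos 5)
Your proof is correct, but it takes a genuinely different route from the paper's. The paper splits the two cases: for finite $k$ it simply cites the convergence of the Sinkhorn distance to unregularized OT for a fixed cost matrix \citep[Property 1]{cuturi2013sinkhorn}; for $k=\infty$ it runs a soft, qualitative argument, showing that the map $F(A,\epsilon) := \delta C + (1-\delta)\bigl(\dW^{\epsilon}<A>{m^{\setX}_i, m^{\setY}_j}\bigr)_{ij}$ is $(1-\delta)$-Lipschitz in $A$ (via Lemma~\ref{lemma:ot1lip}) and then invoking continuity of Banach fixed points with respect to a parameter (Lemma~\ref{lm:fixed point}, combined with continuity of $\dW^{\epsilon}$ in $\epsilon$ from \citealp[Proposition 4.1]{peyreComputationalOT2018}) to conclude that $C^{\epsilon,\delta,(\infty)}$ is continuous in $\epsilon$. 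You instead prove one quantitative, $k$-uniform estimate, $\bigl|\dWL[\delta][\epsilon]{k}{\mX,\mY} - \dWL[\delta]{k}{\mX,\mY}\bigr| \le \epsilon\log(nm)/\delta$, by propagating the entropy gap $0 \le \dW<C'>{\alpha,\beta} - \dW^{\epsilon}<C'>{\alpha,\beta} \le \epsilon\log(nm)$ through the $(1-\delta)$-contraction, and both cases of the theorem follow at once (the $k=\infty$ case by passing $k\to\infty$ in the uniform bound, which is legitimate since both limits exist by Propositions~\ref{prop:wlreginfty} and~\ref{prop:limit regularized WL}). Your approach buys an explicit convergence rate, a self-contained argument needing neither Cuturi's convergence result nor Pata's parametric fixed-point theorem, and a unified treatment of finite and infinite $k$; it also makes explicit a chaining step the paper leaves implicit in its finite-$k$ case, namely that the cited Sinkhorn-to-OT convergence holds for a fixed cost, whereas the recursion's cost matrices $C^{\epsilon,\delta,(l)}$ themselves vary with $\epsilon$, so one needs precisely the Lipschitz-in-cost induction you carry out. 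What the paper's route buys is brevity, and its fixed-point viewpoint dovetails with the machinery it reuses for differentiation (Lemma~\ref{lm:fixed_diff}); it would also survive in settings where no explicit entropy bound of the form $H \le \log(nm)$ is at hand.
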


\subsection{Differentiation of $\dWL[\delta][\epsilon]{\infty}{\mX,\mY}$}
Fixing the underlying sets $\setX$ and $\setY$, the distance $\dWL<C>[\delta][\epsilon]{\infty}{\mX,\mY}$ can be written down explicitly as a function 
$\dWL<C>[\delta][\epsilon]{\infty}{m^{\setX}_{\bullet},m^{\setY}_{\bullet},\nu^{\setX},\nu^{\setY}}$
which depends on $m^{\setX}_{\bullet},m^{\setY}_{\bullet},\nu^{\setX},\nu^{\setY}$ and $C$.
Furthermore, to compute $\dWL[\delta][\epsilon]{\infty}{\mX,\mY}$, by Definition \ref{def:entropy dWL}, one first needs to compute the matrix $C^{\epsilon, \delta, (\infty)}$ which is a function
$C^{\epsilon, \delta, (\infty)}(m^{\setX}_{\bullet},m^{\setY}_{\bullet},C)$ depending on $m^{\setX}_{\bullet},m^{\setY}_{\bullet}$ and $C$. 
We now devise an algorithm to compute the gradient of $C^{\epsilon, \delta, (\infty)}$. 
Based on this, the gradient for $\dWL[\delta][\epsilon]{\infty}{\mX,\mY}$ can then be computed using the chain rule and the differentiation method for entropy-regularized OT
\citep[Proposition 4.6 and 9.2]{peyreComputationalOT2018}.

We use the following tensor notation to represent the target gradient of $C^{\epsilon, \delta, (\infty)}$:
\[\Delta := \left(\Delta_{ij}^{kl}\right){}_{1 \leq i \leq n, 1 \leq j \leq m }^{ 1 \leq k \leq n, 1 \leq l \leq m}, \,\Gamma := \left(\Gamma_{ij}^{kk'}\right){}_{1 \leq i \leq n, 1 \leq j \leq m }^{ 1 \leq k \leq n, 1 \leq k' \leq n}, \,\Theta := \left(\Theta_{ij}^{ll'}\right){}_{1 \leq i \leq n, 1 \leq j \leq m }^{ 1 \leq l \leq m, 1 \leq l' \leq m},\]
 where   $\Delta_{ij}^{kl} := \frac{\partial C^{\epsilon,\delta, (\infty)}_{ij}}{\partial C_{kl}}, \,\Gamma_{ij}^{kk'} := \frac{\partial C^{\epsilon,\delta, (\infty)}_{ij}}{\partial m^{\setX}_{kk'}}, \,\Theta_{ij}^{ll'} := \frac{\partial C^{\epsilon,\delta, (\infty)}_{ij}}{\partial m^{\setY}_{ll'}}$
and $m^{\setX}_{kk'}$ (resp. $m^{\setY}_{ll'}$) is the transition probability from state $k$ to state $k'$ (resp. from state $l$ to state $l'$).
For each $i,j$, given the matrix $C^{\epsilon,\delta, (\infty)}_{ij}$ (approximated by $C^{\epsilon,\delta, (k)}_{ij}$ in practice; see also Proposition \ref{prop:wlregnotconstant} for an analysis of convergence rate), we solve the regularized optimal transport problem
$\dW^\epsilon<C^{\epsilon,\delta, (\infty)}_{ij}>{m^{\setX}_i, m^{\setY}_j}\label{eq:OTij}$ to obtain the following data (defined in \definitionref{def:regOT} in Appendix):
\begin{itemize}\denselist
\item
  an optimal transport matrix (also called the primal solution) \(P_{ij}=\left(P_{ij}^{kl}\right){}^{ 1 \leq k \leq n, 1 \leq l \leq m}\);
\item
  and two dual solutions \(f_{ij}=\left(f_{ij}^k\right){}^{ 1 \leq k \leq n}\)
  and
  \(g_{ij}=\left(g_{ij}^l\right){}^{ 1 \leq l \leq m}\).
\end{itemize}

These give rise to the following tensors when considering all $i$ and $j$:
\begin{align*}
  P := \left(P_{ij}^{kl}\right){}_{1 \leq i \leq n, 1 \leq j \leq m }^{ 1 \leq k \leq n, 1 \leq l \leq m},\, F := \left(f_{ij}^{k'}\1_{i = k}\right){}_{1 \leq i \leq n, 1 \leq j \leq m }^{ 1 \leq k \leq n, 1 \leq k' \leq m},\, G := \left(g_{ij}^{l'}\1_{i = l}\right){}_{1 \leq i \leq n, 1 \leq j \leq m }^{ 1 \leq l \leq m, 1 \leq l' \leq m}.
\end{align*}

Now that we have computed $P$, $F$ and $G$, it turns out that we can use them to directly compute $\Delta,\Gamma$ and $\Theta$, {which contain all necessary gradients for $C^{\epsilon, \delta, (\infty)}$.}

\begin{theorem}[Explicit computation of the gradients]\label{thm:computation of gradients}
View the tensors defined above as  matrices by flattening their dimensions (and resp. codimensions) together — for example $P$ becomes an $nm \times nm$ square matrix. Let $I_{nm}$ denote the identity matrix of size $nm\times nm$. Then, one has 
$\Delta = \delta\left(I_{nm} - (1 - \delta)P \right)^{-1}$, 
  $\Gamma = (1 - \delta)\left(I_{nm} - (1 - \delta)P \right)^{-1} F$ and 
  $\Theta = (1 - \delta)\left(I_{nm} - (1 - \delta)P \right)^{-1} G.$
\end{theorem}

\begin{figure}
  \centering
      \raisebox{-\height}{\begin{subfigure}[t]{.23\textwidth}
          \centering
          \includegraphics[width=\textwidth]{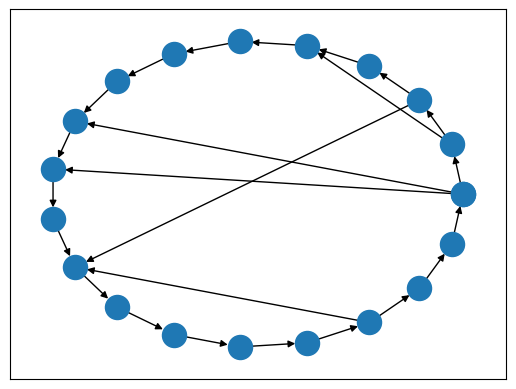}
          \caption{A noisy circle graph}
      \end{subfigure}}
      \raisebox{-\height}{\begin{subfigure}[t]{.23\textwidth}
          \centering
          \includegraphics[width=\textwidth]{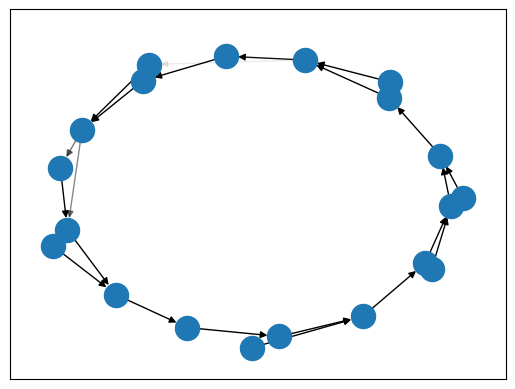}
          \caption{$\dWL[\epsilon][\delta]{\infty}{}$ barycenter}
      \end{subfigure}}
      \raisebox{-\height}{\begin{subfigure}[t]{.23\textwidth}
          \centering
          \includegraphics[width=\textwidth]{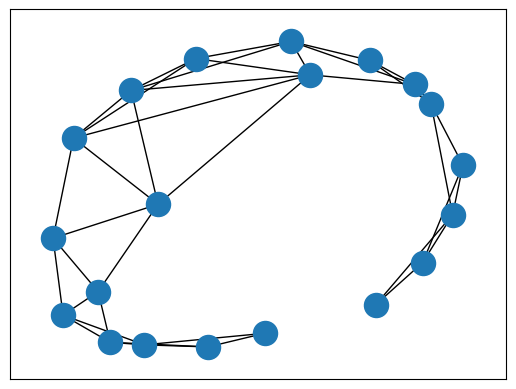}
          \caption{FGW barycenter}
      \end{subfigure}}
  
  \caption{Barycenter computation of 30 noisy circle graphs}
  \label{fig:bary}
  \end{figure}

Please refer to \appendixref{sec:algorithm} for the pseudo-code that implements gradient computation based on the above theorem as well as analysis on computational complexity.

\section{Experiments}\label{sec:experiment}
In this section, we employ the discounted WL distance for graph classification tasks and the computation of graph barycenters. It is important to highlight that, for computing graph barycenters, we deploy the gradient descent method to minimize the Fréchet functional. This approach necessitates the differentiability of our distance. We also demonstrate how our distance can be used to compute graph coarsening via gradient descent in \appendixref{par:coarsening}.
\paragraph{Graph Classification.}\label{par:classification}

We compared our distance to the fused GW distance (FGW) \cite{titouan2019optimal} on classification benchmarks on real world datasets from the \href{http://www.graphlearning.io/}{TUDataset repository}~\citep{tudataset} (see \tableref{tab:classification}).
The SVM for both FGW and OTM distances are learnt using rbf kernels (with cross-valitated regularization parameter). FGW is run with $\alpha=0.5$.
When attributes are discrete, the cost (distance) used is the Dirac cost ($0$ if the attributes are the same, $1$ otherwise).

1-NN results suggest FGW is superior or similar when labels carry low information, and our distance performs better when the label carries more information. SVM results are mitigated, and suggest similar results as other methods.

\begin{table}[htb]
    \resizebox{\textwidth}{!}{%
\begin{tabular}{llllll}
dataset                                           & PROTEINS                    & PTC\_MR                     & PROTEINS\_full              & ENZYMES                 \\
\hline
classes                                           & 2                           & 2                           & 2                           & 6                       \\
attributes                                        & discrete label              & discrete label              & 29                          & 18                      \\
\hline
FGW 1-NN                                          & $\mathbf{65.1\% \pm 4.6\%}$ & $57.6\% \pm 5.0\%$          & $69.5\% \pm 4.0\%$          & $66.3\% \pm 6.4\%$\\
$d_{WL, \delta}^{(\infty)}$ ($\delta = 0.2$) 1-NN & $61.4\% \pm 4.0\%$          & $\mathbf{61.3\% \pm 7.6\%}$ & $\mathbf{70.0\% \pm 4.5\%}$ & $\mathbf{74.7\% \pm 6.2\%}$\\
\hline
FGW SVM                                           & $70.5\% \pm 2.9\%$          & $57.6\% \pm 4.6\%$          & $\mathbf{75.0\% \pm 3.8\%}$ & $42.7\% \pm 13.5\%$\\
$d_{WL, \delta}^{(\infty)}$ ($\delta = 0.2$) SVM  & $\mathbf{76.4\% \pm 5.3\%}$ & $\mathbf{61.3\% \pm 5.9\%}$ & $73.5\% \pm 3.1\%$          & $\mathbf{68.3\% \pm 4.1\%}$\\
\end{tabular}%
}%
\caption{Results of the classification experiment}\label{tab:classification}
\end{table}

\paragraph{Barycenter Computation.} 
In order to show the effectiveness of our proposed OTM distances as optimization targets for learning tasks on directed graphs, 
we compute a simple graph barycenter. 
With random noisy cycle graphs $G_1, \ldots , G_n$ as input, we compute the barycenter graph $G_{\text{bar}}$ by minimizing the following objective function: $\sum_i \dWL<C>[\delta]{\infty}{G_{\text{bar}},G_i}$ where the cost $C$ is based on the euclidean distance over the labels.
The detailed results (including comparison with the barycenter computed by using fused-GW of \citet{titouan2019optimal}) and setup can be found in \appendixref{app:barycenter_computation}, and one example is shown in \figureref{fig:bary} where the discounted WL distance achieves a better barycenter than the fused GW distance.

\section{Concluding Remarks}\label{sec:conclusion}
Our paper provides a novel framework of OTM distances comparing Markov chains and hence directed graphs. 
As our discount-WL distance can be differentiated, 
the natural next step is to apply our distances to various learning problems, 
such as to provide effective statistical analysis in the space of graphs (equipped with this metric), 
or to provide loss for learning models (e.g. graph generative models) over complex networks. 
In order to make this endeavour easier, we provide the code to compute it\footnote{\url{https://github.com/YusuLab/ot_markov_distances}}, 
in the form of a packaged python library\footnote{\url{https://pypi.org/project/ot-markov-distances/}}.
We are particularly interested in exploring the use of the discounted WL distance (or variants) to study directed networks, 
where our current available tool-box has been more limited.

\paragraph{Limitations.} On a practical front, the computation of our new distance can be slow on large graphs, although technical optimizations presented in \appendixref{sec:algorithm} mitigate that to some extent. The hyperparameters (e.g, $\epsilon$, $\delta$) also require careful handling. 
Though our distance calculation is empirically much slower than the approximate fused-GW distance (See \appendixref{app:classification} for comparisons), 
it is polynomial-time computable, unlike the NP-hard exact FGW. 
This difference allows for acceleration techniques, potentially enhancing efficiency. 
We posit that methods like neural OT~\citep{makkuva20optimal,korotin2022neural,chen2023neural} could be integrated into our framework for further gains.

\section*{Acknowledgments}
This work is partially supported by NSF under grants CCF-2112665, CCF-2217058, and  CCF-2310411.

\bibliography{resources/dWL}

\begin{thebibliography}{35}
\providecommand{\natexlab}[1]{#1}
\providecommand{\url}[1]{\texttt{#1}}
\expandafter\ifx\csname urlstyle\endcsname\relax
  \providecommand{\doi}[1]{doi: #1}\else
  \providecommand{\doi}{doi: \begingroup \urlstyle{rm}\Url}\fi

\bibitem[Anderson et~al.(1999)Anderson, Bai, Bischof, Blackford, Demmel,
  Dongarra, Du~Croz, Greenbaum, Hammarling, McKenney, and Sorensen]{lapack}
E.~Anderson, Z.~Bai, C.~Bischof, S.~Blackford, J.~Demmel, J.~Dongarra,
  J.~Du~Croz, A.~Greenbaum, S.~Hammarling, A.~McKenney, and D.~Sorensen.
\newblock \emph{{LAPACK} Users' Guide}.
\newblock Society for Industrial and Applied Mathematics, Philadelphia, PA,
  third edition, 1999.
\newblock ISBN 0-89871-447-8 (paperback).

\bibitem[Azizian and Lelarge(2021)]{azizian2020expressive}
W.~Azizian and M.~Lelarge.
\newblock Expressive power of invariant and equivariant graph neural networks.
\newblock In \emph{International Conference on Learning Representations}, 2021.

\bibitem[Babai and Kucera(1979)]{babai1979canonical}
L.~Babai and L.~Kucera.
\newblock Canonical labelling of graphs in linear average time.
\newblock In \emph{20th Annual Symposium on Foundations of Computer Science
  (sfcs 1979)}, pages 39--46. IEEE, 1979.

\bibitem[Babai and Luks(1983)]{babai1983canonical}
L.~Babai and E.~M. Luks.
\newblock Canonical labeling of graphs.
\newblock In \emph{Proceedings of the fifteenth annual ACM symposium on Theory
  of computing}, pages 171--183, 1983.

\bibitem[Brogat-Motte et~al.(2022)Brogat-Motte, Flamary, Brouard, Rousu, and
  d’Alch{\'e} Buc]{brogat2022learning}
L.~Brogat-Motte, R.~Flamary, C.~Brouard, J.~Rousu, and F.~d’Alch{\'e} Buc.
\newblock Learning to predict graphs with fused {G}romov-{W}asserstein
  barycenters.
\newblock In \emph{International Conference on Machine Learning}, pages
  2321--2335. PMLR, 2022.

\bibitem[Bunch and Hopcroft(1974)]{bunch1974}
J.~R. Bunch and J.~E. Hopcroft.
\newblock Triangular factorization and inversion by fast matrix multiplication.
\newblock \emph{Mathematics of Computation}, 28\penalty0 (125):\penalty0
  231--236, 1974.
\newblock ISSN 00255718, 10886842.

\bibitem[Chen and Wang(2023)]{chen2023neural}
S.~Chen and Y.~Wang.
\newblock Neural approximation of wasserstein distance via a universal
  architecture for symmetric and factorwise group invariant functions.
\newblock In \emph{Thirty-seventh Conference on Neural Information Processing
  Systems}, 2023.

\bibitem[Chen et~al.(2022)Chen, Lim, Mémoli, Wan, and
  Wang]{chen2022weisfeilerlehman}
S.~Chen, S.~Lim, F.~Mémoli, Z.~Wan, and Y.~Wang.
\newblock Weisfeiler-{L}ehman meets {G}romov-{W}asserstein.
\newblock In \emph{International Conference on Machine Learning (ICML)}, pages
  3371--3416. PMLR, 2022.

\bibitem[Chen et~al.(2023)Chen, Lim, Mémoli, Wan, and Wang]{chen2023wl}
S.~Chen, S.~Lim, F.~Mémoli, Z.~Wan, and Y.~Wang.
\newblock The {W}eisfeiler-{L}ehman distance: reinterpretation and connection
  with {GNN}s.
\newblock \emph{ICML workshop: Topology, Algebra, and Geometry in Machine
  Learning (2023)}, 2023.

\bibitem[Chuang and Jegelka(2022)]{chuangtree}
C.-Y. Chuang and S.~Jegelka.
\newblock Tree mover's distance: Bridging graph metrics and stability of graph
  neural networks.
\newblock In \emph{Advances in Neural Information Processing Systems}, 2022.

\bibitem[Cuturi(2013)]{cuturi2013sinkhorn}
M.~Cuturi.
\newblock Sinkhorn distances: Lightspeed computation of optimal transport.
\newblock \emph{Advances in neural information processing systems}, 26, 2013.

\bibitem[Duan et~al.(2022)Duan, Wu, and Zhou]{duan2022faster}
R.~Duan, H.~Wu, and R.~Zhou.
\newblock Faster matrix multiplication via asymmetric hashing, 2022.

\bibitem[Feydy et~al.(2019)Feydy, S{\'e}journ{\'e}, Vialard, Amari, Trouv{\'e},
  and Peyr{\'e}]{feydyInterpolatingOptimalTransport2019}
J.~Feydy, T.~S{\'e}journ{\'e}, F.-X. Vialard, S.-i. Amari, A.~Trouv{\'e}, and
  G.~Peyr{\'e}.
\newblock Interpolating between optimal transport and mmd using sinkhorn
  divergences.
\newblock In \emph{The 22nd International Conference on Artificial Intelligence
  and Statistics}, pages 2681--2690. PMLR, 2019.

\bibitem[Griffeath(1975)]{Griffeath1975}
D.~Griffeath.
\newblock A maximal coupling for {M}arkov chains.
\newblock \emph{Zeitschrift f{\"u}r Wahrscheinlichkeitstheorie und verwandte
  Gebiete}, 31\penalty0 (2):\penalty0 95--106, 1975.

\bibitem[Kolmogorov and Bharucha-Reid(2018)]{kolmogorov2018foundations}
A.~N. Kolmogorov and A.~T. Bharucha-Reid.
\newblock \emph{Foundations of the theory of probability: Second English
  Edition}.
\newblock Courier Dover Publications, 2018.

\bibitem[Korotin et~al.(2022)Korotin, Selikhanovych, and
  Burnaev]{korotin2022neural}
A.~Korotin, D.~Selikhanovych, and E.~Burnaev.
\newblock Neural optimal transport.
\newblock In \emph{The Eleventh International Conference on Learning
  Representations}, 2022.

\bibitem[Lehman and Weisfeiler(1968)]{leman1968reduction}
A.~Lehman and B.~Weisfeiler.
\newblock A reduction of a graph to a canonical form and an algebra arising
  during this reduction.
\newblock \emph{Nauchno-Technicheskaya Informatsiya}, 2\penalty0 (9):\penalty0
  12--16, 1968.

\bibitem[Levin and Peres(2017)]{levin2017markov}
D.~A. Levin and Y.~Peres.
\newblock \emph{Markov chains and mixing times}, volume 107.
\newblock American Mathematical Soc., 2017.

\bibitem[Makkuva et~al.(2020)Makkuva, Taghvaei, Oh, and Lee]{makkuva20optimal}
A.~Makkuva, A.~Taghvaei, S.~Oh, and J.~Lee.
\newblock Optimal transport mapping via input convex neural networks.
\newblock In H.~D. III and A.~Singh, editors, \emph{Proceedings of the 37th
  International Conference on Machine Learning}, volume 119 of
  \emph{Proceedings of Machine Learning Research}, pages 6672--6681. PMLR,
  2020.

\bibitem[Morris et~al.(2020)Morris, Kriege, Bause, Kersting, Mutzel, and
  Neumann]{tudataset}
C.~Morris, N.~M. Kriege, F.~Bause, K.~Kersting, P.~Mutzel, and M.~Neumann.
\newblock Tudataset: A collection of benchmark datasets for learning with
  graphs.
\newblock In \emph{ICML 2020 Workshop on Graph Representation Learning and
  Beyond (GRL+ 2020)}, 2020.
\newblock URL \url{www.graphlearning.io}.

\bibitem[Moulos(2021)]{moulos2021bicausal}
V.~Moulos.
\newblock Bicausal optimal transport for {M}arkov chains via dynamic
  programming.
\newblock In \emph{2021 IEEE International Symposium on Information Theory
  (ISIT)}, pages 1688--1693. IEEE, 2021.

\bibitem[O'Connor et~al.(2022)O'Connor, McGoff, and Nobel]{o2022optimal}
K.~O'Connor, K.~McGoff, and A.~B. Nobel.
\newblock Optimal transport for stationary {M}arkov chains via policy
  iteration.
\newblock \emph{Journal of Machine Learning Research}, 23:\penalty0 45--1,
  2022.

\bibitem[Paszke et~al.(2019)Paszke, Gross, Massa, Lerer, Bradbury, Chanan,
  Killeen, Lin, Gimelshein, Antiga, et~al.]{pytorch}
A.~Paszke, S.~Gross, F.~Massa, A.~Lerer, J.~Bradbury, G.~Chanan, T.~Killeen,
  Z.~Lin, N.~Gimelshein, L.~Antiga, et~al.
\newblock Pytorch: An imperative style, high-performance deep learning library.
\newblock \emph{Advances in neural information processing systems}, 32, 2019.

\bibitem[Pata et~al.(2019)]{pata2019fixed}
V.~Pata et~al.
\newblock \emph{Fixed point theorems and applications}, volume 116.
\newblock Springer, 2019.

\bibitem[Peyr{\'e} et~al.(2019)Peyr{\'e}, Cuturi,
  et~al.]{peyreComputationalOT2018}
G.~Peyr{\'e}, M.~Cuturi, et~al.
\newblock Computational optimal transport: With applications to data science.
\newblock \emph{Foundations and Trends{\textregistered} in Machine Learning},
  11\penalty0 (5-6):\penalty0 355--607, 2019.

\bibitem[Puterman(2014)]{puterman2014markov}
M.~L. Puterman.
\newblock \emph{Markov decision processes: discrete stochastic dynamic
  programming}.
\newblock John Wiley \& Sons, 2014.

\bibitem[Shervashidze et~al.(2011)Shervashidze, Schweitzer, Van~Leeuwen,
  Mehlhorn, and Borgwardt]{shervashidze2011weisfeiler}
N.~Shervashidze, P.~Schweitzer, E.~J. Van~Leeuwen, K.~Mehlhorn, and K.~M.
  Borgwardt.
\newblock Weisfeiler-{L}ehman graph kernels.
\newblock \emph{Journal of Machine Learning Research}, 12\penalty0 (9), 2011.

\bibitem[Sutton and Barto(2018)]{sutton2018reinforcement}
R.~S. Sutton and A.~G. Barto.
\newblock \emph{Reinforcement learning: An introduction}.
\newblock MIT press, 2018.

\bibitem[Togninalli et~al.(2019)Togninalli, Ghisu, Llinares-L{\'o}pez, Rieck,
  and Borgwardt]{togninalli2019wasserstein}
M.~Togninalli, E.~Ghisu, F.~Llinares-L{\'o}pez, B.~Rieck, and K.~Borgwardt.
\newblock Wasserstein {W}eisfeiler-{L}ehman graph kernels.
\newblock \emph{Advances in Neural Information Processing Systems},
  32:\penalty0 6439--6449, 2019.

\bibitem[Vayer et~al.(2019)Vayer, Courty, Tavenard, and
  Flamary]{titouan2019optimal}
T.~Vayer, N.~Courty, R.~Tavenard, and R.~Flamary.
\newblock Optimal transport for structured data with application on graphs.
\newblock In \emph{International Conference on Machine Learning}, pages
  6275--6284. PMLR, 2019.

\bibitem[Villani et~al.(2009)]{villani2009optimal}
C.~Villani et~al.
\newblock \emph{Optimal transport: old and new}, volume 338.
\newblock Springer, 2009.

\bibitem[Vincent-Cuaz et~al.(2021)Vincent-Cuaz, Vayer, Flamary, Corneli, and
  Courty]{vincent2021online}
C.~Vincent-Cuaz, T.~Vayer, R.~Flamary, M.~Corneli, and N.~Courty.
\newblock Online graph dictionary learning.
\newblock In \emph{International Conference on Machine Learning}, pages
  10564--10574. PMLR, 2021.

\bibitem[Xia et~al.(2023)Xia, Mishne, and Wang]{xia2023implicit}
X.~Xia, G.~Mishne, and Y.~Wang.
\newblock Implicit graphon neural representation.
\newblock In F.~Ruiz, J.~Dy, and J.-W. van~de Meent, editors, \emph{Proceedings
  of The 26th International Conference on Artificial Intelligence and
  Statistics}, volume 206 of \emph{Proceedings of Machine Learning Research},
  pages 10619--10634. PMLR, 25--27 Apr 2023.

\bibitem[Xu et~al.(2018)Xu, Hu, Leskovec, and Jegelka]{xu2018powerful}
K.~Xu, W.~Hu, J.~Leskovec, and S.~Jegelka.
\newblock How powerful are graph neural networks?
\newblock In \emph{International Conference on Learning Representations}, 2018.

\bibitem[Yi et~al.(2021)Yi, O'Connor, McGoff, and Nobel]{yi2021alignment}
B.~Yi, K.~O'Connor, K.~McGoff, and A.~B. Nobel.
\newblock Alignment and comparison of directed networks via transition
  couplings of random walks.
\newblock \emph{arXiv preprint arXiv:2106.07106}, 2021.

\end{thebibliography}

\newpage 

\appendix

\section{Experiment details}\label{sec:experiment_details}

The code to run experiments is available on GitHub\footnote{\url{https://github.com/YusuLab/ot_markov_distances}}.

\subsection{Barycenter Computation}\label{app:barycenter_computation}

\paragraph{Target graphs} The goal of this experiment is to show that $\dWL[\delta][\epsilon]{\infty}{}$ produces barycenters that are meaningful with regard to the structure of the input graphs. Here we use simple data: oriented circles with 20 nodes, which we perturb through Erdős-Rényi noise of equal edge addition and deletion probability $p$. Examples of such data are shown in \cref{subfig:bary:data}. The attributes of the nodes of the circles are their $(x, y)$ positions.
Our goal is to check that the barycenter approximately recovers the original circle.

\paragraph{Parametric Markov kernels}\label{par:parametric_markov_kernels}
In our experiments, when learning a Markov kernel, it is crucial to ensure that all transition probabilities retain their properties as probability distributions throughout the training process, meaning they remain non-negative and continue to sum to one.
We could have used projected gradient descent, but due to better empirical results, we decided to use a \emph{parametric Markov kernel}.
An $n\times n$ Markov kernel $M$ is parameterized by an $n\times n$ matrix $\Theta\in\R_+^{n\times n}$
using the parameterization

\begin{equation}
  M_i = \text{Softmax}\left(\frac{\Theta_i}{\text{heat}}\right)
\end{equation}
where $\text{heat}$ is a positive floating point parameter, and $M_i$ (resp $\Theta_i$) is the $i$-th row of $M$ (resp $\Theta$).

This choice of parameterization is both theoretically grounded and practically motivated:
\begin{enumerate}
    \item \textbf{Universality}: This parameterization reaches all dense (without 0 entry) transition matrices and approximates all others.
    \item \textbf{Standardization}: This aligns with common machine learning practices, where softmax is used to output probability distributions. As a Markov kernel is a collection of probability distributions, this approach is logical. It thus illustrates how our distance could interface with outputs of neural networks.
    \item \textbf{Convenience}:
    This method avoids issues like projected gradient descent and degenerated gradients, and is compatible with frameworks like PyTorch~\citep{pytorch}.
    \item \textbf{Sparsity Encouragement}: Paradoxically, this parameterization encourages relatively sparse transition matrices via exponentials and certain choice of threshold.
\end{enumerate}

\paragraph{Setup}
In this experiment, all initial distributions are taken to be uniform.

Let $G^1, \ldots , G^n$ denote the graphs whose barycenter we want to compute.
Let $M^1, \ldots , M^n$ denote the transition matrices of the random walks on those graphs, respectively, defined as follows:

\begin{equation}
  M^i = (D^i)^{-1} A^i
\end{equation}

where $A^i$ is the adjacency matrix of $G^i$ and $D^i$ is the diagonal matrix of degrees of $G^i$.
Finally, we let $l^{1} \in \R^{s_1 \times d}, \ldots , l^{n} \in \R^{s_n \times d}$ denote
the labels of the graphs, 
where $s_i$ is the number of nodes of $G^i$, and $d$ is the label size.

Since the size (number of vertices) of the input graphs is not necessarily the same, 
we need to define the size of the barycenter graph.
We leave it as a hyperparameter, and denote it by $s$.
\begin{itemize}
  \item the barycenter Markov kernel $M^{\text{bar}} \in \R_{+}^{s\times s}$ 
  \item the labels $l^\text{bar} \in \R^{s \times d}$ of the barycenter graph
\end{itemize}

We encode $M^{\text{bar}}$ as a parametric Markov matrix as described in the previous paragraph, 
and $l^\text{bar}$ directly as a matrix of parameters.

We then minimize the following objective function: 
\begin{equation}
f(M, l) = \sum_i \dWL<C^{i}(l)>[\delta][\epsilon]{\infty}{M, M^i}
\end{equation}

where $C^{i}(l)$ is the cost matrix defined as 

\begin{equation}
  C^{i}(l)_{u,v} = \norm{l_u - l^i_v}_2^2
\end{equation}

This objective may appear unconventional; however, it is equal to the following:
\begin{equation}
f(M, l) = \sum_i h(M, M^i)^2
\end{equation}
where $h$ is the pseudometric defined as in \propositionref{prop:alpha_OTM_is_distance}, 
with $\alpha = 2$ and where the pseudodistance 
is the $L_2$ distance between labels. 
This is the so-called Fréchet variance for the space with pseudometric $h$, 
and a minimizer of it as called a Fréchet mean.

And we use the ``Adam'' optimizer (with the implementation from Pytorch~\cite{pytorch}) 
to minimize the objective function.

The parameters for the $\dWL[\delta][\epsilon]{\infty}{}$ distance we chose are $\delta = 0.5$ and $\epsilon = 0.05$. 

The Fused Gromov-Wasserstein barycenters are computed using the official implementation from \citet{titouan2019optimal}.
The method is the one described in [Vayer et al, 2019], ie Block Coordinate Descent (BCD). The parameters used are the following
\begin{itemize}
    \item The tradeoff parameter is $\alpha = 0.95$ (heavily skewed towards the structural loss rather than the attribute loss)
    \item The weights are not learnt, but fixed to the uniform distribution. This is the same setting as for the delta-discounted WL distance barycenter.
\end{itemize}

\paragraph{Computational power}
Each barycenter computation takes about 2.5 to 11 minutes on an Nvidia RTX A6000 GPU depending on the number of target graphs(ranging from 1 to 50). This computation involves 1000 steps with a learning rate of $10^{-2}$. Although the time can be reduced by decreasing the number of steps, increasing the learning rate, or increasing either $\delta$ or $\epsilon$, these adjustments might degrade the quality of the results.

For a theoretical analysis of the complexity, please refer to \cref{sec:algorithm}. Comprehensive performance benchmarking can be found in the \lstinline{Performance.ipynb} notebook included in the appended code. The results of this benchmarking are presented in \cref{fig:performance}.

As a comparison computing one FGW barycenter for this experiment takes between $0.005$s and $4.15s$ with an average of $0.67$s on CPU (using the code provided by \citet{titouan2019optimal}). We acknowledge the lack of competitiveness of our method in terms of time complexity, as mentioned in \sectionref{sec:conclusion}. We hope that advantages of our distance outweight this problem, and that subsequent work will allow for more compute-efficient approximations.

\paragraph{Results} We compare the produced barycenters (in \cref{subfig:bary:results}) with the ones produced by the state-of-the-art graph distance, Fused Gromov-Wasserstein distance~\cite{titouan2019optimal} (in \cref{subfig:bary:fgw}). We observe that for higher noise values ($p=0.01$), our distance recovers the structure significantly better.
It is interesting to see that for very high noise ($p=0.1$), our distance and FGW fail in very similar way: they create one or several "accumulation nodes" that are in the middle (matched with several original nodes) and linked to and from a lot of nodes.

\begin{figure}[htbp]
\begin{center}
  \begin{subfigure}{0.5\textwidth}
      \includegraphics[width=\textwidth]{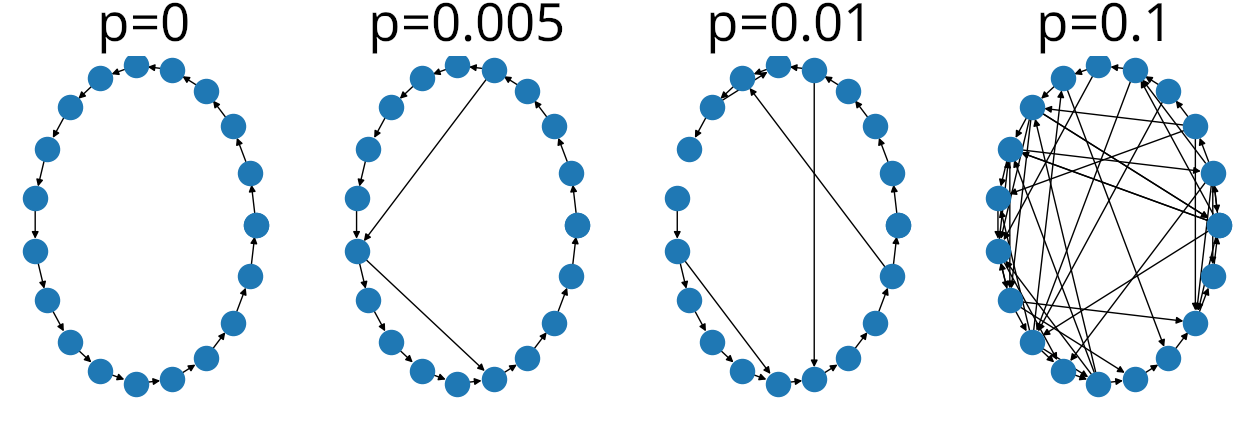}
    \caption{Example of perturbed circles depending on the noise level $p$}%
    \label{subfig:bary:data}
  \end{subfigure}\\
  \begin{subfigure}{0.9\textwidth}
  \includegraphics[width=\textwidth]{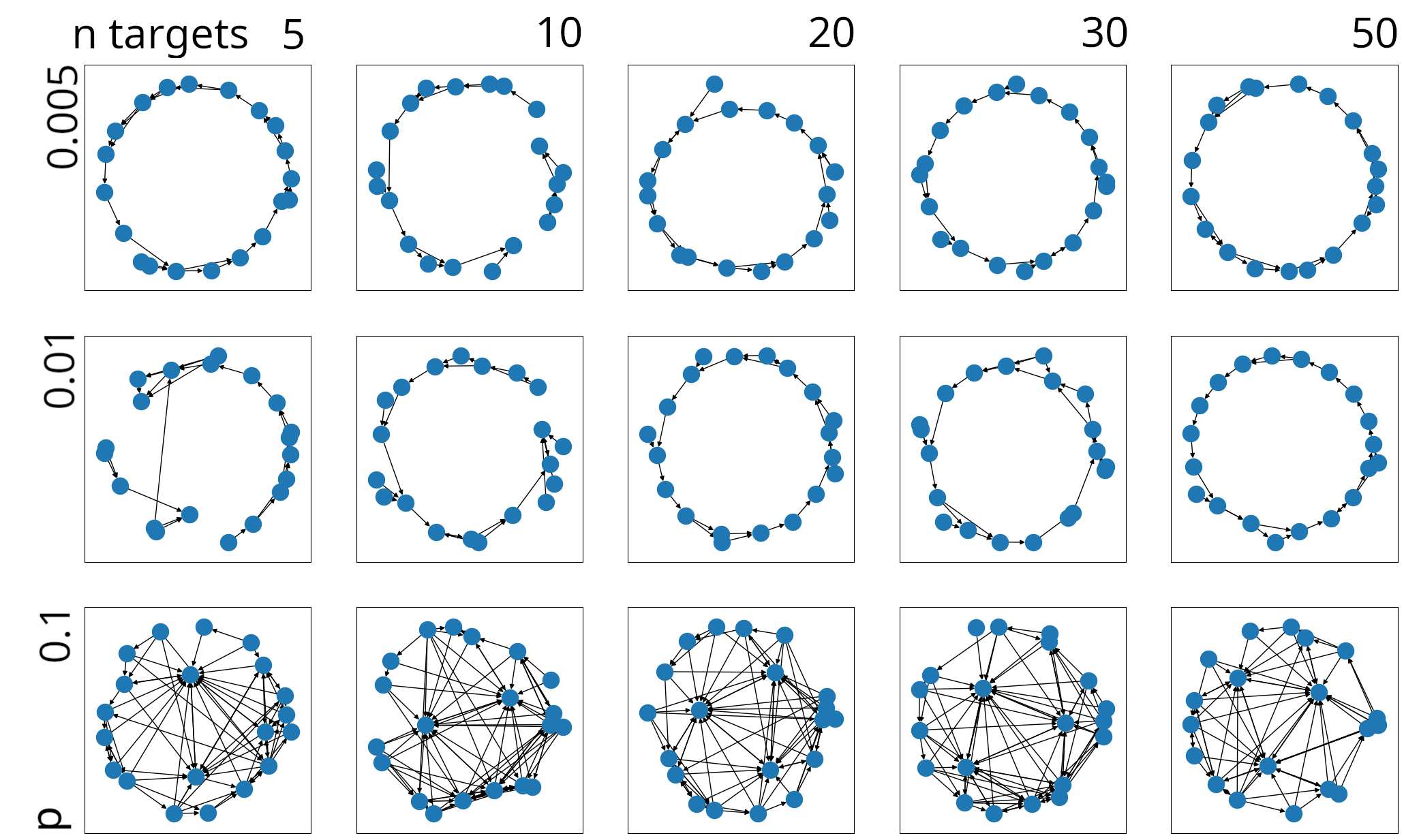}
  \caption{Barycenter computed with different values of $p$ (in ordinate) and for different number of graphs (in abscissa), using our distance}%
    \label{subfig:bary:results}
\end{subfigure}%
\hfill
\begin{subfigure}{0.9\textwidth}
  \includegraphics[width=\textwidth]{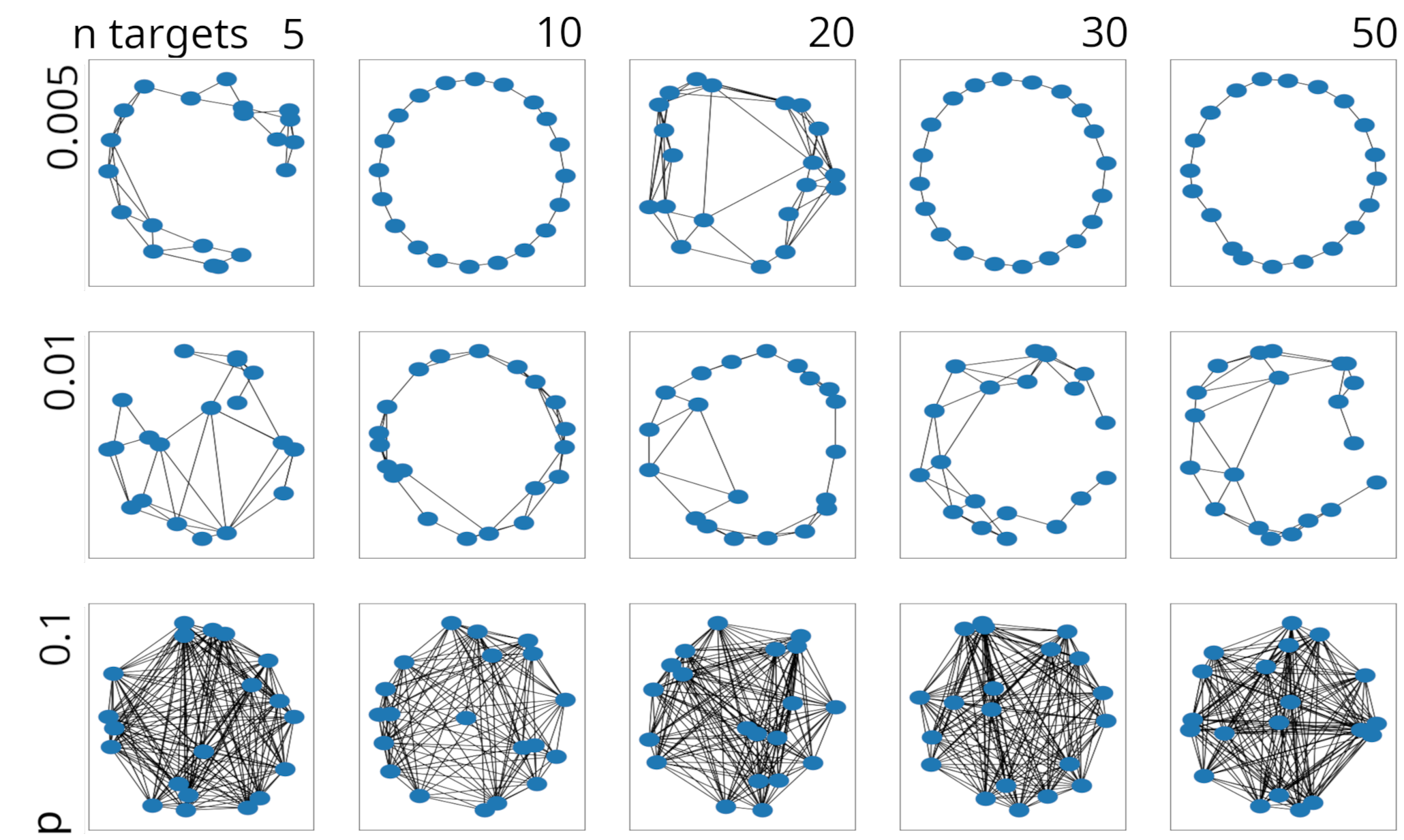}
    \caption{Barycenter computed on the same graphs without orientation, using the FGW barycenters from \citet{titouan2019optimal}}%
    \label{subfig:bary:fgw}
\end{subfigure}
\end{center}
\caption{Barycenter experiment}\label{fig:barycenter}
\end{figure}

\begin{figure}[htbp]
\begin{center}
  \begin{subfigure}{0.5\textwidth}
      \includegraphics[width=\textwidth]{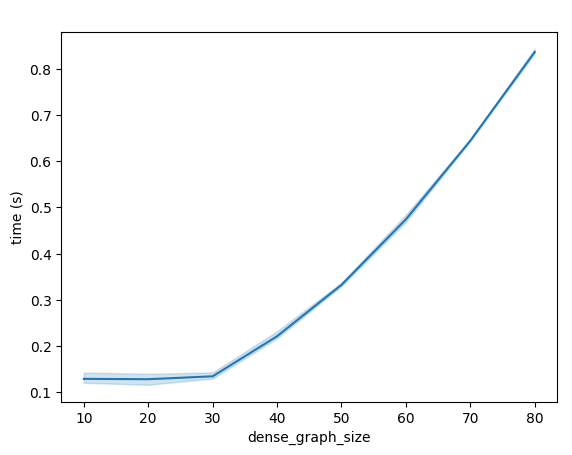}
      \caption{Average time for computing 
        $\dWL[\delta][\epsilon]{\infty}{G_\text{dense}, G_\text{sparse}}$, 
      where $G_\text{sparse}$ is of size 64 and degree 5 and the size of 
      $G_\text{dense}$ is on the abscissa.
    }%
    \label{subfig:performance:dense_sparse}
  \end{subfigure}\\
  \begin{subfigure}{0.49\textwidth}
  \includegraphics[width=\textwidth]{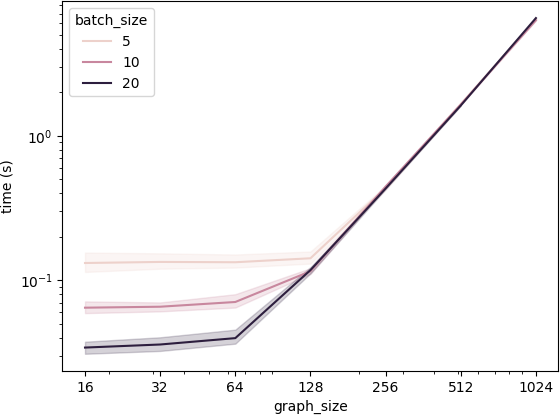}
  \caption{Average computing time of $\dWL[\delta][\epsilon]{\infty}{}$ with two sparse graphs of varying size with different batch sizes}%
    \label{subfig:performance:varying_batch}
\end{subfigure}%
\hfill
\begin{subfigure}{0.49\textwidth}
  \includegraphics[width=\textwidth]{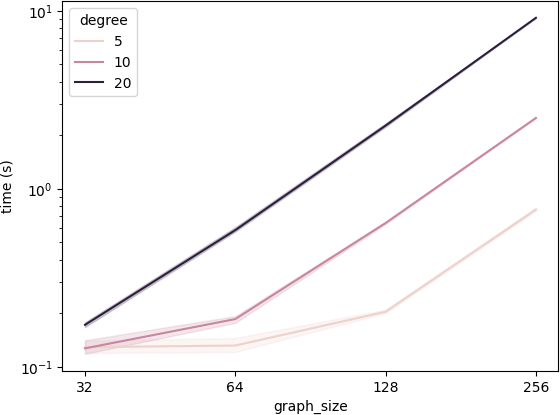}
  \caption{Average computing time of $\dWL[\delta][\epsilon]{\infty}{}$ with two sparse graphs of varying degree and size}%
    \label{subfig:performance:varying_degree}
\end{subfigure}
\end{center}
\caption{Performance analysis results on an Nvidia RTX A6000 GPU}\label{fig:performance}
\end{figure}

\paragraph{Influence of parameters} In this paragraph, we study the influence of the $\delta$ and $\epsilon$ parameters on the result of this experiment. We run this barycenter expeeriment while varying the values of $\epsilon$ and $\delta$, the results are shown in \figureref{fig:epsilon-delta}
This shows degenerated cases :
\begin{itemize}
\item $\delta = 1$ Our distance degenerates to the (regularized) Wasserstein distance between node label sets. Positions are learned through a Wasserstein barycenter problem, but the Markov kernels remain unlearned, with the resulting graph reflecting only random initialization.
\item high $\epsilon$ Overregularization occurs with high values of , hindering the learning process. The most extreme manifestation of this can be observed in the lower-right part of the grid, where all points are matched equally to each other, resulting in a graph with all nodes clustered at the center.
\item low $\delta$ A low value of $\delta$ introduces instability in learning. This is evident in the upper-left corner of the grid, where the learning process appears erratic.
\end{itemize}

\begin{figure}[htb]
    \centering
    \includegraphics[width=0.6\textwidth]{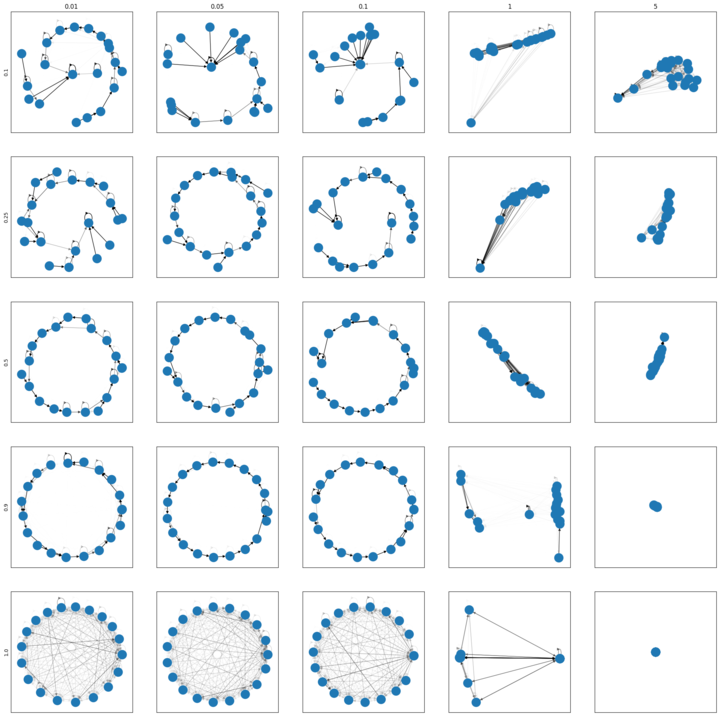}
    \caption{Same barycenter experiment ($n\_targets=20$, $p=0.01$), run with different values of $\epsilon$ (in abscissa) and $\delta$ in ordinate}
    \label{fig:epsilon-delta}
\end{figure}

\subsection{Graph Classification}\label{app:classification} 
The reported accuracy and error margins are the average and standard deviation from a stratified k-fold with 5 splits.
In terms of runtime, we measured an average of $0.18s$ to compute one discounted WL distance with parameters $\delta=0.5$, $\epsilon=0.1$ (using an Nvidia RTX A6000 graphics card, and an AMD EPYC 7452 32-Core Processor).
As a comparison, on the same dataset and hardware, FGW takes an average of $0.0099s$ (approximately $19$ times faster). Note that our algorithm runs on a GPU because it is easy to parallelize the many independent optimal transport computations while FGW runs on CPU.

\subsection{Graph Coarsening}\label{par:coarsening} 
We also carry out an experiment as a proof of concept on how the discounted WL distance can be used to coarsen graphs. The goal is to coarsen a simple oriented circle of $n=30$ nodes as in the barycenter experiment into a graph with a given number $m$ of nodes. In order to obtain a coarsening of size $m$, we minimize an objective on the space of Markov chains of size $m$, similarly to the barycenter experiment.

Let $M^\text{target}$ denote the Markov matrix of the target graph and let $M^\text{coarsened}$ denote the Markov matrix of the coarsened graph.
A natural objective would be to minimize the $\delta$-discounted WL distance between the original graph and the coarsened graph. 
This naive approach, however, does not yield good results. An explanation is the following:
if the coarsened graph is 4 times smaller (in terms of the number of nodes), then one step of random walk in the coarsened graph should intuitively correspond to 4 steps of random walks in the original graph.
In this way, one should think of a coarsened graph as a Markov chain with a 
larger ``time step'' than the original graph and hence one should think of the coarsened graph 
and the original graph induce Markov chains with different time scales.

To adapt to the different time scales, we propose to instead minimize the following objective
$$ L = \dWL<C(l)>[\delta][\epsilon]{\infty}{(M^\text{target})^k, M^\text{coarsened}}$$
where $k = \floor*{\frac{n}{m}}$ is the coarsening factor.

Where $l$ is the set of labels $x_i, y_i$ given to the nodes of the target, and $C(l)$ is the cost matrix so that$C(l)(i, j) = \norm{(x_i, y_i) - (x^\text{target}_j, y^\text{target}_j) }$

We use the same parametric markov kernels as in the barycenter experiment (\appendixref{par:parametric_markov_kernels}) and we minimize the objective using the Adam optimizer with a learning rate of $0.005$ and $3000$ iterations.
The results are outlined in \figureref{fig:coarsening}. We observe that the algorithm gives better results when the coarsened size $m$ is a divisor of the original size $n$. This hence implies an interesting question of how to coarsen graphs into arbitrary sizes. We leave this as a future work.

\begin{figure}
\begin{center}
  \includegraphics[width=\textwidth]{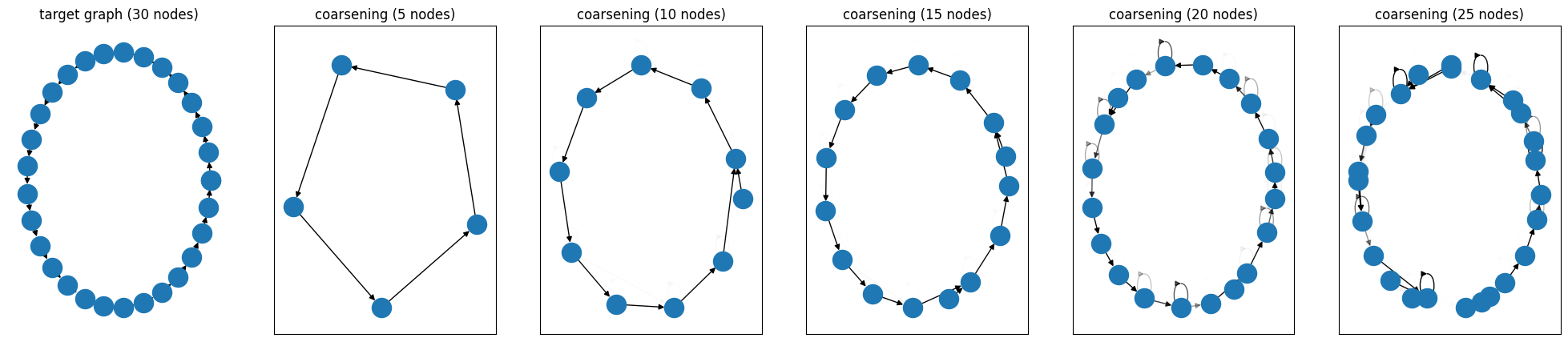}
\end{center}
\caption{Coarsening results on a circle graph of size 30. The original graph is on the left, the subsequent graphs are coarsenings of different sizes.}
\label{fig:coarsening}
\end{figure}

\section{New Results on the WL Distance}\label{sec: more on dWL}

In this section, we introduce some new results on the WL distance introduced by \citet{chen2022weisfeilerlehman}.
Those results justify our motivation for introducing new distances by showing some of the flaws we mentioned in Section~\ref{sec: limitation}.

Although the original WL distance is defined for Markov chains with stationary initial distributions, \cref{eq:dwlk} can be adapted to define a quantity for Markov chains with arbitrary initial distributions.
We can thus define the depth-$k$ WL distance for any Markov chains.

It turns out that the depth-$k$ WL distance can be also computed iteratively. We first introduce some constructions.

\begin{definition}\label{def:wlck}
Given $k\in\N$, we define $C^{(l)}$ for $l=0,\ldots,k$ recursively as follows.
    \begin{equation}C^{(0)}_{ij} = C_{ij},\label{eq:wl_costmatrix0}\end{equation}
    \begin{equation}C^{(l)}_{ij} = \dW<C^{(l-1)}>{m^{\setX}_i, m^{\setY}_j}. \label{eq:wl_costmatrix}\end{equation}
Note that $C^{(l)}=C^{\delta, (l)}$ when $\delta = 0$, where $C^{\delta, (l)}$ is the cost matrix involved in the computation of the discounted WL distance (see Proposition \ref{prop:wlregnotconstant}). 
\end{definition}
Those matrices $C^{(l)}$ coincides with the cost matrix computed in the $l$th iteration in \citep[Algorithm 1]{chen2022weisfeilerlehman} to compute the depth-$k$ WL distance (for a special type of initial cost function $C$) in the following way:
\[\dWL{k}{\mX,\mY}= \dW<C^{(k)}>{\nu^\setX,\nu^\setY}.\]

Notice that, in fact, those matrices above are themselves WL distances in a certain way. More precisely, for any $k\in\N$, one has that
\begin{equation} \dWL<C>{k}{(\setX,m^{\setX}_\bullet, \delta_i), (\setY,m^{\setY}_\bullet, \delta_j)}= \dW<C^{(k)}>{\delta_i,\delta_j}= C^{(k)}_{ij}.\label{eq:Cij}\end{equation}
We analyze $\dWL{k}{}$ when $k$ approaches $\infty$ as follows. 

\begin{proposition}[Convergence of $\dWL{k}{}$ is independent of initial distributions]\label{prop:cvwlinit}
  Given a finite set $\setX = \setY$, assume that $m^{\setX}_\bullet$ and $m^{\setY}_\bullet$ are \emph{irreducible} and \emph{aperiodic} Markov transition kernels. Assume also that the cost is defined as a pseudometric in $\setX$ (This is for example true if the cost is defined as in \citet{chen2022weisfeilerlehman} or in Remark~\ref{rmk:different dwl}.)
Then, for any $\nu^{\setX}\in\mathcal{P}(\setX)$ and $\nu^{\setY}\in\mathcal{P}(\setY)$, the limit
\begin{equation} \lim_{k\rightarrow\infty}\dWL{k}{(\setX,m^{\setX}_{\bullet}, \nu^{\setX}), (\setY,m^{\setY}_{\bullet}, \nu^{\setY})} \label{eq:indepdistrib}\end{equation}
exists and is independent of choices of $\nu^{\setX}\in\mathcal{P}(\setX)$ and $\nu^{\setY}\in\mathcal{P}(\setY)$.
\end{proposition}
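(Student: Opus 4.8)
The plan is to reduce the statement to a claim about the cost matrices $C^{(k)}$ alone, which---crucially---do \emph{not} depend on $\nu^{\setX}$ or $\nu^{\setY}$, and then to show that these matrices converge to a constant matrix. Recall from the (unlabeled) identity following Definition~\ref{def:wlck} that
\[
\dWL{k}{(\setX,m^{\setX}_{\bullet}, \nu^{\setX}), (\setY,m^{\setY}_{\bullet}, \nu^{\setY})} = \dW<C^{(k)}>{\nu^{\setX},\nu^{\setY}}.
\]
Since every optimal transport value $\dW<C'>{\alpha,\beta}$ is an expectation of entries of $C'$ under a coupling, it is sandwiched between $\min_{p,q}C'_{pq}$ and $\max_{p,q}C'_{pq}$. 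Hence, if $C^{(k)}\to c\,\mathds{1}$ (the matrix with all entries equal to some constant $c$), which is automatically independent of the initial distributions since $C^{(k)}$ is, then a squeeze argument yields $\dW<C^{(k)}>{\nu^{\setX},\nu^{\setY}}\to c$ for \emph{every} choice of $\nu^{\setX},\nu^{\setY}$, establishing existence and independence simultaneously. This reduces everything to the constant-matrix convergence, which is exactly the $\delta=0$ assertion of Proposition~\ref{prop:wlregnotconstant}; I would re-derive it here to expose where the three hypotheses enter.

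First I would establish a monotone sandwich of the entries. Applying the averaging observation above to the recursion $C^{(k+1)}_{ij}=\dW<C^{(k)}>{m^{\setX}_i,m^{\setY}_j}$ gives $\min_{p,q}C^{(k)}_{pq}\le C^{(k+1)}_{ij}\le \max_{p,q}C^{(k)}_{pq}$ for all $i,j$. Thus $k\mapsto\min_{p,q}C^{(k)}_{pq}$ is non-decreasing and $k\mapsto\max_{p,q}C^{(k)}_{pq}$ is non-increasing; both stay in $[0,\norm{C}_\infty]$ and hence converge, to limits $\underline{c}\le\overline{c}$. It then suffices to show the oscillation $\max_{p,q}C^{(k)}_{pq}-\min_{p,q}C^{(k)}_{pq}\to 0$, since this forces $\underline{c}=\overline{c}=:c$ and squeezes every $C^{(k)}_{ij}$ to $c$.

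The heart of the argument---and the main obstacle---is controlling this oscillation, and it is here that all three hypotheses ($\setX=\setY$, pseudometric cost, and irreducibility/aperiodicity) are used. By \cref{eq:Cij}, each entry $C^{(k)}_{ij}$ is itself a depth-$k$ WL distance between $(\setX,m^{\setX}_\bullet,\delta_i)$ and $(\setY,m^{\setY}_\bullet,\delta_j)$; since $\setX=\setY$ and $C$ is a pseudometric, all these chains have states in the common pseudometric space $(\setX,C)$, so this depth-$k$ WL distance (being an OTM distance, cf.\ Example~\ref{example:dwlk_otm}) is a pseudometric on them and obeys the triangle inequality. This gives, for any $i,i',j,j'$,
\[
\abs{C^{(k)}_{ij}-C^{(k)}_{i'j'}}\le \dWL{k}{(\setX,m^{\setX}_\bullet,\delta_i),(\setX,m^{\setX}_\bullet,\delta_{i'})}+\dWL{k}{(\setY,m^{\setY}_\bullet,\delta_j),(\setY,m^{\setY}_\bullet,\delta_{j'})},
\]
so it remains to show each intra-chain self-distance tends to $0$. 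For this I would exhibit an explicit coalescing Markovian coupling: run two copies of the same chain independently until they first meet, then let them move together (diagonally). Irreducibility and aperiodicity make the meeting time $\tau$ almost surely finite with $\Pr(\tau>k)\to 0$; after coalescence the two states coincide so their pseudometric cost is $0$, while before coalescence it is at most $\norm{C}_\infty$. Hence each self-distance is at most $\norm{C}_\infty\,\Pr(\tau>k)\to 0$, the oscillation vanishes, and therefore $C^{(k)}\to c\,\mathds{1}$. By the squeeze of the first paragraph, $\dW<C^{(k)}>{\nu^{\setX},\nu^{\setY}}\to c$ independently of $\nu^{\setX}$ and $\nu^{\setY}$, as claimed.
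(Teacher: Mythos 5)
Your proof is correct, and it takes a genuinely different route from the paper's, even though both rest on the same two ingredients: the coalescing (``classical'') coupling, which is where irreducibility and aperiodicity enter, and the pseudometric/triangle-inequality property of the WL distance from Proposition~\ref{prop:OTM_is_distance}, which is where $\setX=\setY$ and the pseudometric cost enter. The paper first proves Lemma~\ref{thm:wlinit} (the depth-$k$ self-distance between two copies of the \emph{same} kernel with arbitrary initial distributions tends to $0$), and then sandwiches the general quantity, via the triangle inequality, against the pair of unique \emph{stationary} distributions $(\mu^{\setX},\mu^{\setY})$; that pivot relies on the separately known fact that the limit exists in the stationary case (monotonicity of $\dWL{k}{}$ in $k$ there). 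You avoid stationary distributions entirely: working at the level of the matrices $C^{(k)}$ (i.e.\ Dirac initial distributions), you get convergence of $\min_{p,q}C^{(k)}_{pq}$ and $\max_{p,q}C^{(k)}_{pq}$ for free from the monotone sandwich, kill the oscillation with the triangle inequality plus the coalescing coupling applied to Dirac-initialized chains, and then the squeeze $\min_{p,q}C^{(k)}_{pq}\le \dW<C^{(k)}>{\nu^{\setX},\nu^{\setY}}\le \max_{p,q}C^{(k)}_{pq}$ gives existence and independence in one stroke. What your route buys: it is more self-contained (no appeal to the stationary case or to monotonicity in $k$), and it establishes Proposition~\ref{prop:Cconverges} en route --- indeed your oscillation argument is essentially the skeleton of the paper's later Theorem~\ref{thm:speedofconvergence}, minus the explicit exponential rate. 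What the paper's route buys: it identifies the limit as $\dWL{\infty}{(\setX,m^{\setX}_{\bullet},\mu^{\setX}),(\setY,m^{\setY}_{\bullet},\mu^{\setY})}$, whereas your constant $c$ remains unidentified. One small caveat: you assert rather than prove that the meeting time of the coalescing coupling satisfies $\Pr(\tau>k)\rightarrow 0$; this is classical for irreducible aperiodic finite chains, but a complete write-up should prove it (as the paper does for \cref{eq:exponential growth}) or cite a precise reference.
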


Note that this property is only true for \emph{irreducible} and \emph{aperiodic} Markov chains. 
This is a common property in the study of finite Markov chains. See for example \citep{levin2017markov} for a good introduction to that theory.
A Markov chain is irreducible if all states can be reached from any other state (including itself) in finite positive number of steps (or equivalently if the graph) with positive probability.
A Markov chain is aperiodic if for any state $s$, one has that $\gcd \{k\in\N:\,  s \text{ can be reached from } s \text{ in time } k  \} = 1$.
Irreducibility and aperiodicity ensure the existence of a unique stationary distribution for a finite Markov chain~\citep[Corollary 1.17]{levin2017markov} and its convergence towards that distribution~\citep[Theorem 4.9]{levin2017markov}. 

In this way, for any irreducible and aperiodic finite Markov chains $\mX,\mY$, we redefine $\dWL{\infty}{}$ (introduced in \cref{eq:alternatewldefinition}) as
\[\dWL<C>{\infty}{\mX,\mY}:=\lim_{k\rightarrow \infty}\dWL<C>{k}{\mX,\mY}.\]
By the proposition above, we know that $\dWL{\infty}{}$ is independent of choice of initial distributions. In particular,
when the initial distributions are stationary, this new definition coincides with the original definition in \cite{chen2022weisfeilerlehman} since in this case $\dWL{k}{\mX,\mY}$ is increasing w.r.t. $k$.

\subsection{Proof of Proposition~\ref{prop:cvwlinit}}

In this section, we prove Proposition~\ref{prop:cvwlinit}. 
The proof of Proposition~\ref{prop:cvwlinit} is based on the following observation.

\begin{lemma}[$\dWL{\infty}{}$ does not distinguish initial distributions with the same transitions]\label{thm:wlinit} 
Let $\setX$ be a finite set and let $m^{\setX}_\bullet$ denote an irreducible and aperiodic Markov transition kernel on $\setX$. Assume the assumptions of Proposition~\ref{prop:cvwlinit}.

Then, for any  \(\nu_1,\nu_2\in\mathcal{P}(\setX)\), one has that
\[\lim_{k\rightarrow\infty}\dWL<C>{k}{(\setX,m^{\setX}_\bullet, \nu_1), (\setX,m^{\setX}_\bullet, \nu_2)} = 0,\]
where the cost matrix is defined under the assumptions of Proposition~\ref{prop:cvwlinit} as $C(x,x') = d_{\bm{X}}(x, x')$
\end{lemma}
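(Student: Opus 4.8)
The plan is to prove this by an explicit \emph{coalescing coupling}. Because both chains carry the \emph{same} transition kernel $m^{\setX}_\bullet$, two copies started from $\nu_1$ and $\nu_2$ can be forced to meet and thereafter move in lockstep; since $\dWL{k}{}$ is an infimum over Markovian couplings, any single coupling furnishes an \emph{upper} bound, and as $C=d_{\setX}\ge 0$ this is all we need to drive the limit to $0$.

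First I would build the coupling $(X_t,Y_t)_{t\in\N}$ as follows. Draw $(X_0,Y_0)\sim\nu_1\otimes\nu_2\in\mathcal{C}(\nu_1,\nu_2)$. At each later step, if $X_t\neq Y_t$ transition by the independent (product) kernel $m^{\setX}_{X_t}\otimes m^{\setX}_{Y_t}$, and if $X_t=Y_t=x$ transition by the diagonal coupling of $m^{\setX}_x$ with itself, so that $X_{t+1}=Y_{t+1}$. Each per-step kernel is a genuine element of $\mathcal{C}(m^{\setX}_{X_t},m^{\setX}_{Y_t})$, so this is a valid (time-homogeneous) Markovian coupling in $\Pi(\mathcal{X},\mathcal{Y})$. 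Using it together with $d_{\setX}(x,x)=0$ yields
\[
\dWL<C>{k}{(\setX,m^{\setX}_\bullet,\nu_1),(\setX,m^{\setX}_\bullet,\nu_2)}
\le \E\, d_{\setX}(X_k,Y_k)
\le \diam(\setX)\cdot\Pr(X_k\neq Y_k).
\]

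Next I would control $\Pr(X_k\neq Y_k)$ through the meeting time $\tau:=\inf\{t: X_t=Y_t\}$. By construction the coordinates never separate once they agree, so $\{X_k\neq Y_k\}=\{\tau>k\}$, and it remains to show $\Pr(\tau>k)\to 0$. Before meeting, $(X_t,Y_t)$ runs as the product chain on $\setX\times\setX$ with kernel $m^{\setX}_\bullet\otimes m^{\setX}_\bullet$, and $\tau$ is precisely its hitting time of the diagonal. The crucial point is that this product chain is irreducible on the finite space $\setX\times\setX$: since $m^{\setX}_\bullet$ is irreducible and aperiodic there is $N\in\N$ such that all $n$-step transition probabilities of $m^{\setX}_\bullet$ are strictly positive for every $n\ge N$; hence the same holds entrywise for the product kernel, and the product chain is primitive, in particular irreducible. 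An irreducible finite chain hits any nonempty set — here the diagonal — in finite time almost surely with a geometrically decaying tail, so $\Pr(\tau>k)\to 0$.

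Combining the two displays gives $\limsup_{k\to\infty}\dWL{k}{(\setX,m^{\setX}_\bullet,\nu_1),(\setX,m^{\setX}_\bullet,\nu_2)}\le \diam(\setX)\lim_{k\to\infty}\Pr(\tau>k)=0$, and nonnegativity of the WL distances upgrades this to $\lim_k\dWL{k}{}=0$. I expect the one genuine obstacle to be the irreducibility of the product chain: this is exactly where \emph{aperiodicity} of $m^{\setX}_\bullet$ is indispensable — two independent copies of a periodic chain can fail to synchronize their phases and never meet — and it is the standard finite Markov chain fact (cf. \citet{levin2017markov}) underlying convergence to stationarity. As an alternative route one could instead invoke the recursion $C^{(l)}$ of Definition~\ref{def:wlck}: a one-line induction using $d_{\setX}(x,x)=0$ and the diagonal coupling shows $C^{(k)}_{ii}=0$ for all $i$ and $k$, so if $C^{(k)}$ converges to a constant matrix (as asserted for $\delta=0$ in Proposition~\ref{prop:wlregnotconstant}) that constant must be $0$, whence $\dW<C^{(k)}>{\nu_1,\nu_2}\le\max_{ij}C^{(k)}_{ij}\to 0$.
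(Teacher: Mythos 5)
Your main argument is correct and is essentially the paper's own proof: the same coalescing (``classical'') coupling --- independent transitions until the two chains meet, lockstep afterwards --- followed by the bound $\dWL{k}{}\le \norm{C}_\infty\,\Pr(X_k\neq Y_k)$ and a geometric-tail estimate for the meeting time, where the paper proves that tail bound inline (via the block argument with $t_0$ and $\epsilon$, itself resting on the same primitivity fact from \citet{levin2017markov} that you invoke), while you outsource it to the standard facts that aperiodicity makes the product chain irreducible and that hitting times of finite irreducible chains have geometric tails. One caution: your ``alternative route'' via the $\delta=0$ case of Proposition~\ref{prop:wlregnotconstant} is circular within the paper's logical structure, since that case is proved by Proposition~\ref{prop:Cconverges}, which relies on Proposition~\ref{prop:cvwlinit}, which in turn relies on the present lemma; it should not be offered as a substitute proof.
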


\begin{proof}\label{proof:wlinit} 
We use symbols $(X_t)_{t\in\N}$ and $(Y_t)_{t\in\N}$ to denote realizations for the two Markov chains $(\setX,m^{\setX}_\bullet, \nu_1)$ and $ (\setX,m^{\setX}_\bullet, \nu_2)$, respectively.
Then, by definition we have that
\begin{equation}
\dWL{k}{(\setX,m^{\setX}_\bullet, \nu_1), (\setX,m^{\setX}_\bullet, \nu_2)} = 
\inf_{\text{Markovian coupling }(X_t,Y_t)_{t\in\N}}
\E C(X_k,Y_k).
\end{equation}

Consider a stochastic process \((X_t, Y_t)_{t\in\N}\) defined as follows:
\[
\begin{cases}
\text{if } X_t \neq Y_t, &\text{ then } \begin{cases} 
  X_{t+1} \sim m^{\setX}_{X_t} \\
  Y_{t+1} \sim m^{\setX}_{Y_t}
  \end{cases}  \text{ independently;}
  \\
\text{if } X_t = Y_t, &\text{ then } X_{t+1} = Y_{t+1} \sim m^{\setX}_{X_t}.
\end{cases}
\]
Then, \((X_t, Y_t)_{t\in\N}\) is clearly a time homogeneous Markovian coupling. This coupling has been used for studying convergence of Markov chains, 
and often called the "classical coupling" (for example, by \citet{Griffeath1975}) since it has the following property:

\begin{equation}
\lim_{k\rightarrow\infty}\Pr(X_k \neq Y_k) = 0.\label{eq:stationarycv}\end{equation}
For completeness we provide a proof of the equation above later. 
In fact, we prove something stronger: there exists $0 \leq \rho \leq 1,t_0\in N,0<\epsilon \leq 1$ depending on the two Markov chains such that
\begin{equation}\label{eq:exponential growth}
    \Pr(X_{t} \neq Y_{t})<(1-\epsilon)^{\lfloor\frac{t}{t_0}\rfloor}\rho.
\end{equation}
This equation implies that \(\lim_{t\rightarrow\infty}\Pr(X_{t} \neq Y_{t})\) converges to 0
exponentially fast. 
Given this equation, we have that
\begin{align}
    &\dWL{k}{(\setX,m^{\setX}_\bullet, \nu_1), (\setX,m^{\setX}_\bullet, \nu_2)} 
\leq \E C(X_k, Y_k) \\
&=\Pr(X_k = Y_k)\times 0 + {\Pr(X_k \neq Y_k)} 
\E({C(X_k,Y_k)} 
\mid X_k \neq Y_k)\\
&\leq (1-\epsilon)^{\lfloor\frac{k}{t_0}\rfloor}\rho\cdot \|C\|_\infty.\label{eq:dwl exp growth}
\end{align}
Hence, 
\[\lim_{k\rightarrow\infty}\dWL{k}{(\setX,m^{\setX}_\bullet, \nu_1), (\setX,m^{\setX}_\bullet, \nu_2)} = 0.\]
Now, we finish the proof by proving \cref{eq:stationarycv}.

\begin{proof}[Proof of \cref{eq:exponential growth}]\label{proof:eqstationarycv}
Let
\(S = \setX \times \setX\) denote the state space of the Markovian
coupling \((X_t, Y_t)_{t\in\N}\).
Let \(E := \{ (x, x):\, x\in \setX\} \subseteq S\).
For any state \(s_0 = (x_0, y_0) \in S\), we define a stopping time as follows
\[T_S := \inf\{t:\, X_t = Y_t \}.\]
We know that if \((X_t, Y_t) \in E\), then all
the subsequent elements also are (by definition of the coupling).

Let \(t_0^{s_0} := \inf\{t:\, \Pr(T_S \leq t \mid (X_0, Y_0) = s_0) > 0\}\) for any $s_0\in S$. 
Note that
\begin{itemize}
\item
  if \(s^0\) in \(E\), then \(t_0^{s_0} = 0\) trivially;
\item
 since $m^\setX_\bullet$ is irreducible and aperiodic, one has that \(t_0^{s_0} < \infty\) for any $s_0\in S$: suppose (by contradiction) that \(t_0^{s_0} = \infty\) for some $s_0=(x_0,y_0)\in S\backslash E$. Let $x\in \setX$ be any state. Given that $(X_0, Y_0) = s_0$, $X_t\neq Y_t$ for all $t\in \N$ almost surely. Then, by \citep[Proposition 1.7]{levin2017markov},
there exists $r_0\in\N$ so that $\forall t \geq r_0, \Pr(X_t = x | X_0 = x_0) > 0 ~\text{and}~ \Pr(Y_t = x | Y_0 = y_0) > 0$.
By definition, since $X_t\neq Y_t$ for all $t\geq 1$, $X_t$ is independent of $Y_t$. Then,
\[\Pr(X_t=x, Y_t=x | (X_0, Y_0) = s_0)= \Pr(X_t=x | (X_0, Y_0) = s_0)\Pr(Y_t=x | (X_0, Y_0) = s_0)> 0.\]
This contradicts the fact that $X_t\neq Y_t$ for all $t\in \N$ almost surely. Hence, $t_0^{s_0}<\infty$ for all $s_0$.
\end{itemize}

Let \(t_0 := \max_{s_0} t_0^{s_0}\), and
\(\epsilon := \inf_{s_0} \Pr(T_S \leq t_0 \mid (X_0, Y_0) = s_0)\). Note that $\epsilon>0$ since
\(t_0\geq t_0^{s_0}\). {Furthermore, $t_0$ and $\epsilon$ are independent of initial distributions.}
Then, for any \(n\in\N\), we have that
\begin{align*}
    &\Pr(X_{(n + 1)t_0} \neq Y_{(n+1)t_0})\\
    &=\underbrace{\Pr(X_{(n + 1)t_0} \neq Y_{(n+1)t_0} \mid 
  X_{nt_0} = Y_{nt_0})}_{=0} \times\Pr(X_{nt_0} = Y_{nt_0}) \\
  & + {\Pr(X_{(n + 1)t_0} \neq Y_{(n+1)t_0} \mid 
  X_{nt_0} \neq Y_{nt_0})} \times\Pr(X_{nt_0} \neq Y_{nt_0})\\
  & =  \sum_{s_0\in S\backslash E} \frac{\Pr(X_{(n + 1)t_0} \neq Y_{(n+1)t_0} \mid 
  (X_{nt_0}, Y_{nt_0})=s_0) \times \Pr( (X_{nt_0}, Y_{nt_0})=s_0) }{\Pr(X_{nt_0} \neq Y_{nt_0}) }\times\Pr(X_{nt_0} \neq Y_{nt_0}) \\
  & =  \sum_{s_0\in S\backslash E} {\Pr(T_S>t_0 \mid 
  (X_{0}, Y_{0})=s_0) \times \Pr( (X_{nt_0}, Y_{nt_0})=s_0) } \\
  &\;< (1 - \epsilon) \Pr(X_{nt_0} \neq Y_{nt_0}).
\end{align*}
We let $\rho:=\Pr(X_{0} \neq Y_{0})$. Assume that $\rho>0$ (otherwise $\nu_1=\nu_2$ and the conclusion holds trivially).
Then, by the computation above, one has that
\begin{equation*}
    \Pr(X_{nt_0} \neq Y_{nt_0})<(1-\epsilon)^n\rho.
\end{equation*}
By our construction of the Markovian coupling $(X_t,Y_t)_{t\in\N}$, we know that \(\Pr(X_t \neq Y_t)\) is decreasing in $t$ since
\(X_t = Y_t \implies X_{t+1} = Y_{t+1}\). Hence, for any $t\in\N$, we have that 
\begin{equation*}
    \Pr(X_{t} \neq Y_{t})<(1-\epsilon)^{\lfloor\frac{t}{t_0}\rfloor}\rho.
\end{equation*}
This concludes the proof.
\end{proof}
\end{proof}

Now Proposition~\ref{prop:cvwlinit} follows directly from the lemma above.
\begin{proof}[Proof of Proposition~\ref{prop:cvwlinit}] 
Let $\mu^{\setX}$ and $\mu^{\setY}$ be the unique stationary distributions for $m^{\setX}_\bullet$ and $m^{\setY}_\bullet$, respectively.
Then, by the triangle inequality we have that
\begin{align*}
    \limsup_{k\rightarrow \infty}\dWL{k}{(\setX,m^{\setX}_{\bullet}, \nu^{\setX}), (\setY,m^{\setY}_{\bullet}, \nu^{\setY})} &\leq \underbrace{\limsup_{k\rightarrow \infty}\dWL{k}{(\setX,m^{\setX}_{\bullet}, \nu^{\setX}), (\setX,m^{\setX}_{\bullet}, \mu^{\setY})}}_{0}\\
 &\,+ \limsup_{k\rightarrow \infty}\dWL{k}{(\setX,m^{\setX}_{\bullet}, \mu^{\setX}), (\setY,m^{\setY}_{\bullet}, \mu^{\setY})}\\
 &\,+ \underbrace{\limsup_{k\rightarrow \infty}\dWL{k}{(\setY,m^{\setY}_{\bullet}, \mu^{\setY}), (\setY,m^{\setY}_{\bullet}, \nu^{\setY})}}_{0} \\
 &=\dWL{\infty}{(\setX,m^{\setX}_{\bullet}, \mu^{\setX}), (\setY,m^{\setY}_{\bullet}, \mu^{\setY})}.
\end{align*}
Similarly,
\begin{align*}
    \dWL{\infty}{(\setX,m^{\setX}_{\bullet}, \mu^{\setX}), (\setY,m^{\setY}_{\bullet}, \mu^{\setY})} &=\liminf_{k\rightarrow \infty}\dWL{k}{(\setX,m^{\setX}_{\bullet}, \mu^{\setX}), (\setY,m^{\setY}_{\bullet}, \mu^{\setY})}\\ &\leq \underbrace{\liminf_{k\rightarrow \infty}\dWL{k}{(\setX,m^{\setX}_{\bullet}, \mu^{\setX}), (\setX,m^{\setX}_{\bullet}, \nu^{\setY})}}_{0}\\
 &\,+ \liminf_{k\rightarrow \infty}\dWL{k}{(\setX,m^{\setX}_{\bullet}, \nu^{\setX}), (\setY,m^{\setY}_{\bullet}, \nu^{\setY})}\\
 &\,+ \underbrace{\liminf_{k\rightarrow \infty}\dWL{k}{(\setY,m^{\setY}_{\bullet}, \nu^{\setY}), (\setY,m^{\setY}_{\bullet}, \mu^{\setY})}}_{0} \\
 &=\liminf_{k\rightarrow\infty}\dWL{\infty}{(\setX,m^{\setX}_{\bullet}, \nu^{\setX}), (\setY,m^{\setY}_{\bullet}, \nu^{\setY})}.
\end{align*}
Therefore, $\lim_{k\rightarrow \infty}\dWL{k}{(\setX,m^{\setX}_{\bullet}, \nu^{\setX}), (\setY,m^{\setY}_{\bullet}, \nu^{\setY})} $ exists furthermore, for any $\nu^{\setX}$ and $\nu^{\setY}$, we have that
\[\lim_{k\rightarrow \infty}\dWL{k}{(\setX,m^{\setX}_{\bullet}, \nu^{\setX}), (\setY,m^{\setY}_{\bullet}, \nu^{\setY})}=\dWL{\infty}{(\setX,m^{\setX}_{\bullet}, \mu^{\setX}), (\setY,m^{\setY}_{\bullet}, \mu^{\setY})}.\]

\end{proof}

\hypertarget{consequences}{%
\subsection{Convergence of the WL Distance}\label{sec:convergence_wl}}
In this section, we establish that some convergence results on $\dWL{k}{}$ as $k$ increases.

We assume that $m^{\setX}_{\bullet}$ and $m^{\setX}_{\bullet}$ are irreducible and aperiodic Markov transition kernels on $\setX$ and $\setY$, respectively. Let $\mu^\setX$ and $\mu^\setY$ denote their respective unique stationary distributions. We also assume that the cost matrix is defined as in Proposition~\ref{prop:cvwlinit}.
\begin{proposition}\label{prop:Cconverges} 
\((C^{(k)})_{k\in\N}\) converges to a constant matrix.
\end{proposition}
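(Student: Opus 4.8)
The plan is to deduce the statement directly from Proposition~\ref{prop:cvwlinit}, by reading each entry of $C^{(k)}$ as a depth-$k$ WL distance. The key link is the identity \cref{eq:Cij}, which says that for every pair of states $i,j$ one has $C^{(k)}_{ij} = \dWL{k}{(\setX, m^{\setX}_\bullet, \delta_i), (\setY, m^{\setY}_\bullet, \delta_j)}$; that is, the $(i,j)$-entry of $C^{(k)}$ is exactly the depth-$k$ WL distance between the two chains launched from the Dirac masses $\delta_i$ and $\delta_j$. So the matrix $C^{(k)}$ is really just the array of WL distances between all pairs of point-mass initializations, and the claim that it becomes constant is the claim that these distances all forget their starting states in the limit.

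First I would confirm that the standing hypotheses of this subsection --- $m^{\setX}_\bullet, m^{\setY}_\bullet$ irreducible and aperiodic, and $C$ a pseudometric --- are precisely those required by Proposition~\ref{prop:cvwlinit}. Then I would apply that proposition with the specific initial distributions $\nu^{\setX} = \delta_i$ and $\nu^{\setY} = \delta_j$: it guarantees both that $\lim_{k\rightarrow\infty}\dWL{k}{(\setX, m^{\setX}_\bullet, \delta_i), (\setY, m^{\setY}_\bullet, \delta_j)}$ exists, and --- crucially --- that its value does not depend on the chosen initial distributions. Combining this with \cref{eq:Cij}, every entry $\lim_{k\rightarrow\infty} C^{(k)}_{ij}$ exists and equals one common value $c$ that is independent of $i$ and $j$, so $C^{(k)}$ converges entrywise to the constant matrix all of whose entries equal $c$.

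There is essentially no remaining obstacle: the substantive content --- exponential forgetting of the initial distribution via the classical coupling --- has already been carried out in Lemma~\ref{thm:wlinit} and packaged into Proposition~\ref{prop:cvwlinit}. The only point that warrants a moment's care is that the independence-of-initial-distribution conclusion is \emph{uniform} over all choices, so that it applies simultaneously to every pair of Dirac masses $(\delta_i, \delta_j)$ at once; this is exactly what Proposition~\ref{prop:cvwlinit} asserts, so the constancy of the limiting matrix follows immediately. (If one preferred a self-contained argument, one could instead invoke the classical coupling of Lemma~\ref{thm:wlinit} directly on each pair $(\delta_i,\delta_j)$, but routing through Proposition~\ref{prop:cvwlinit} is the shortest path.)
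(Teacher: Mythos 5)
Your proof is correct and follows exactly the paper's own argument: the paper likewise combines the identity in \cref{eq:Cij} with Proposition~\ref{prop:cvwlinit} to conclude that every entry $C^{(k)}_{ij}$ converges to the common value $\dWL{\infty}{(\setX,m^{\setX}_{\bullet}, \mu^{\setX}), (\setY,m^{\setY}_{\bullet}, \mu^{\setY})}$, independent of $(i,j)$. No gaps; your remark about the uniformity of the limit over all Dirac initializations is precisely what Proposition~\ref{prop:cvwlinit} delivers.
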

\begin{proof}
    By \cref{eq:Cij} and Proposition~\ref{prop:cvwlinit}, one has that for any $i,j$,
    \[\lim_{k\rightarrow\infty}C^{(k)}_{ij}=\lim_{k\rightarrow \infty}\dWL{k}{(\setX,m^{\setX}_{\bullet}, \delta_i), (\setY,m^{\setY}_{\bullet}, \delta_j)}=\dWL{\infty}{(\setX,m^{\setX}_{\bullet}, \mu^{\setX}), (\setY,m^{\setY}_{\bullet}, \mu^{\setY})}.\]
\end{proof}

In fact, we can provide an estimate of the convergence rate of $\dWL{k}{}$ in the case when $C$ is a pseudometric.
\begin{theorem}[$C^{(k)}$ converges exponentially with a pseudometric cost]\label{thm:speedofconvergence}
Suppose that $\setX = \setY$ is a pseudometric space, and that $C := d_\setX$ is the pseudometric on $\setX$.
If we let 
\[c:=\dWL{\infty}{(\setX,m^{\setX}_{\bullet}, \mu^{\setX}), (\setY,m^{\setY}_{\bullet}, \mu^{\setY})},\]
then there exists a rate of convergence $0 \leq  \rho < 1 $ dependent on $m_\bullet^\setX$ and $m_\bullet^\setY$ such that
\begin{equation}
  \forall (i, j), \,|C^{(k)}_{ij} -  c| \leq 2\rho^k \norm{C}_\infty. \label{eq:speedofconvergence}
\end{equation}
\end{theorem}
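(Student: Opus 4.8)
The plan is to control the \emph{spread} of the matrix $C^{(k)}$ and then squeeze $c$ inside it, rather than to estimate the distance to $c$ directly. First I would record two elementary monotonicity facts about the iteration \cref{eq:wl_costmatrix}. Writing $a_k := \min_{ij} C^{(k)}_{ij}$ and $b_k := \max_{ij} C^{(k)}_{ij}$, observe that each entry $C^{(k+1)}_{ij} = \dW<C^{(k)}>{m^{\setX}_i, m^{\setY}_j}$ is a transport cost, hence a convex combination of entries of $C^{(k)}$, so $a_k \leq C^{(k+1)}_{ij} \leq b_k$ for all $i,j$. Consequently $a_k$ is non-decreasing and $b_k$ is non-increasing; since every entry converges to the common value $c$ by Proposition~\ref{prop:Cconverges}, both sequences converge to $c$, and therefore $a_k \leq c \leq b_k$ for every $k$. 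This yields the key reduction
\[
\abs{C^{(k)}_{ij} - c} \leq b_k - a_k \qquad \text{for all } (i,j),
\]
so it suffices to bound the spread $b_k - a_k$.

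Second, I would bound the spread using the triangle inequality for $\dWL{k}{}$, which is legitimate because $\dWL{k}{}$ is an OTM distance (Example~\ref{example:dwlk_otm}) and OTM distances are pseudometrics on Markov chains over the common pseudometric space $(\setX, d_{\setX})$. Let $(i^\star,j^\star)$ and $(i_\star,j_\star)$ attain $b_k$ and $a_k$. Using the identity \cref{eq:Cij} to read these entries as WL distances and inserting the two intermediate chains $(\setX,m^{\setX}_\bullet,\delta_{i_\star})$ and $(\setY,m^{\setY}_\bullet,\delta_{j_\star})$, the (reverse) triangle inequality gives
\[
b_k - a_k \leq \dWL{k}{(\setX,m^{\setX}_\bullet, \delta_{i^\star}), (\setX,m^{\setX}_\bullet, \delta_{i_\star})} + \dWL{k}{(\setY,m^{\setY}_\bullet, \delta_{j_\star}), (\setY,m^{\setY}_\bullet, \delta_{j^\star})}.
\]
Each term is now a WL distance between two Markov chains sharing the \emph{same} (irreducible, aperiodic) transition kernel but differing only in initial distribution, which is exactly the setting of Lemma~\ref{thm:wlinit}.

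Third, I would invoke the exponential estimate established inside the proof of Lemma~\ref{thm:wlinit} (see \cref{eq:exponential growth,eq:dwl exp growth}): for the classical coupling of a chain with kernel $m^{\setX}_\bullet$ there exist constants $\epsilon_{\setX}\in(0,1]$ and $t_0^{\setX}\in\N$, \emph{independent of the initial distributions}, such that $\dWL{k}{(\setX,m^{\setX}_\bullet, \delta_{i^\star}), (\setX,m^{\setX}_\bullet, \delta_{i_\star})} \leq (1-\epsilon_{\setX})^{\lfloor k/t_0^{\setX}\rfloor}\norm{C}_\infty$, and symmetrically for $m^{\setY}_\bullet$ with constants $\epsilon_{\setY},t_0^{\setY}$. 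Substituting these into the spread bound and choosing a suitable $\rho<1$ built from $\max\bigl((1-\epsilon_{\setX})^{1/t_0^{\setX}},(1-\epsilon_{\setY})^{1/t_0^{\setY}}\bigr)$ produces $\abs{C^{(k)}_{ij}-c}\leq 2\rho^k\norm{C}_\infty$, which is \cref{eq:speedofconvergence}; the factor $2$ comes from the two chains. The conceptual core is short once Lemma~\ref{thm:wlinit} is in hand, so the main difficulty I expect is purely the bookkeeping of constants: converting the two geometric rates carrying floor exponents $\lfloor k/t_0^{\bullet}\rfloor$ into a single clean rate $\rho^k$, absorbing the prefactors $(1-\epsilon_\bullet)^{-1}$ and handling the small-$k$ regime (where one falls back on the trivial bound $\abs{C^{(k)}_{ij}-c}\leq\norm{C}_\infty$, valid since all entries of $C^{(k)}$ and $c$ lie in $[0,\norm{C}_\infty]$) into the constant $2$ and a possibly enlarged $\rho$.
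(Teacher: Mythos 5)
Your proposal is correct and follows essentially the same route as the paper's proof: both bound the spread $\max_{ij} C^{(k)}_{ij} - \min_{ij} C^{(k)}_{ij}$ via the triangle inequality for $\dWL{k}{}$ (justified by Proposition~\ref{prop:OTM_is_distance} and \cref{eq:Cij}), invoke the exponential estimate \cref{eq:dwl exp growth} from the proof of Lemma~\ref{thm:wlinit}, and squeeze $c$ between the monotone sequences $\min_{ij} C^{(k)}_{ij}$ and $\max_{ij} C^{(k)}_{ij}$ using the fact that an optimal transport cost lies between the extremes of its cost matrix. If anything, you are more careful than the paper about converting the floor exponent $(1-\epsilon)^{\lfloor k/t_0\rfloor}$ into a clean geometric rate $\rho^k$, a step the paper's write-up glosses over.
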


As a direct consequence, we have that
\begin{corollary}
For any initial distributions $\nu^{\setX}$ and $\nu^{\setY}$, the quantity
$\dWL{\infty}{(\setX,m^{\setX}_{\bullet}, \nu^{\setX}), (\setY,m^{\setY}_{\bullet}, \nu^{\setY})}$ converges to $\dWL{\infty}{(\setX,m^{\setX}_{\bullet}, \mu^{\setX}), (\setY,m^{\setY}_{\bullet}, \mu^{\setY})}$ \emph{exponentially} fast.
\end{corollary}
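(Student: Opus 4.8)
The statement should be read as follows: for arbitrary initial distributions $\nu^{\setX},\nu^{\setY}$, the depth-$k$ quantity $\dWL{k}{(\setX,m^{\setX}_{\bullet}, \nu^{\setX}), (\setY,m^{\setY}_{\bullet}, \nu^{\setY})}$ converges, as $k\to\infty$, to the common limit $c := \dWL{\infty}{(\setX,m^{\setX}_{\bullet}, \mu^{\setX}), (\setY,m^{\setY}_{\bullet}, \mu^{\setY})}$ at an exponential rate. That the limit equals this $c$ and is independent of the chosen initial distributions is exactly the content of Proposition~\ref{prop:cvwlinit}; the only thing left to establish is the rate. The plan is to lift the uniform entrywise estimate of Theorem~\ref{thm:speedofconvergence} to the optimal-transport level, exploiting that the transport cost is linear in the cost matrix.

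First I would recall the identity $\dWL{k}{(\setX,m^{\setX}_{\bullet}, \nu^{\setX}), (\setY,m^{\setY}_{\bullet}, \nu^{\setY})} = \dW<C^{(k)}>{\nu^{\setX},\nu^{\setY}}$, which expresses the depth-$k$ WL distance as the optimal transport cost between the two initial distributions under the cost matrix $C^{(k)}$. Theorem~\ref{thm:speedofconvergence} then supplies the uniform bound $|C^{(k)}_{ij}-c|\leq 2\rho^{k}\norm{C}_\infty$ for every pair $(i,j)$, with a single rate $0\leq\rho<1$ depending only on the transition kernels $m^{\setX}_{\bullet},m^{\setY}_{\bullet}$.

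The core observation is that every coupling has unit total mass. For any $\mu\in\mathcal{C}(\nu^{\setX},\nu^{\setY})$ we have $\sum_{ij}\mu_{ij}=1$, so the constant matrix $c\,\mathbf{1}$ (all entries equal to $c$) transports at cost exactly $c$ against $\mu$; subtracting and applying the triangle inequality entrywise gives
\[
\Bigl|\textstyle\sum_{ij}\mu_{ij}C^{(k)}_{ij} - c\Bigr|
= \Bigl|\textstyle\sum_{ij}\mu_{ij}\bigl(C^{(k)}_{ij}-c\bigr)\Bigr|
\leq \sum_{ij}\mu_{ij}\,\bigl|C^{(k)}_{ij}-c\bigr|
\leq 2\rho^{k}\norm{C}_\infty,
\]
a two-sided bound holding uniformly over all couplings $\mu$. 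Since the numbers $\sum_{ij}\mu_{ij}C^{(k)}_{ij}$ all lie in the interval $[\,c-2\rho^{k}\norm{C}_\infty,\; c+2\rho^{k}\norm{C}_\infty\,]$, so does their infimum $\dW<C^{(k)}>{\nu^{\setX},\nu^{\setY}}$, whence
\[
\bigl|\dWL{k}{(\setX,m^{\setX}_{\bullet}, \nu^{\setX}), (\setY,m^{\setY}_{\bullet}, \nu^{\setY})} - c\bigr|\leq 2\rho^{k}\norm{C}_\infty,
\]
which is precisely exponential convergence with the same $\rho$ as in Theorem~\ref{thm:speedofconvergence}.

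There is essentially no obstacle here: the only point worth stating carefully is that a uniform two-sided bound may be passed through the infimum defining the Wasserstein distance, which is immediate since an infimum of numbers confined to a closed interval remains in that interval. The entire mechanism is that perturbing the cost matrix uniformly away from the constant matrix $c\,\mathbf{1}$ perturbs the optimal transport value by at most the same amount, independently of the coupling and hence independently of $\nu^{\setX},\nu^{\setY}$.
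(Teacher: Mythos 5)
Your proof is correct and takes essentially the same route as the paper, which states the corollary as a direct consequence of Theorem~\ref{thm:speedofconvergence} via the identity $\dWL{k}{(\setX,m^{\setX}_{\bullet}, \nu^{\setX}), (\setY,m^{\setY}_{\bullet}, \nu^{\setY})} = \dW<C^{(k)}>{\nu^{\setX},\nu^{\setY}}$ and the uniform entrywise bound on $C^{(k)}$. Your unit-mass computation over couplings is just an inline re-derivation of the paper's Lemma~\ref{lemma:ot1lip} (1-Lipschitzness of optimal transport in the cost matrix) applied against the constant matrix with value $c$, so nothing is missing.
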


\begin{proof}[Proof of theorem \ref{thm:speedofconvergence}]
  Since $C$ is assumed to be a pseudometric, and $\dWL{k}{}$ is an OTM distance, using Proposition~\ref{prop:OTM_is_distance}, we can use the triangular inequality on $\dWL{k}{}$.
Let $1 \leq i, u \leq n$, $1 \leq j, l \leq m$, and $k>0$. Then, one has that
\begin{align*}
  \mathllap{|C^{(k)}_{ij} - C^{(k)}_{ul}|}& =\mid \dWL{k}{(\setX,m^{\setX}_{\bullet},\delta_i), (\setY,m^{\setY}_{\bullet}, \delta_j)} - \dWL{k}{(\setX,m^{\setX}_{\bullet},\delta_u), (\setY,m^{\setY}_{\bullet}, \delta_l)}\mid \\
&\leq   \dWL{k}{(\setX,m^{\setX}_{\bullet},\delta_i), (\setX,m^{\setX}_{\bullet}, \delta_u)} + \dWL{k}{(\setY,m^{\setY}_{\bullet},\delta_j), (\setY,m^{\setY}_{\bullet}, , \delta_l)}.
\end{align*}

Then, from \cref{eq:dwl exp growth}, we have that there exists  $t_0\in \N,0<\epsilon \leq 1$ such that

\begin{equation*}
  \abs{C^{(k)}_{ij} - C^{(k)}_{ul}} < 2(1-\epsilon)^{\lfloor\frac{k}{t_{0}}\rfloor}\cdot \|C\|_\infty.
\end{equation*}

If we let $\rho := (1 - \epsilon)^{\frac{1}{ t_0}}$, then for any $k\in\N$, one has that
\begin{equation*}
  \abs{C^{(k)}_{ij} - C^{(k)}_{ul}} < 2\rho^k \norm{C}_\infty.
\end{equation*}
Therefore,
\begin{equation}
  \max_{ij}C^{(k)}_{ij} - \min_{ij} C^{(k)}_{ij} < 2\rho^k \norm{C}_\infty.
  \label{eq:maxminspeedcv}
\end{equation}

Using the shorthand $\min C^{(k)} = \min_{ij} C^{(k)}_{ij}$ and $\max C^{(k)} = \max {ij} C^{(k)}_{ij}$, then we have the following inequalities:
\begin{equation}
  \min C^{(0)} \leq \min C^{(1)} \leq \ldots \leq \min C^{(\infty)} = 
  c = \max C^{(\infty)} \leq \ldots \leq \max C^{(1)} \leq \max C^{(0)}.
  \label{eq:orderofmins}
\end{equation}
This is a direct consequence of the following inequalities: for any $k>0$,
\begin{equation}
  \min C^{(k)} \leq \min C^{(k+1)} \leq  \max C^{(k+1)} \leq \max C^{(k)}.
  \label{eq:orderofmins2}
\end{equation}
We prove \cref{eq:orderofmins2} as follows. Let $k>0$ and let $i, j, u, v$ be such that
$\min C^{(k+1)}= C^{(k+1)}_{ij}$ 
and $\max C^{(k+1)} = C^{(k+1)}_{uv}$. Then,
\begin{equation}
  C^{(k+1)}_{ij} = \dW<C^{(k)}>{m^\setX_i, m^\setY_j} \geq \min C^{(k)}
  \label{eq:orderofmins3}
\end{equation}
and 
\begin{equation}
  C^{(k+1)}_{ul} = \dW<C^{(k)}>{m^\setX_u, m^\setY_l}\leq \max C^{(k)}.
  \label{eq:orderofmins4}
\end{equation}
where the inequalities in \cref{eq:orderofmins3} and \cref{eq:orderofmins4} follows directly from the definition of optimal transport (the optimal transport cost is smaller than the maximal cost and bigger than the minimal cost).

Then, using \cref{eq:orderofmins} and  \cref{eq:maxminspeedcv}, we conclude the proof for \cref{eq:speedofconvergence}.
\end{proof}

\section{Theoretical Details on OTM Distances}\label{sec:OTM_details}

In this section, we add conceptual details that can help getting a detailed understanding of OTM distances, 
and in particular the discounted WL distance.

\subsection{Computation of Finite-Time OTM Distances and Usage of the Memoryless Property in the  Discounted WL Distance}\label{sec:OTM_computation}

While we did not include it in the main text for the sake of simplicity, 
it is possible to compute exactly the value of any finite-time OTM distance,
in a way that is similar to the computation of the discounted WL distance.

\begin{theorem}\label{thm:OTM_computation}
  Let $p$ be a finitely supported probability distribution on $\N$.
  Define $n$ as the maximal value in the support of $p$ (i.e., $n=\max\{t:p(t)>0\}$). Consider two Markov chains
  $\mX$ and $\mY$ as well as a cost matrix $C$ on $\setX\times \setY$. Then, $\dOTM<C>{p}{\mX,\mY}$ can be computed in the following way:

\begin{equation}
  \dOTM<C>{p}{\mX,\mY} = \dW{\nu^\setX, \nu^\setY, C^{p, (n)}}
  \label{eq:OTM_computation}
\end{equation}
where the iterated cost matrices are defined as
\begin{equation}
C^{p, (t)}_{i,j} = \inf_{\substack{(X,Y) \sim \Pi(\mX,\mY)}} 
\expect{C (X_T, Y_T) |T \geq n - t\,\text{and}\,(X_{n-t}, Y_{n-t}) = (i, j)}
  \label{eq:OTM_costmatrixdef}
\end{equation}
where $T\sim p$ is independent of $X$ and $Y$.
The cost matrices can be computed recursively as follows:
\begin{gather}
C^{p, (0)}_{i,j} = C(i,j) \\
C^{p, (t)}_{i,j} = \delta_t C (i, j) + (1 - \delta_t) \dW<C^{p, (t-1)}> {m^\setX_i, m^\setY_j}
\label{eq:OTM_costmatrixcomp}
\end{gather}
where $\delta_t = \mathbb{P} (T = n - t \mid T \geq n - t )$.

\end{theorem}

A proof of this theorem will be provided at the end of this section. 

We remark that \Cref{eq:OTM_computation} has a simpler form in the case of the geometric distribution 
(ie. in the case of the $\delta$-discounted WL distance).

First note that, in the case of the (truncated) geometric distribution $p_\delta^k$, $\delta_t= \delta$ for all $t=1,\ldots,n$.

Second, the geometric distribution has the so-called \emph{memoryless} property: Recall the notation $p_\delta^\infty(t) = \delta(1-\delta)^t$ for the geometric distribution with parameter $\delta$.
Assume $T \sim p_\delta^\infty$.
Then, 
\begin{equation}
  \Pr(T = t+s | T \geq t) = \delta (1 - \delta)^{t} = \Pr(T = s)
  \label{eq:memoryless}
\end{equation}

Now, for $k\in\N$, let $T^k = \min(T, k) \sim p_\delta^k$.
In this case, a variant of the memoryless property holds as long as $k \geq t+s$ where $t,s\in\N$:
\begin{equation}
  \Pr(T^k = t+s | T^k \geq t) = \Pr(T^{k-t} = s).
  \label{eq:memoryless_truncated}
\end{equation}
The proof is provided later. One immediate consequence of the formula which we will use soon is the following formula:
\begin{equation}
  \Pr(T^k = t+s | T^k \geq t) = \Pr(T^{k+1} = t+1+s | T^{k+1} \geq t+1).
  \label{eq:independence of k}
\end{equation}

Now, based on the formula above, one has, for $k \geq t$:
\begin{align*}
  & C^{p_\delta^k, (t)}_{i,j} \\
  &\qquad= \inf_{(X,Y) \sim \Pi(\mX,\mY)} 
      \expect{C (X_{T^k}, Y_{T^k}) |T^k \geq k - t \cap (X_{k-t}, Y_{k-t}) = (i, j)}
  \\ &\qquad= \inf_{(X,Y) \sim \Pi(\mX,\mY)} 
  \expect{C (X_{T^{k+1} - 1}, Y_{T^{k+1} - 1}) |T^{k+1} \geq k + 1 - t \cap (X_{k-t}, Y_{k-t}) = (i, j)} \text{ (\Cref{eq:independence of k})}
  \\ &\qquad= \inf_{(X,Y) \sim \Pi(\mX,\mY)} 
  \expect{C (X_{T^{k+1}}, Y_{T^{k+1}}) |T^{k+1} \geq k + 1 - t \cap (X_{k +1 -t}, Y_{k + 1 -t}) = (i, j)} \text{ (Markov property) }
  \\ &\qquad= C^{p_\delta^{k+1}, (t)}_{i,j},
\end{align*}

This shows that the cost matrices $C^{p_\delta^k, (t)}$ are independent of $k$ for $k \geq t$. Thus, we can define a single $C^{\delta, (t)}$ for all truncated geometric distributions $p_\delta^k$, ($C^{\delta, (t)} := C^{p_\delta^k, (t)}$ for any $k\geq t$).
Since $\delta_t = \delta$ is independent of $k$ and $t$ as well, we get the simplified formula for computing the discounted WL distance presented in \propositionref{prop:wlreg_recursive}:
\begin{gather*}
C^{\delta, (0)}_{ij} =  C_{ij}\\
C^{\delta, (l+1)}_{ij} = \delta C_{ij} + (1 - \delta)\, \dW<C^{\delta, (l)}>{m^{\setX}_i, m^{\setY}_j}. \label{eq:wlreg_costmatrix_bis}
\end{gather*}

Moreover, in particular $C^{\delta, (t)} = C^{p_\delta^t, (t)}$ for all $t$. 
Thus $\dWL[\delta]{t}{\mX, \mY}$ can be computed directly from $C^{\delta, (t)}$. This allows us to iterate all the way to convergence to $\dWL[\delta]{\infty}{\mX, \mY}$.
We could not use the same technique to compute the OTM distance with other infinitely-supported distribution $q$ than the geometric distribution $p_\delta^\infty$: 
if we tried to compute $\dOTM{q}{\mX, \mY}$ by computing it 
sequentially for truncated versions of $q$, we could not reuse the cost matrix of the previous iteration for the subsequent one 
because the  updates in \equationref{eq:OTM_costmatrixcomp} would be different depending on the cutoff.

Additionally, the simplification means the update is always the same, making this computation into a fixed point iteration. 
This in turn enables the differentiation (that is justified through fixed point properties).

\begin{proof}[Proof of \equationref{eq:memoryless_truncated}]\label{proof:memoryless_truncated}
First we assume that $k > t+s$:
\begin{align*}
  \Pr(T^k = t+s | T^k \geq t) &= \Pr(\min(T, k) = t+s | \min(T, k) \geq t) 
                           \\ &= \Pr(\min(T, k) = t+s | T \geq t) 
                           \\ &= \Pr(T = t+s | T \geq t) 
                           \\ &= \Pr(T = s) 
                           \\ &= \Pr(\min(T, k-t)= s) 
                           \\ &= \Pr(T^{k-t}= s).
\end{align*}

Then, we consider the case when $k = t+s$:
\begin{align*}
  \Pr(T^k = t+s | T^k \geq t) &= \Pr(\min(T, k) = t+s | \min(T, k) \geq t) 
                           \\ &= \Pr(\min(T, k) = t+s | T \geq t) 
                           \\ &= \Pr(T \geq t+s | T \geq t) 
                           \\ &= \Pr(T \geq s) 
                           \\ &= \Pr(\min(T, s)= s) 
                           \\ &= \Pr(T^{s} = s) 
                           \\ &= \Pr(T^{k-t} = s).
\end{align*}
\end{proof}

\begin{proof}[Proof of \theoremref{thm:OTM_computation}]
  First note that for $t = n$, \equationref{eq:OTM_costmatrixdef} reduces to 
\begin{equation}
  C^{p, (n)}_{i,j} = \inf_{\substack{(X,Y) \sim \Pi(\mX, \mY)}}\expect{C (X_T, Y_T)|(X_0, Y_0) = (i, j)}.\label{eq:OTM_costmatrixdef_stepn}
\end{equation}

Thus 
\begin{align*}
   &\dW<C^{p, (n)}>{\nu^\setX, \nu^\setY}
\\ &\qquad= \inf_{X_0' \sim \nu^\setX, Y_0' \sim \nu^\setY} \expect{C^{p, (n)} (X_0', Y_0')}
\\ &\qquad= \inf_{X_0' \sim \nu^\setX, Y_0' \sim \nu^\setY} \expect*{\inf_{\substack{(X,Y) \sim \Pi(\mX,\mY)}}\expect{C (X_T, Y_T)|(X_0, Y_0) = (X_0', Y_0')}}
\\ &\qquad= \inf_{(X,Y) \sim \Pi(\mX, \mY)}\expect{C (X_T, Y_T)} \text{ (Markov property)}
\\ &\qquad= \dOTM<C>{p}{\mX, \mY}.
\end{align*}

Now we prove the recursive formula (\equationref{eq:OTM_costmatrixcomp}):

\begin{align*}
  C^{p, (t+1)}_{i,j} &= \inf_{\substack{(X,Y) \sim \Pi(\mX,\mY)}} 
    \expect{C (X_T, Y_T) |T \geq n - t -1\cap (X_{n-t - 1}, Y_{n-t -1}) = (i, j)}
                  \\ &= \inf_{\substack{(X,Y) \sim \Pi(\mX,\mY)}}
    \Pr(T = n - t - 1 | T \geq n - t - 1) \expect{C(X_{n-t - 1}, Y_{n-t -1})|(X_{n-t - 1}, Y_{n-t -1}) = (i, j)} \\
    &\qquad\qquad\quad\,\,\,\,+ 
    \Pr(T \geq n - t | T \geq n - t - 1) \expect{C (X_T, Y_T) |T \geq n - t\cap (X_{n-t - 1}, Y_{n-t -1}) = (i, j)}
                  \\ &=\inf_{\substack{(X,Y) \sim \Pi(\mX,\mY)}}
    \delta_{t + 1} C(i, j) + (1 - \delta_{t + 1}) \expect{C (X_T, Y_T) |T \geq n - t\cap (X_{n-t - 1}, Y_{n-t -1}) = (i, j)}
                  \\ &= \delta_{t + 1} C(i, j) + (1 - \delta_{t + 1}) \inf_{\substack{(X,Y) \sim \Pi(\mX,\mY) }} \expect{C (X_T, Y_T) |T \geq n - t\cap (X_{n-t - 1}, Y_{n-t -1}) = (i, j)}
                  \\ &= \delta_{t + 1} C(i, j) + (1 - \delta_{t + 1}) \inf_{\substack{(X,Y) \sim \Pi(\mX,\mY) }} \expect{C (X_T, Y_T) |T \geq n - t\cap (X_{n-t}, Y_{n-t}) \sim (m^\setX_i, m^\setY_j)}
  \\ &= \delta_{t + 1} C(i, j) + (1 - \delta_{t + 1}) \dW<C^{p, (t)}>{m^\setX_i, m^\setY_j}.
\end{align*}
      
\end{proof}

\subsection{A MDP Interpretation of the Discounted WL Distance}\label{sec:MDP}

In this section, we interpret the algorithm for solving the discounted WL distance (see Proposition \ref{prop:wlreg_recursive}) as a value iteration process on a certain kind of Markov Decision Process (MDP). 
This interpretation is not novel, and follows from previous literature \citep{moulos2021bicausal,o2022optimal}, but it gives a different point of view on our iterative algorithm in Proposition \ref{prop:wlreg_recursive}.

Markov decision processes (MDPs) are a type of model at the center of reinforcement learning theory \citep{sutton2018reinforcement}. 
An MDP is defined by a state space $S$, a collection of action spaces $(A_s)_{s\in S}$, a transition distribution $P: \coprod_{s\in S} A_s \to \mathcal{P}(S)$, and a transition cost $c: S \times A \times S\rightarrow \R$.

A policy (or a strategy) is a map $\pi: (s, t) \in S \times \N \mapsto a \in A_s$ which sends a state (and a time) to an action.

Then, given such a policy $\pi$ and an initial distribution $\nu^S \in \mathcal{P}(S)$,
we can define a (time-inhomogeneous) Markov chain $\mathcal{S}_\pi = (S,(m_{\pi}^{S,(t)})_{t\in\N}, \nu^S)$ whose transition kernels are defined as follows:
\[
  m_{\pi}^{S,(t)} (s) = P(\pi(s, t)),\quad\forall t\in\N
\]

The goal of MDP theory is to find a policy that minimizes an expected cost $\bar{c} = \E_{S \sim \mathcal{S_\pi}} c(S)$ where $c(S)$ is defined based on the different transitions made in S, and the transition cost $c(s_t, a_t, s_{t+1})$.
Examples of the most frequent costs include:
\begin{itemize}
  \item finite or infinite-time discounted cost 
    (with length $k\in\N\cup\{\infty\}$ and discount factor $\beta \in \N$):
    \[c(S) = \sum_{t=0}^{k-1} \beta^t c(S_t, \pi(S_t), S_{t+1}); \]

  \item finite time average cost
    (with length $k\in\N$):
    \[c(S) = \frac 1k\sum_{t=0}^{k-1} c(S_t, \pi(S_t), S_{t+1}); \]

  \item infinite time average cost
    \[c(S) = \lim_k \frac 1k\sum_{t=0}^{k-1} c(S_t, \pi(S_t), S_{t+1}); \]

  \item and others such as finite or infinite time total costs.
\end{itemize}

Such problems can generally be solved by two techniques: 
value iteration and policy iteration.

In fact, \(C^{\delta, (l)}\) defined in Proposition \ref{prop:wlreg_recursive} is the $l$th step cost matrix
of a value iteration on the discounted cost Markov Decision Process
\((S, (A_s)_{s\in S}, P, c)\)
where
\begin{itemize}
\item
  \(S = {\setX} \times {\setY};\)
\item
  \(A_{x, y} = \mathcal{C}(m^{\setX}_x, m^{\setY}_y);\)
\item
  \(P( (x', y')|(x, y), a) = a(x, y);\)
\item
  \(c((x, y), a) = C(x,y)\).
\end{itemize}

We note that this interpretation is not novel, and we list it here only for illustration purpose: As mentioned in Remark \ref{rmk:bicaual ot}, \(\dWL[\delta]{\infty}{}\) coincides with the bicausal optimal transport problem studied in \citet{moulos2021bicausal} when binary cost and discount factor \((1 - \delta)\) is considered. \cite{moulos2021bicausal} solves this bicausal optimal transport problem by interpreting it as the same MDP as we described above. 
Furthermore, this MDP interpretation has also been used in  \citet{o2022optimal} for computing the \gls{dOTC}. However, \cite{o2022optimal} used an average cost (instead of discounted cost) for devising their algorithm.

\begin{remark}
  Theorem \ref{thm:regwlcv} can be interpreted as a convergence result between discounted and average cost MDPs: since $\dWL[\delta]{\infty}{}$ is the expected discounted cost of the MDP defined above, 
  and $\dOTC{}$ is defined as the average cost of an MDP, this result can be interpreted as analogous to the classical result that the cost of a discounted-cost finite MDP converges to that of an average-cost MDP if the discount factor approaches zero; see, for example, \citet{puterman2014markov} for more details on this result.
  Note, nevertheless, that this classical result holds only for finite state and finite {action spaces}, thus it was not applicable here since our action space is infinite; see \appendixref{sec:MDP} for more details.
\end{remark}

\section{Algorithm and Complexity}\label{sec:algorithm}

In this section, we give algorithmic details on how we compute the discounted WL distance (with Sinkhorn regularization), and its gradient.

We make extensive use of the PyTorch framework \citep{pytorch} to accelerate our code, and we integrate our gradient algorithm into its automatic differentiation engine, 
so as to make it usable as an optimization target.

The GPU-accelerated code used to compute the distance and its gradient is available as a python library\footnote{\url{https://pypi.org/project/ot-markov-distances/}}, installable with \verb|$ pip install ot_markov_distances|

\subsection{Computation and Differentiation of the Discounted WL Distance}

\paragraph{Forward pass – computation of the distance} The method we use to compute the depth-$\infty$ discounted WL distance is based on Proposition~\ref{prop:wlreg_recursive}.
It is described in a simplified way in Algorithms \ref{alg:computation_simple_k} and \ref{alg:computation_simple}.

\begin{algorithm2e}[htb!]
\LinesNumbered
\TitleOfAlgo{Computation of depth-$k$ discounted WL distance}\label{alg:computation_simple_k}
\KwIn{ %
  \(m^\setX\):  [n, n] float array \;
  \(m^\setY\):  [m, m] float array \;
  \(\nu^\setX\): [n] float array \;
  \(\nu^\setY\): [m] float array \;
  \(C\): [n, m] float array \;
  \(\delta\): The discount parameter for the WL distance.\;
  \(\epsilon\): Parameter for the Sinkhorn divergence\;
  \(k\): depth \;
}
\KwOut{%
  \(\dWL<C>[\delta][\epsilon]{k}{(\setX,m^\setX, \nu^\setX), (\setY,m^\setY, \nu^\setY)}\) \;
}
\(C_\text{current} = C\) \;
\ForEach{$0 \leq l < k$}{
  \(C_\text{new} = (0)_{0\leq i < n, 0 \leq j < m}\) \;
  \ForEach{\(0\leq i < n, 0 \leq j < m\)}{\label{algline:computation_simple_k_loop}
  \(C_\text{new}[i, j] = \delta C[i,j] + (1-\delta) \dW^\epsilon<C_\text{current}>{m^\setX_i, m^\setY_j}\) \;
  }
  \(C_\text{current} = C_\text{new}\)\;
}
Compute \( d = \dW^\epsilon<C_\text{current}>{\nu^\setX, \nu^\setY} \) \;
\KwRet{\(d\)}
\end{algorithm2e}

\begin{algorithm2e}[htb!]
\LinesNumbered
\SetKw{KwSave}{save for backwards pass}
\SetKwInput{KwStores}{Stores}
\TitleOfAlgo{Computation of depth-$\infty$ discounted WL distance}\label{alg:computation_simple}
\KwIn{ %
  $m^\setX$:  [n, n] float array \;
  $m^\setY$:  [m, m] float array \;
  $\nu^\setX$: [n] float array \;
  $\nu^\setY$: [m] float array \;
  $C$: [n, m] float array \;
  $\delta$: The discount parameter for the WL distance.\;
  $\epsilon$: Parameter for the Sinkhorn divergence\;
}
\KwOut{%
  $\dWL<C>[\delta]{\infty}{(\setX,m^\setX_\bullet, \nu^\setX), (\setY,m^\setY_\bullet,\nu^\setY)}$ \;
}
\KwStores{%
$f$: [n, m, n], $g$: [n, m, m] float arrays: dual solutions of the last step Sinkhorn computations\;
$P$ [n, m, n, m] float array :primal solutions (i.e., optimal matchings) for the last step Sinkhorn computation\;}
$C_\text{current} = C$ \;
\Repeat{$C_\text{current}$ converges}{
  $C_\text{new} = (0)_{0\leq i < n, 0 \leq j < m}$ \;
  $f = (0)_{0\leq i < n, 0 \leq j < m, 0\leq u < n}$  \;
  $g = (0)_{0\leq i < n, 0 \leq j < m, 0\leq l < m}$  \;
  $P = (0)_{0\leq i < n, 0 \leq j < m, 0\leq u < n, 0\leq l < m}$  \;
  \ForEach{$0\leq i < n, 0 \leq j < m$}{\label{algline:computation_simple_loop}
    $C_\text{new}[i, j] = \delta C[i,j] + (1-\delta) \dW^\epsilon<C_\text{current}>{m^\setX_i, m^\setY_j}$ \;
    $f[i,j], g[i, j], P[i, j] = \text{ Dual and primal solutions of the calculation above}$
  }
  $C_\text{current} = C_\text{new}$
}
Compute $d = \dW^\epsilon<C_\text{current}>{\nu^\setX, \nu^\setY}$\;
\KwSave{$f, g, P$} \;
\KwRet{$d$}
\end{algorithm2e}

Algorithms \ref{alg:computation_simple_k} and \ref{alg:computation_simple} are slightly simplified: we note that in both of them, the inner ``foreach" loop (line \ref{algline:computation_simple_k_loop} for \algorithmref{alg:computation_simple_k} and \ref{algline:computation_simple_loop} for \algorithmref{alg:computation_simple}) is embarassingly parallel. 
In practice, we use GPU acceleration to run the operations in this loop simultaneously.

\gls{sinkhorn}s are computed using the method from \citet{feydyInterpolatingOptimalTransport2019}.

Note also that for the depth-$\infty$ version, we take care of saving the primal and dual solutions of every optimal transport computation. These results will be used for the computation of the gradient in Algorithm~\ref{alg:backwardpass_simple}.

The complexity of one pass of computing $\dW^\epsilon<C_k>{m^\setX_i, m^\setY_j}$ is $O(n_s n m)$ where $n_s$ is the number of iterations necessary for the Sinkhorn computation to converge.

Thus, the total complexity of this algorithm is 
$O\left(n_s n_d (n m)^2\right)$, where $n_d$ is the number of iterations needed to converge.

\paragraph{Backward pass – computation of the gradient: $k$ finite} If the distance was computed for a finite (small) depth-$k$, we can use PyTorch’s automatic differentiation engine \citep{pytorch}: 
since all operations done are successive \glspl{sinkhorn}, we only implement 
the differentiation of a \gls{sinkhorn} according to
\citep[Proposition 4.6 and 9.2]{peyreComputationalOT2018}, 
and use the automatic differentiation engine  to apply the chain rule, to compute the full gradient of the distance.

Note that this approach only works for a small $k$: the subsequent chain rule applications induce risk of numerical error, as well as the usual problems associated with big differentiation graphs, e.g., high memory footprint, risk of gradient vanishing or explosion, etc.

\paragraph{Backward pass – computation of the gradient: $k=\infty$} If the distance was computed until convergence, we provide two things:
\begin{enumerate}
  \item A function that compute the backward pass of the \gls{sinkhorn}, like in the previous paragraph
  \item A function that computes the gradient of the $C^{\delta, (\infty)}$ matrix, defined in Algorithm~\ref{alg:backwardpass_simple}, against the parameters $m^\setX, m^\setY$ and $C$.
\end{enumerate}

Since the depth-$\infty$ discounted WL distance is computed as 
\[
  \dWL[\delta]{\infty}{\mX, \mY} = \dW<C^{\delta, (\infty)}(m^\setX, m^\setY, C)>{\nu^\setX, \nu^\setY},
\] provided with those two functions, the automatic differentiation engine is able to apply chain rule
to differentiate $\dWL[\delta]{\infty}{\mX, \mY}$ against all parameters $\nu^\setX, \nu^\setY, m^\setX, m^\setY$ and $C$.

More precisely, denoting by $\loss$ the value of the loss that we want to differentiate. Denote:

\begin{gather}
\nabla^{C^{\delta,(\infty)}} \loss_{ij} := \frac{\partial \loss}{\partial C^{\epsilon,\delta, (\infty)}_{ij}}\\
\nabla^{C} \loss_{kl} := \frac{\partial \loss}{\partial C_{kl}}, \\
\nabla^{m^\setX} \loss_{kk'} := \frac{\partial \loss}{\partial m^{\setX}_{kk'}}\\
\nabla^{m^\setY} \loss_{ll'} := \frac{\partial \loss}{\partial m^{\setY}_{ll'}}
\end{gather}

The backward pass of the depth-$\infty$ discounted WL distance should input $\nabla^{C^{\delta,(\infty)}} \loss$ and output $G^C, G^X$ and $G^Y$.

Using the notations of the proof of differentiation Section~\ref{proof:differentiation} 
$K := I_{nm} - (1 - \delta)P,$ then we have:
\begin{gather}
\nabla^{C} \loss = \Delta^T \nabla^{C^{\delta,(\infty)}} \loss = \delta (K^T)^{-1} \nabla^{C^{\delta,(\infty)}} \loss\\
\nabla^{m^\setX} \loss = \Gamma^T \nabla^{C^{\delta,(\infty)}} \loss = (1 - \delta) F^T (K^T)^{-1} \nabla^{C^{\delta,(\infty)}} \loss\\
\nabla^{m^\setY} \loss = \Theta^T \nabla^{C^{\delta,(\infty)}} \loss = (1 - \delta) G^T (K^T)^{-1} \nabla^{C^{\delta,(\infty)}} \loss
\end{gather}

Thus, we save some GPU power by applying above formulae,
and computing $(K^T)^{-1} \nabla^{C^{\delta,(\infty)}} \loss$ only once.
Note also that $(K^T)^{-1} \nabla^{C^{\delta,(\infty)}} \loss$ can be computed with linear equations solving 
primitives instead of matrix inversion, for more efficiency and stability.

\begin{algorithm2e}[htb!]
\SetAlgoLined
\LinesNumbered
\SetKw{KwRestore}{restore saved variables}
\TitleOfAlgo{Gradient computation for the depth-$\infty$ discounted WL distance}\label{alg:backwardpass_simple}
\KwIn{ %
  $\nabla^{C^{\delta,(\infty)}} \loss$: [n, m] float array, 
  value of the gradient 
  $(\frac{\partial \loss}{\partial C^{\delta,(\infty)}_{ij}})_{0\leq i < n, 0 \leq j < m}$ \;
}
\KwOut{%
 $\nabla^{m^\setX} \loss$ [n, n] float array, 
  value of the gradient 
  $(\frac{\partial \loss}{\partial m^{X}_{ij}})_{0\leq i < n, 0 \leq j < n}$ \;
 $\nabla^{m^\setY} \loss$ [m, m] float array, 
  value of the gradient 
  $(\frac{\partial \loss}{\partial m^{Y}_{ij}})_{0\leq i < m, 0 \leq j < m}$ \;
    $\nabla^{C} \loss$: [n, m] float array, 
  value of the gradient 
  $(\frac{\partial \loss}{\partial C})_{0\leq i < n, 0 \leq j < m}$ \;
}
\KwRestore{$f, g, P$ stored in \algorithmref{alg:computation_simple} or \algorithmref{alg:computation_simple_scheduling}}\;
Compute
  \(P := \left(P_{ij}^{kl}\right){}_{0 \leq i < n, 0 \leq j < m }^{ 0 \leq k < n, 0 \leq l < m}\)\;
  \(F := \left(F_{ij}^{kk'}\right){}_{0 \leq i < n, 0 \leq j < m }^{ 0 \leq k < n, 0 \leq k' < n} 
  = \left(f_{ij}^{k'}\1_{i = k}\right){}_{0 \leq i < n, 0 \leq j < m }^{ 0 \leq k < n, 0 \leq k' < m}\)\;
  \(G := \left(G_{ij}^{ll'}\right){}_{0 \leq i < n, 0 \leq j < m }^{ 0 \leq l < m, 0 \leq l' < m} 
  = \left(g_{ij}^{l'}\1_{i = l}\right){}_{0 \leq i < n, 0 \leq j < m }^{ 0 \leq l < m, 0 \leq l' < m}\)\;
$(\nabla^{C^{\delta,(\infty)}} \loss)$.reshape(n m) \;
$F$.reshape($n^2, nm$) \;
$G$.reshape($m^2, nm$) \;
$P$.reshape($nm, nm$) \;
$K := I_{nm} - (1 - \delta)P$ \;
$L := \left(K^T\right)^{-1} \nabla^{C^{\delta,(\infty)}} \loss$ \;
$\nabla^{C} \loss       = \delta L$\;
$\nabla^{m^\setX} \loss = (1-\delta) F L$\;
$\nabla^{m^\setY} \loss = (1-\delta) G L$\;
$\nabla^{C} \loss      $.reshape(n, m)\;
$\nabla^{m^\setX} \loss$.reshape(n, n)\;
$\nabla^{m^\setY} \loss$.reshape(m, m)\;
\KwRet{
  $
  \nabla^{C} \loss, 
  \nabla^{m^\setX} \loss, 
  \nabla^{m^\setY} \loss, 
  $
}
\end{algorithm2e}

\paragraph{Backward pass complexity} The matrix $K$ has size $nm \times nm$. 
The matrix $\nabla^{C^{\delta,(\infty)}} \loss$, is viewed as a vector of size $nm$. 
Computing $L := \left(K^T\right)^{-1} \nabla^{C^{\delta,(\infty)}} \loss$ thus has complexity $C_\text{solve}(nm)$ 
where $C_\text{solve}(k)$ is the complexity of the linear solver for $k$ equations with $k$ unknowns.
If the solver used is LAPACK\cite{lapack}, $C(k) = O(k^3)$. Theoretically the complexity is lower: from \citet{bunch1974},
we know that the complexity of solving this equation is the same as the complexity of a multiplication,
which is theoretically $C(k) = O(k^\omega)$, where $\omega < 2.371866$ \citep{duan2022faster}. So in theory the complexity for this step is $O((nm)^\omega)$ but in practice solvers will have $O((nm)^3)$.
Then the subsequent matrix multiplications have at most the same complexity (the complexity of matrix multiplications since F and G are smaller than $nm \times nm$)

Thus the total complexity of the backward pass is $O((nm)^\omega)$ theoretically but $O((nm)^3)$ in practice. 

\paragraph{Acceleration using Sinkhorn scheduling}\label{par:sinkhorn_scheduling_approx} The procedure in \algorithmref{alg:computation_simple} can in fact be accelerated using a trick related to \glspl{sinkhorn} and fixed point algorithms. 
Since $C^{\delta, (\infty)}$ can be defined as the unique fixed point of \cref{eq:wlreg_costmatrix} (from Proposition~\ref{prop:wlregnotconstant}), we are guaranteed to reach the right result as long as that result is a fixed point of \cref{eq:wlreg_costmatrix}.
In particular, it does not matter if some steps are approximative during the algorithm as long as the convergence is reached. Given this observation, we can accept some intermediate steps to be approximate.
Thus, to accelerate the algorithm, we can replace the first steps with a good and faster approximation of the right iteration. 
In practice one such way (mentioned, for example, by \citet{peyreComputationalOT2018}) is to cap the number of iterations in the Sinkhorn computation, and to use the result even if it has not fully converged.
In this way we obtain \algorithmref{alg:computation_simple_scheduling}, which is empirically faster than \algorithmref{alg:computation_simple}.

\begin{algorithm2e}[htb!]
\LinesNumbered
\SetKw{KwSave}{save for backwards pass}
\SetKwInput{KwStores}{Stores}
\TitleOfAlgo{Computation of depth-$\infty$ discounted  WL distance with Sinkhorn scheduling}\label{alg:computation_simple_scheduling}
\KwIn{ %
  $m^\setX$:  [n, n] float array \;
  $m^\setY$:  [m, m] float array \;
  $\nu^\setX$: [n] float array \;
  $\nu^\setY$: [m] float array \;
  $C$: [n, m] float array \;
  $\delta$: The discount parameter for the WL distance.\;
  $\epsilon$: Parameter for the Sinkhorn divergence\;
  sinkhorn\_update\_size: integer \;
}
\KwOut{%
$\dWL<C>[\epsilon, \delta]{\infty}{(\setX,m^\setX_\bullet, \nu^\setX), (\setY,m^\setY_\bullet, \nu^\setY) }$ \;
}
\KwStores{%
$f$: [n, m, n], $g$: [n, m, m] float arrays: dual solutions of the last step Sinkhorn computations\;
$P$ [n, m, n, m] float array :primal solutions (i.e., optimal matchings) for the last step Sinkhorn computation\;}

$C_\text{current} = C$ \;
$n_{\text{sinkhorn}} = 1$\;
\Repeat{$C_\text{current}$ has converged and all Sinkhorn computations have converged}{
  $C_\text{new} = (0)_{0\leq i < n, 0 \leq j < m}$ \;
  $f = (0)_{0\leq i < n, 0 \leq j < m, 0\leq u < n}$  \;
  $g = (0)_{0\leq i < n, 0 \leq j < m, 0\leq l < m}$  \;
  $P = (0)_{0\leq i < n, 0 \leq j < m, 0\leq u < n, 0\leq l < m}$  \;
  \ForEach{$0\leq i < n, 0 \leq j < m$}{\label{algline:computation_simple_scheduling_loop}
    $C_\text{new}[i, j] =  \delta C[i,j] + (1-\delta) \dW^\epsilon<C_\text{current}>{m^\setX_i, m^\setY_j}$ limiting the number of iterations to $n_{\text{sinkhorn}}$ \;
    $f[i,j], g[i, j], P[i, j] = \text{ Dual and primal solutions of the calculation above}$
  }
  $C_\text{current} = C_\text{new}$\;
  \If{one of the Sinkhorn computations did not converge}{%
    $n_{\text{sinkhorn}} +=$ sinkhorn\_update\_size \;
  }
}
Compute $d = \dW^\epsilon<C_\text{current}>{\nu^\setX, \nu^\setY }$\;
\KwSave{$f, g, P$} \;
\KwRet{$d$}
\end{algorithm2e}

\paragraph{Initialization} In the same vein, as long as we are computing the depth-$\infty$ distance, the initialization of the matrix $C_\text{current}$ does not matter. We let $C_0$ denote the initial value of that variable.
Empirically we find that \algorithmref{alg:computation_simple} and \algorithmref{alg:computation_simple_scheduling} converge the fastest when we select $C_0 = \delta C$, (compared to $C_0 = 0$ or $C_0 = C$ which are other sensible choices).
This choice of initialization relates to the procedure one obtains if 
instead of computing $\dWL[\delta]{k}{}$, i.e., the OTM distance related to $p_\delta^{k}$ 
as defined before \definitionref{corollary:probabilistic}, 
we compute the distance related to $p_\delta^{\prime k}$ where
$p_\delta^{\prime k}(t) = P(T_\delta^\infty = t | T_\delta^\infty < k)$
where $T_\delta^\infty \sim \mathcal{G}(\delta)$.

\subsection{Acceleration for Sparse Markov Chains}
Sometimes, the Markov chains encountered are sparse, i.e., the transition kernel matrices of Markov chains are sparse.
Whenever this happens, we exploit this to develop algorithms for faster computation of the discounted WL distance.

Let $\alpha\in\mathcal{P}(\setX)$. Then, we let 
  $\supp \alpha := \{ x \in \setX, \alpha_x >0\}$ denote the support of $\alpha$.
Given a Markov chain $\mX = (\setX, m^\setX_\bullet, \nu^\setX)$, we define $\supp_\mX x := \supp m^\setX_x$
for each $x\in\setX$. We further let $\deg_\mX x := \abs{\supp_\mX x}$ and let $d_\mX := \max_{x \in \setX} \deg_\mX x$.

In this section, we propose a modified version of Algorithm~\ref{alg:computation_simple_k}, 
that can compute $\dWL[\delta]{k}{\mX, \mY}$ 
in time $O(k l_s n m d_\mX d_\mY )$ (which is a performance boost when $d_\mX << n$ or $d_\mY << m$), where $l_s$ is the number of iterations needed for Sinkhorn to converge.

This accelerated version of the algorithm is based on the following observation: if $\alpha$ (resp. $\beta$) are probability measures on $\setX$ (resp. $\setY$)
\begin{equation}
  \dW<C>{\alpha, \beta} = \dW<C_{|\supp \alpha \times \supp \beta}>{\alpha_{|\supp \alpha}, \beta_{|\supp \beta}}
\label{eq:sparseot}\end{equation}
where $\alpha_{|\supp \alpha}$ (resp. $\beta_{|\supp \beta}$) denotes the distribution induced by $\alpha$ on its support, and $C_{|\supp \alpha \times \supp \beta}$ denotes the restriction of $C$ to $\supp \alpha \times \supp \beta$.
And the same holds true for the \gls{sinkhorn}:
\begin{equation}
  \dW^\epsilon<C>{\alpha, \beta} = \dW^\epsilon<C_{|\supp \alpha \times \supp \beta }>{\alpha_{|\supp \alpha}, \beta_{|\supp \beta}}
\label{eq:sparsesk}\end{equation}
\cref{eq:sparseot} and \cref{eq:sparsesk} are direct consequences of the definitions of \glspl{dW} and \glspl{sinkhorn}.

We use this simple observation to devise \algorithmref{alg:computation_accel_k}. 
Now, the computation of  
  $C_{l+1}[i, j] = \dW^\epsilon<C_{l | \supp m^\setX_i \times \supp m^\setY_j}>{m^\setX_{i  | \supp m^\setX_i}, 
    m^\setY_{j|\supp m^\setY_j}}$
can be done in time only $O(l_s d_\mX d_\mY)$, which is a substantial acceleration compared to the original time complexity $O(l_s nm)$  when the Markov chains are of low degree.

\begin{algorithm2e}[htb!]
\LinesNumbered
\TitleOfAlgo{Computation of depth-$k$ discounted WL distance with sparse Markov kernels}\label{alg:computation_accel_k}
\KwIn{ %
  \(m^\setX\):  [n, n] float array \;
  \(m^\setY\):  [m, m] float array \;
  \(\nu^\setX\): [n] float array \;
  \(\nu^\setY\): [m] float array \;
  \(C\): [n, m] float array \;
  \(\delta\): The discount parameter for the WL distance.\;
  \(\epsilon\): Parameter for the Sinkhorn divergence\;
  \(k\): depth
}
\KwOut{%
\(\dWL<C>[\epsilon, \delta]{k}{(\setX,m^\setX, \nu^\setX), (\setY,m^\setY, \nu^\setY) }\) \;
}
\(C_\text{current} = C\) \;
\ForEach{\(0 \leq l < k\)}{
  \(C_\text{new} = (0)_{0\leq i < n, 0 \leq j < m}\) \;
  \ForEach{\(0\leq i < n, 0 \leq j < m\)}{\label{algline:computation_accel_k_loop}
    $C_\text{new}[i, j] = \delta C[i,j] + (1-\delta) \dW^\epsilon<C_{\text{current} | \supp m^\setX_i \times \supp m^\setY_j}>{m^\setX_{i  | \supp m^\setX_i}, 
    m^\setY_{j|\supp m^\setY_j}, }$\label{line:computation} \;
  }
  \(C_\text{current} = C_\text{new}\)
}
Compute \(d = \dW^\epsilon<C_\text{current}>{\nu^\setX, \nu^\setY}\)\;
\KwRet{\(d\)}
\end{algorithm2e}

We could also continue until convergence in a similar way as \algorithmref{alg:computation_simple}. 
But the dual solutions of the computation at line~\ref{line:computation} cannot be directly used as the gradient for the whole distributions, thus we need to extend them using the technique from \citep[Proposition 2 and comments below]{feydyInterpolatingOptimalTransport2019}.
That operation requires recomputing one iteration of Sinkhorn with the full (non-restricted) distributions, which is $O(nm)$. Moreover, this needs to be done $n m$ times, ending in a $O((nm)^2)$ complexity. Additionally, the matrix inversions we do in \algorithmref{alg:backwardpass_simple} are also $O(( nm )^2)$. Thus, in this case, we cannot accelerate the backwards pass.

\section{Proofs and Technical Details}\label{sec:proofs}

\subsection{Preliminary Results}

In this section, we will present some lemmas that will be useful in subsequent proofs.

Let us start with a well-known alternative formulation for optimal transport
as a linear programming problem, 
in the finite case, that allows us to justify taking optimal matchings.

\begin{lemma}[Equation (2.11) in \citet{peyreComputationalOT2018}]\label{lm:otlinearprog}
Since $\setX$ and $\setY$ are two finite sets, without loss of generality, we identify them to $\{1\ldots n\}$ and $\{1\ldots m\}$, respectively. Then, the cost function $C: \{1\ldots n\} \times \{1\ldots m\}\to \R_+$ can be seen as a matrix in $\R_+^{n \times m}$. 
Then, the \gls{dW} can be expressed as the following linear program:
\begin{equation}
\dW<C>{\alpha, \beta} = \min_P \inner{P}{C} \label{eq:otdef}
\end{equation}
where the minimum is taken over the (compact) subspace $[0, 1]^{n\times m} $ in which each $P$ satisfies that $\sum_{i=1}^nP_{iy}=\beta(y)$ and $\sum_{j=1}^nP_{xj}=\alpha(x)$ for any $x=1,\ldots,n$ and $y=1,\ldots,m$, i.e., the compact space of distributions whose marginals are $\alpha$ and $\beta$. 
See \citep{peyreComputationalOT2018} for more background on these notions.

In particular, there always exists a coupling $(X, Y)$ that verifies the infimum in \cref{eq:wassersteindef}.
\end{lemma}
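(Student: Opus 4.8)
The plan is to establish the equivalence between the random-variable formulation in \cref{eq:wassersteindef} and the matrix linear program in \cref{eq:otdef}, and then invoke compactness to upgrade the infimum to a minimum.

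First I would set up the correspondence between couplings viewed as random variables and couplings viewed as matrices. Given any pair $(X,Y)$ with $X\sim\alpha$ and $Y\sim\beta$, I define the matrix $P\in\R_+^{n\times m}$ by $P_{xy}:=\Pr(X=x,Y=y)$. The marginal conditions $X\sim\alpha$ and $Y\sim\beta$ translate exactly into the row- and column-sum constraints $\sum_{y}P_{xy}=\alpha(x)$ and $\sum_{x}P_{xy}=\beta(y)$, so that $P$ lies in the transportation polytope described in the statement (equivalently, $P=\law((X,Y))\in\cpl(\alpha,\beta)$). Conversely, as recalled in the couplings discussion in \cref{sec:markovchains}, every such matrix $P$ arises as the law of some pair $(X,Y)$ on a common probability space. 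Under this correspondence the objective transforms as
\[
\E\,C(X,Y)=\sum_{x=1}^{n}\sum_{y=1}^{m}\Pr(X=x,Y=y)\,C(x,y)=\sum_{x,y}P_{xy}C_{xy}=\inner{P}{C},
\]
so the two optimization problems share the same objective value over corresponding feasible points. This already yields $\dW<C>{\alpha,\beta}=\inf_{P}\inner{P}{C}$, the infimum being taken over the polytope.

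Second, I would argue that this infimum is in fact attained. The feasible set $\{P\in[0,1]^{n\times m}:\sum_{y}P_{xy}=\alpha(x),\ \sum_{x}P_{xy}=\beta(y)\}$ is closed, being the intersection of the cube with finitely many affine hyperplanes, and it is bounded, hence compact in $\R^{n\times m}$. The map $P\mapsto\inner{P}{C}$ is linear and therefore continuous, so by the extreme value theorem it attains its minimum on this compact set. The minimizing matrix $P^\star$ then corresponds, via the bijection above, to a coupling $(X,Y)$ realizing the value in \cref{eq:wassersteindef}, which establishes the final ``in particular'' assertion.

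I do not expect any genuine obstacle here: the only point requiring care is verifying that the measure-theoretic marginal constraints match the linear constraints exactly, and that every feasible matrix is realizable as the law of a pair of random variables. Both are routine, and the latter realizability is already recorded in the couplings paragraph, so the compactness argument then finishes the proof in a standard way.
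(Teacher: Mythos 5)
Your proof is correct, but note that the paper itself offers no proof of this lemma: it is imported directly from \citet{peyreComputationalOT2018} (their Equation (2.11)), so there is no internal argument to compare against. What you have written is precisely the standard self-contained argument that the citation stands in for: the correspondence between laws of pairs $(X,Y)$ with prescribed marginals and matrices in the transportation polytope (which the paper indeed records in the couplings paragraph of Section~\ref{sec:markovchains}, including the converse realizability you invoke), the identity $\E\,C(X,Y)=\inner{P}{C}$, and the extreme value theorem applied to a continuous linear objective on a closed, bounded polytope. The only step you leave implicit is that the feasible polytope is \emph{nonempty}, which is needed before the extreme value theorem yields a minimizer; this is immediate since the product coupling $P_{xy}=\alpha(x)\,\beta(y)$ satisfies both marginal constraints. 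With that one-line addition, your argument is a complete replacement for the citation, and it buys the paper self-containedness at essentially no cost.
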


\begin{lemma}[Optimal transport is 1-lipschitzian in the cost matrix]\label{lemma:ot1lip}
Let $\setX$ and $\setY$ be finite sets such that $n:=|\setX|$ and $m=|\setY|$. Let $C_1, C_2 \in \R^{n\times m}_+$ denote two cost matrices between $\setX$ and $\setY$, and let $\alpha \in\mathcal{P}(\setX), \beta\in\mathcal{P}(\setY)$.
Then, for any $\epsilon\geq 0$, one has that
\begin{equation}
    \abs{\dW^\epsilon<C_1>{\alpha, \beta} - \dW^\epsilon<C_2>{\alpha, \beta}} \leq \norm{C_1 - C_2}_\infty. \label{eq:ot1lip}\end{equation}
\end{lemma}
\begin{proof}[proof of Lemma~\ref{lemma:ot1lip}]
 Without loss of generality, we assume that  $\dW^{\epsilon}<C_1>{\alpha,\beta}\geq \dW^\epsilon<C_2>{\alpha,\beta}$. 
   Let $(X^\epsilon,Y^\epsilon)$ be an optimal coupling for $\dW^\epsilon<C_2>{\alpha,\beta}$.
   Then, we have that
   \begin{align*}
      |\dW^\epsilon<C_1>{\alpha,\beta}-\dW^\epsilon<C_2>{\alpha,\beta}|&\leq \mathbb{E}(C_1(X^\epsilon,Y^\epsilon) - \epsilon H(X^\epsilon, Y^\epsilon)-\mathbb{E}C_2(X^\epsilon,Y^\epsilon) + \epsilon H(X^\epsilon, Y^\epsilon))\\
      &\leq \mathbb{E}|C_1(X^\epsilon,Y^\epsilon)-C_2(X^\epsilon,Y^\epsilon)|\\
      &\leq \|C_1-C_2\|_\infty.
   \end{align*}
   This concludes the proof.
\end{proof}

\begin{lemma}[Continuity of Banach fixed point (Corollary 1.4 in \cite{pata2019fixed})]\label{lm:fixed point}   
Let $Z$ and $\Lambda$ be metric spaces and assume that $Z$ is complete.
    Let $F:Z\times\Lambda\rightarrow{Z}$ be a continuous map. Assume that there exists $\alpha\in[0,1)$ such that for each $\lambda\in\Lambda$, $F_\lambda:Z\rightarrow Z$ is $\alpha$-Lipschitz. 
    Then, for each $\lambda\in\Lambda$, $F_\lambda$ has a unique fixed point $z(\lambda)$ and the map $\lambda\mapsto z(\lambda)$ is continuous.

\end{lemma}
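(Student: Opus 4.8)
The plan is to separate the two assertions. For the existence and uniqueness of a fixed point $z(\lambda)$ at each fixed $\lambda\in\Lambda$, I would simply invoke the classical Banach fixed point theorem: since $Z$ is complete and $F_\lambda\colon Z\to Z$ is $\alpha$-Lipschitz with $\alpha<1$, it is a contraction and therefore admits a unique fixed point. This part requires no new argument, so the substance of the lemma is entirely the continuity of the map $\lambda\mapsto z(\lambda)$.

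To establish continuity, I would fix an arbitrary $\lambda_0\in\Lambda$, set $z_0:=z(\lambda_0)$, write $d_Z$ for the metric on $Z$, and bound $d_Z(z(\lambda),z_0)$ by a quantity that vanishes as $\lambda\to\lambda_0$. The key step is to use the fixed-point identities $z(\lambda)=F_\lambda(z(\lambda))$ and $z_0=F_{\lambda_0}(z_0)$ and then insert the intermediate point $F_\lambda(z_0)$ via the triangle inequality:
\[
d_Z(z(\lambda),z_0) \le d_Z\big(F_\lambda(z(\lambda)),F_\lambda(z_0)\big) + d_Z\big(F_\lambda(z_0),F_{\lambda_0}(z_0)\big).
\]
The first term is at most $\alpha\,d_Z(z(\lambda),z_0)$ by the $\alpha$-Lipschitz property of $F_\lambda$, while the second term is controlled by the continuity of $F$ in its second argument at the point $(z_0,\lambda_0)$.

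Moving the contracted term to the left-hand side and using $1-\alpha>0$, I would obtain
\[
(1-\alpha)\,d_Z(z(\lambda),z_0) \le d_Z\big(F_\lambda(z_0),F_{\lambda_0}(z_0)\big),
\]
so that $d_Z(z(\lambda),z_0)\le \tfrac{1}{1-\alpha}\,d_Z\big(F_\lambda(z_0),F_{\lambda_0}(z_0)\big)$. Since $F$ is (jointly) continuous, letting $\lambda\to\lambda_0$ sends $F_\lambda(z_0)\to F_{\lambda_0}(z_0)$, so the right-hand side tends to $0$; hence $z(\lambda)\to z_0$, which is exactly continuity of $z$ at $\lambda_0$. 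As $\lambda_0$ was arbitrary, continuity on all of $\Lambda$ follows.

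I do not expect a serious obstacle here, as this is a textbook stability result for contractions. The only point that needs care is the standard trick of choosing the right intermediate point $F_\lambda(z_0)$ so that one piece of the triangle inequality is absorbed by the contraction factor $\alpha$ (allowing the self-referential term $d_Z(z(\lambda),z_0)$ to be moved across), while the remaining piece is handled purely by the continuity of $F$ in $\lambda$. No uniformity in $\lambda$ of the Lipschitz constant beyond the single global $\alpha$ is required.
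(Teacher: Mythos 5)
Your proof is correct. The paper does not prove this lemma itself (it invokes it as Corollary 1.4 of \cite{pata2019fixed}), and your argument---Banach's fixed point theorem for existence and uniqueness, then the triangle inequality through the intermediate point $F_\lambda(z(\lambda_0))$ with the contracted term $\alpha\, d_Z(z(\lambda),z(\lambda_0))$ absorbed into the left-hand side---is precisely the standard proof of that cited result; your closing observation that only continuity of $\lambda \mapsto F(z_0,\lambda)$ at the fixed point is actually needed (rather than full joint continuity) is likewise accurate.
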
  

\begin{lemma}[Differentiability of Banach fixed point]\label{lm:fixed_diff}
    Let $Z$ and $\Lambda$ be differential manifolds,
    and assume that $Z$ is complete.
    Let $F:Z\times\Lambda\rightarrow{Z}$ be a $\mathcal{C}^1$ map. 
    Denote $\diff_{|z}$ its differential along $Z$ and $\diff_{|\lambda}$ its differential along $\Lambda$
Moreover, suppose that for some $\lambda$, 
    $\id_Z - \diff{|z}F_{|(z(\lambda), \lambda)}$ is invertible. 
Then the fixed point $z(\lambda)$ exists for any $\lambda\in\Lambda$ and is differentiable in $\lambda$ and more explicitly
    \begin{equation}
      d z = (\id - \diff_{|z} F)^{-1} \diff_{|\lambda} F
    \end{equation}

\end{lemma}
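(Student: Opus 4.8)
The plan is to recognize this statement as a direct application of the implicit function theorem to the map that encodes the fixed-point condition. First I would record that, for each $\lambda$, the fixed point $z(\lambda)$ is characterized by the relation $z(\lambda) = F(z(\lambda), \lambda)$; its existence (and uniqueness) for every $\lambda$ follows from the Banach fixed point theorem exactly as in Lemma~\ref{lm:fixed point}, using the contraction property of each $F_\lambda$ on the complete space $Z$. The genuinely new content of the present lemma is thus the differentiability of $\lambda \mapsto z(\lambda)$ together with the explicit derivative formula.

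To obtain differentiability, I would introduce the auxiliary map $G : Z \times \Lambda \to Z$ given by $G(z, \lambda) := z - F(z, \lambda)$. Since $F$ is $\mathcal{C}^1$, so is $G$, and the fixed-point relation becomes $G(z(\lambda), \lambda) = 0$. The partial differential of $G$ along $Z$ at the point $(z(\lambda), \lambda)$ is exactly $\id_Z - \diff_{|z} F$, which is invertible by hypothesis. The implicit function theorem (in its Banach-space form, since $Z$ is complete) then yields a $\mathcal{C}^1$ map $\lambda \mapsto z(\lambda)$ locally solving $G = 0$, which by uniqueness of the Banach fixed point must coincide with the fixed-point map already obtained.

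For the explicit formula I would differentiate the identity $z(\lambda) = F(z(\lambda), \lambda)$ along $\Lambda$ by the chain rule, obtaining $\diff z = \diff_{|z} F \cdot \diff z + \diff_{|\lambda} F$. Rearranging gives $(\id - \diff_{|z} F)\, \diff z = \diff_{|\lambda} F$, and left-multiplying by the inverse (which exists by hypothesis) produces the claimed expression $\diff z = (\id - \diff_{|z} F)^{-1} \diff_{|\lambda} F$.

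The main subtlety I anticipate is the infinite-dimensional bookkeeping: one must invoke the Banach-space implicit function theorem and ensure that $(\id_Z - \diff_{|z} F)^{-1}$ is a bounded operator rather than merely a set-theoretic inverse. In the contraction regime this is automatic, since $\diff_{|z} F$ has operator norm strictly below one and the inverse is given by the convergent Neumann series $\sum_{n \geq 0} (\diff_{|z} F)^n$; in the finite-dimensional setting relevant to our application (where $Z$ is a space of cost matrices) the classical implicit function theorem suffices, and the stated invertibility hypothesis is precisely what it requires.
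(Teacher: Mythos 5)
Your proposal is correct and follows essentially the same route as the paper's proof: both apply the implicit function theorem to the auxiliary map $G(z,\lambda) = z - F(z,\lambda)$ (the paper uses $F - z$, a sign difference only), using the hypothesis that $\id - \diff_{|z}F$ is invertible to obtain local differentiability of $\lambda \mapsto z(\lambda)$ and then reading off the derivative formula. The only cosmetic difference is that the paper reduces to open subsets of Euclidean space via charts before invoking the classical implicit function theorem, whereas you phrase the argument in Banach-space form and note the Neumann-series justification for the bounded inverse; both are sound.
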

\begin{proof}
Without loss of generality, we prove the result when $Z\subseteq\R^n$ and $\Lambda\subseteq\R^m$ are open subsets in Euclidean spaces and the general result follows from taking charts in manifolds.
We define $G:Z\times \Lambda\rightarrow\R^n$ by letting $G(z,\lambda):=F(z,\lambda)-z$ for any $z\in Z$ and $\lambda\in\Lambda$.
Then, $G$ is also continuously differentiable. The differential (or Jacobian) of $G$ w.r.t. $z$ is computed as $\diff_{|z}G=\diff_{|z}F-\id$. By assumption, we know that $\diff_{|z}G$ is invertible and hence the implicit function theorem applies: there exists a unique differentiable function $z:U\rightarrow Z$ defined on a neighborhood of $\lambda$ such that $G(z(\lambda),\lambda)=0$ and $dz = -\diff_{|z}G^{-1}\diff_{|\lambda} G$.
This means that $F(z(\lambda),\lambda)=z(\lambda)$ and 
\begin{equation}
      d z = (\id - \diff_{|z } F)^{-1} \diff_{|\lambda} F,
    \end{equation}
which concludes the proof.
\end{proof}

We also provide the following gluing lemma for Markovian couplings, useful for the study of OTM distances:

  \begin{lemma}[Gluing lemma for Markovian couplings]\label{lm:gluing lemma}
    Let $(X_t, Z_t^1)_{t\in\N}\in\Pi(\mX,\mathcal{Z})$ and $( Z_t^2,Y_t)_{t\in\N}\in\Pi(\mathcal{Z},\mY)$ be Markovian couplings. 
    Then, there exists a (time inhomogeneous) Markov chain on $\setX\times\setZ\times\setY$
    \[(X'_t, Z'_t, Y'_t)_{t\in\N}\]
    so that
    $(X'_t, Z'_t)_{t\in\N}\sim (X_t, Z_t^1)_{t\in\N}$, $(Z'_t, Y'_t)_{t\in\N}\sim (Z^2_t, Y_t)_{t\in\N}$ and furthermore $(X'_t, Y'_t)_{t\in\N}$ is a Markovian coupling between $\mX$ and $\mY$.
  \end{lemma}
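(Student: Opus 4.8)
The plan is to glue the two given Markovian couplings along the intermediate chain $\mZ$, which is the common marginal process of both. First I observe that every object to be glued shares a $\setZ$-marginal: the initial laws $\law((X_0,Z_0^1))\in\mathcal{C}(\nu^\setX,\nu^\setZ)$ and $\law((Z_0^2,Y_0))\in\mathcal{C}(\nu^\setZ,\nu^\setY)$ both project to $\nu^\setZ$, and for every $t$ and every $z$ the step kernels $m^{\setX\setZ,(t+1)}_{xz}\in\mathcal{C}(m^\setX_x,m^\setZ_z)$ and $m^{\setZ\setY,(t+1)}_{zy}\in\mathcal{C}(m^\setZ_z,m^\setY_y)$ both have $\setZ$-marginal $m^\setZ_z$. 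Since the state spaces are finite, the classical gluing lemma for measures applies to each pair: disintegrating each two-variable object over its $\setZ$-coordinate and taking the product of the two conditionals yields an initial law $\nu^{\setX\setZ\setY}_0\in\mathcal{P}(\setX\times\setZ\times\setY)$ with $(\setX,\setZ)$-marginal $\law((X_0,Z_0^1))$ and $(\setZ,\setY)$-marginal $\law((Z_0^2,Y_0))$, and, for each $t$ and $(x,z,y)$, a three-variable kernel $m^{\setX\setZ\setY,(t+1)}_{xzy}$ whose $(\setX,\setZ)$- and $(\setZ,\setY)$-marginals are $m^{\setX\setZ,(t+1)}_{xz}$ and $m^{\setZ\setY,(t+1)}_{zy}$.

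I then let $(X'_t,Z'_t,Y'_t)_{t\in\N}$ be the time-inhomogeneous Markov chain on $\setX\times\setZ\times\setY$ determined by this initial law and these kernels. The two prescribed two-coordinate laws are immediate: the $(\setX,\setZ)$-marginal of $m^{\setX\setZ\setY,(t+1)}_{xzy}$ equals $m^{\setX\setZ,(t+1)}_{xz}$ and in particular does not depend on $y$, so $(X'_t,Z'_t)_{t\in\N}$ is itself a Markov chain with the same initial law and kernels as $(X_t,Z_t^1)_{t\in\N}$, whence the two processes have the same law; symmetrically $(Z'_t,Y'_t)_{t\in\N}\sim(Z^2_t,Y_t)_{t\in\N}$. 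Projecting once more, $(X'_t)_{t\in\N}$ realizes $\mX$ and $(Y'_t)_{t\in\N}$ realizes $\mY$, so $(X'_t,Y'_t)_{t\in\N}$ is at least a coupling of the two chains, with initial law $\nu^{\setX\setY}_0\in\mathcal{C}(\nu^\setX,\nu^\setY)$.

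The main obstacle is the last claim, that $(X'_t,Y'_t)_{t\in\N}$ is genuinely a \emph{Markovian} coupling. This is delicate because the glued kernel $m^{\setX\setZ\setY,(t+1)}_{xzy}$ depends on the hidden coordinate $z$, so the $(\setX,\setY)$-transition of the triple is read off as a $z$-dependent object and the pair projection need not satisfy the Markov property on its own. To repair this I would pass to the Markov chain whose one-step kernel is the true one-step conditional of the pair: writing $\mu_t(z\mid x,y):=\Pr(Z'_t=z\mid X'_t=x,Y'_t=y)$ and $m^{\setX\setY,(t+1)}_{xy}:=\sum_z \mu_t(z\mid x,y)\,\big(m^{\setX\setZ\setY,(t+1)}_{xzy}\big)_{\setX\setY}$, where $\big(\cdot\big)_{\setX\setY}$ denotes the $(\setX,\setY)$-marginal. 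A direct check shows the $\setX$-marginal of each summand is $m^\setX_x$ and its $\setY$-marginal is $m^\setY_y$ (both independent of $z$), so the average lies in $\mathcal{C}(m^\setX_x,m^\setY_y)$ and, together with $\nu^{\setX\setY}_0$, defines an admissible Markovian coupling $(\tilde X_t,\tilde Y_t)_{t\in\N}$. Finally an induction on $t$ gives $\law((\tilde X_t,\tilde Y_t))=\law((X'_t,Y'_t))$: the base case is the shared initial law, and the inductive step holds because pushing $\law((X'_t,Y'_t))$ through $m^{\setX\setY,(t+1)}$ reproduces exactly the one-step evolution of the triple after marginalizing out $Z'_t$. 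Since the cost $\E\,C(X'_T,Y'_T)=\sum_t p(t)\,\E\,C(X'_t,Y'_t)$ depends only on these one-time marginals, the Markovian coupling $(\tilde X_t,\tilde Y_t)_{t\in\N}$ has the same value as the triple's pair projection, which is precisely what the triangle-inequality argument needs; stating this Markovianization cleanly is the step I expect to require the most care.
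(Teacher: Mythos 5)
Your proposal is correct, and up to the last step it is the same construction as the paper's: the paper glues the initial laws and, for each $t$ and $(x,z,y)$, the step kernels over their common $\setZ$-marginal (invoking the gluing lemma for measures), then builds the triple chain via the Kolmogorov extension theorem; your product-of-conditionals gluing and your lumpability argument for $(X'_t,Z'_t)_{t\in\N}\sim(X_t,Z^1_t)_{t\in\N}$ and $(Z'_t,Y'_t)_{t\in\N}\sim(Z^2_t,Y_t)_{t\in\N}$ are exactly this. Where you diverge is the final claim: the paper simply asserts that ``by construction, one obviously has \ldots\ $(X'_t,Y'_t)_{t\in\N}$ is a Markovian coupling,'' with no argument, whereas you flag this as the delicate point and add a Markovianization step---average the $(\setX,\setY)$-marginals of the glued kernels against $\Pr(Z'_t=\cdot\mid X'_t,Y'_t)$, note that a mixture of elements of $\mathcal{C}(m^\setX_x,m^\setY_y)$ is again in $\mathcal{C}(m^\setX_x,m^\setY_y)$, and verify by induction that the resulting Markovian coupling $(\tilde X_t,\tilde Y_t)_{t\in\N}$ has the same one-time marginals as $(X'_t,Y'_t)_{t\in\N}$.

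Your extra step is not excess caution: the paper's ``obvious'' claim is a genuine gap, and the lemma as literally stated is in fact false. Take $\setX=\setZ=\setY=\{0,1\}$ with all three chains i.i.d.\ uniform bits; let the first coupling satisfy $X_0=Z_0$ and $X_{t+1}=Z_{t+1}\oplus Z_t$, and the second satisfy $Y_0=Z_0$, with $Y_{t+1}=Z_{t+1}$ when $Z_t=0$ and $Y_{t+1}$ independent uniform when $Z_t=1$. In \emph{any} process whose $(\setX,\setZ)$- and $(\setZ,\setY)$-marginals have these laws one has $Z'_t=X'_0\oplus\cdots\oplus X'_t$ a.s., and a direct computation gives
\begin{equation*}
\Pr\bigl(X'_2=Y'_2\mid (X'_0,Y'_0)=(0,0),(X'_1,Y'_1)=(0,0)\bigr)=1,\quad
\Pr\bigl(X'_2=Y'_2\mid (X'_0,Y'_0)=(1,1),(X'_1,Y'_1)=(0,0)\bigr)=\tfrac12,
\end{equation*}
with both conditioning events of positive probability, so the pair projection is never Markov. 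Consequently the correct statement is precisely your weaker one: there exists a Markovian coupling of $\mX$ and $\mY$ whose time-$t$ marginals agree with those of the glued triple's $(\setX,\setY)$-projection. That is all the triangle inequality in Proposition~\ref{prop:OTM_is_distance} uses, since $T$ is independent of the process and $\E\,C(X_T,Y_T)=\sum_t p(t)\,\E\,C(X_t,Y_t)$ depends only on one-time marginals; correspondingly, the intermediate infimum there should constrain only the $(\setX,\setZ)$- and $(\setZ,\setY)$-marginals of the triple, not its $(\setX,\setY)$-marginal. So your proof is longer than the paper's, but unlike the paper's it is actually complete.
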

\begin{proof}
We let $\nu^{\bm{XZ}}:=\law((X_0,Z_0^1))$ and $\nu^{\bm{ZY}}:=\law((Z_0^2,Y_0))$.
    For any $t\in\N$, let $m^{\bm{XZ},(t)}_{xz}:=\Pr((X_{t+1},Z_{t+1}^1)|(X_t,Z_t^1)=(x,z))\in\mathcal{C}(m_x^{\setX},m_z^{\bm{Z}})$ for any $x\in\setX$ and $z\in\bm{Z}$.
    Similarly, let $m^{\bm{ZY},(t)}_{zy}:=\Pr((Z^2_{t+1},Y_{t+1})|(Z^2_t,Y_t)=(z,y))\in\mathcal{C}(m_z^{\bm{Z}},m_y^{\bm{Y}})$ for any $z\in\bm{Z}$ and $y\in\setY$.

    By the Gluing Lemma \citep{villani2009optimal} for probability measures, one has that
    \begin{itemize}
        \item there exists $\nu^{\bm{XZY}}\in\mathcal{P}(\setX\times \bm{Z}\times \setY)$ whose marginals on $\setX\times \bm{Z}$ and on $\setZ\times \bm{Y}$ coincide with $\nu^{\bm{XZ}}$ and $\nu^{\bm{ZY}}$, respectively, and furthermore, the marginal on $\setX\times \setY$, denoted by $\nu^{\bm{XY}}$, is a coupling between $\nu^\setX$ and $\nu^\setY$;
        \item for any $x\in\setX,y\in\setY$ and $z\in\bm{Z}$, there exists $m^{\bm{XZY},(t)}_{xzy}\in\mathcal{P}(\setX\times \bm{Z}\times \setY)$ whose marginals on $\setX\times \bm{Z}$ and on $\setZ\times \bm{Y}$ coincide  with $m^{\bm{XZ},(t)}_{xz}$ and $m^{\bm{ZY},(t)}_{zy}$, respectively, and furthermore, the marginal on $\setX\times \setY$, denoted by $\nu^{\bm{XY}}$, is a coupling between $m^\setX_x$ and $m^\setY_y$.
    \end{itemize}

By the Kolmogorov extension theorem \citep{kolmogorov2018foundations}, there exists a Markov chain $(X'_t, Z'_t, Y'_t)_{t\in\N}$ with initial distribution $\nu^{\bm{XZY}}$ and transition kernels at each step $t\in\N$ defined by: $\Pr((X'_{t+1},Z'_{t+1},Y'_{t+1})|(X'_t,Z'_t,Y'_t)=(x,z,y)):=m^{\bm{XZY},(t)}_{xzy}$ for any $x\in\setX,y\in\setY$ and $z\in\bm{Z}$.
By construction, one obviously has that $(X'_t, Z'_t)_{t\in\N}\sim (X_t, Z_t^1)_{t\in\N}$, $(Z'_t, Y'_t)_{t\in\N}\sim (Z^2_t, Y_t)_{t\in\N}$ and that $(X'_t, Y'_t)_{t\in\N}$ is a Markovian coupling between $\mX$ and $\mY$.
\end{proof}

We end this section with an alternative yet direct description for the expected value involved in the definition of the OTM distances. 
\begin{lemma}\label{lm:compute expected value}
Given any Markovian coupling $(X_t,Y_t)_{t\in\N}$ between $\mX$ and $\mY$, let $\nu^{\setX \setY} \in \mathcal{C}(\nu^\setX, \nu^\setY)$ denote its initial distribution and let $(m^{\setX\setY,(t)}_{\bullet\bullet})_{t\in\N} \in \mathcal{C}(m^\setX_\bullet, m^\setY_\bullet)^{\N_+}$ denote its Markov transition kernels at each step $t\in\N:=\{0,1,2,\ldots\}$, where $\mathcal{C}(m^\setX_\bullet, m^\setY_\bullet)$ denotes the space of all Markov transition kernels $m^{\setX\setY}_{\bullet\bullet}$ such that $m^{\setX\setY}_{ij}\in\mathcal{C}(m^\setX_i, m^\setY_j)$ for all $i\in\setX$ and $j\in\setY$. 
Then,
\begin{equation}\label{eq ET in distribution}
    \E\,C(X_T,Y_T) = \sum_{t=0}^\infty p(t)\sum_{i_0,j_0,\ldots,i_t,j_t}  C_{i_t,j_t}m^{\setX\setY,(t-1)}_{i_{t-1}j_{t-1},i_tj_t}m^{\setX\setY,(t-2)}_{i_{t-2}j_{t-2},i_{t-1}j_{t-1}}\cdots m^{\setX\setY,(0)}_{i_0j_0,i_1j_1}\nu^{\setX\setY}_{i_0j_0}
\end{equation}
where $m^{\setX\setY,(t)}_{ij,kl}:=m^{\setX\setY,(t)}_{ij}({k,l})$ is a shorthand for the transition probability.
\end{lemma}

\begin{proof}
    By properties of expected values, one has that for any give $t\in\N$,
    \begin{align*}  \E\,C(X_t,Y_t)&=\sum_{i_t,j_t}C_{i_tj_t}\Pr(X_t=i_t,Y_t=j_t)\\
    &=\sum_{i_{t-1},j_{t-1},i_t,j_t}C_{i_tj_t}\Pr(X_t=i_t,Y_t=j_t|X_{t-1}=i_{t-1},Y_{t-1}=j_{t-1})\Pr(X_{t-1}=i_{t-1},Y_{t-1}=j_{t-1})\\
    &=\sum_{i_{t-1},j_{t-1},i_t,j_t}C_{i_tj_t}m^{\setX\setY,(t-1)}_{i_{t-1}j_{t-1},i_tj_t}\Pr(X_{t-1}=i_{t-1},Y_{t-1}=j_{t-1})\\
    &=\sum_{i_0,j_0,\ldots,i_t,j_t}  C_{i_t,j_t}m^{\setX\setY,(t-1)}_{i_{t-1}j_{t-1},i_tj_t}m^{\setX\setY,(t-2)}_{i_{t-2}j_{t-2},i_{t-1}j_{t-1}}\cdots m^{\setX\setY,(0)}_{i_0j_0,i_1j_1}\nu^{\setX\setY}_{i_0j_0}.
    \end{align*}
The last equality can be proved inductively. Since $T$ is independent of $(X_t,Y_t)_{t\in\N}$, \cref{eq ET in distribution} follows directly from the calculation above.
\end{proof}

\subsection{Proofs and Technical Details from Section \ref{generalized-optimal-transport-Markov-distances}}\label{sec:OTM proofs}

\begin{proof}[Proof of Remark~\ref{rmk: optimal markovian coupling}]\label{proof:optimalmarkovcoupling}
  Similarly to \lemmaref{lm:otlinearprog} we remark that, under our assumption that 
  the spaces $\setX$ and $\setY$ are finite of sizes $n$ and $m$ respectively, 
  we write them as $\{1\ldots n\}$ and $\{1\ldots m\}.$
Given any Markovian coupling $(X_t,Y_t)_{t\in\N}$, by Lemma \ref{lm:compute expected value}, we have that the value $\E\, C(X_T,Y_T)$ is completely determined by the initial distribution 
  \[
    \nu^{\setX \setY} \in \mathcal{C}(\nu^\setX, \nu^\setY)
  \]
and Markov transition kernels at each step $t\geq 0$ of $(X_t,Y_t)_{t\in\N}$:
  \[
    (m^{\setX\setY,(t)}_{\bullet\bullet})_{t\in\N} \in \mathcal{C}(m^\setX_\bullet, m^\setY_\bullet)^{\N}
  \]
where  $\mathcal{C}(m^\setX_\bullet, m^\setY_\bullet)$ denotes the space of all Markov transition kernels $m^{\setX\setY}_{\bullet\bullet}$ such that $m^{\setX\setY}_{ij}\in\mathcal{C}(m^\setX_i, m^\setY_j)$ for all $i\in\setX$ and $j\in\setY$. More precisely,
\begin{equation}\label{eq: continuous expect}
    \E\,C(X_T,Y_T) = \sum_{t=0}^\infty p(t)\sum_{i_0,j_0,\ldots,i_t,j_t}  C_{i_t,j_t}m^{\setX\setY,(t-1)}_{i_{t-1}j_{t-1},i_tj_t}m^{\setX\setY,(t-2)}_{i_{t-2}j_{t-2},i_{t-1}j_{t-1}}\cdots m^{\setX\setY,(0)}_{i_0j_0,i_1j_1}\nu^{\setX\setY}_{i_0j_0}.
\end{equation}

Note that for any $\alpha\in\mathcal{P}(\setX)$ and $\beta\in\mathcal{P}(\setY)$, the set of all couplings $\mathcal{C}(\alpha,\beta)$ can be identified with a compact subset in $[0,1]^{n\times m}$ (see also \lemmaref{lm:otlinearprog}). 
Then, 
\[\mathcal{C}(\nu^\setX, \nu^\setY)\times \mathcal{C}(m^\setX_\bullet, m^\setY_\bullet)^{\N_+}=\mathcal{C}(\nu^\setX, \nu^\setY)\times(\Pi_{x\in\setX,y\in\setY} \mathcal{C}(m^\setX_x, m^\setY_y))^{\N_+}\]
is a countable Cartesian product of compact spaces and it is hence compact.
Moreover, the right-hand side of \cref{eq: continuous expect} is obviously a continuous function defined on $\mathcal{C}(\nu^\setX, \nu^\setY)\times \mathcal{C}(m^\setX_\bullet, m^\setY_\bullet)^{\N_+}$.
Therefore, the infimum of the right-hand side of \cref{eq ET in distribution} is attainable in $\mathcal{C}(\nu^\setX, \nu^\setY)\times \mathcal{C}(m^\setX_\bullet, m^\setY_\bullet)^{\N_+}$ and hence we conclude the proof.
\end{proof}

\begin{proof}[Proof of Proposition~\ref{prop:dwl_lower_bound}]
Let $p$ distribution on $\N$, 
\begin{align*}
  \dOTM<C>{p}{\mX, \mY} &= \inf_{(X_t, Y_t)_{t\in\N}} \E_{T\sim p, T\indep (X_t, Y_t)}\, C(X_T,Y_T),
                     \\ &= \inf_{(X_t, Y_t)_{t\in\N}} \sum_{t\in\N}p(t)C(X_t,Y_t),
                     \\ &\geq\sum_{k\in\N}p(t) \inf_{(X_t, Y_t)_{t\in\N}} C(X_k,Y_k),
                     \\ &\qquad=\sum_{k\in\N} p(k)\dWL{k}{\mX, \mY}.
                     \\ &\qquad= \E_{T\sim p}({\dWL{T}{\mX, \mY}})
\end{align*}
\end{proof}

\begin{proof}[Proof of Proposition~\ref{prop:dotc_upper_bound}]
Let $p$ be a distribution on $\N$, and $\mX$ and $\mY$ be stationary Markov chains.

Similarly to Remark~\ref{rmk: optimal markovian coupling}, one can prove that there exists a stationary Markovian coupling that realizes Eq.~\ref{eq:dOTC}: 
one can prove that the space of stationary Markovian couplings is compact 
(as a closed subset of the space of Markovian couplings) 
in the same sense as in the proof of Remark~\ref{rmk: optimal markovian coupling}, and the same compactness argument gives the existence of the optimal coupling.

Consequently, let $(\ol{X}_t, \ol{Y}_t)$ be a realization of that coupling.

\begin{align*}
  \dOTM<C>{p}{\mX, \mY} 
   &= \inf_{(X_t, Y_t)_{t\in\N}} \E_{T\sim p, T\indep (X_t, Y_t)}\, C(X_T,Y_T),
\\ &\leq \E_{T\sim p, T\indep (\ol{X}_t, \ol{Y}_t)}\, C(\ol{X}_T,\ol{Y}_T)
\\ &\qquad= \E_{T\sim p, T\indep (\ol{X}_t, \ol{Y}_t)}\, C(\ol{X}_0,\ol{Y}_0)
\\ &\qquad=  C(\ol{X}_0,\ol{Y}_0) 
\\ &\qquad= \dOTC<C>{\mX, \mY}
\end{align*}
\end{proof}

\begin{proof}[Proof of Proposition~\ref{thm:zerosets}]
We, in fact, prove that the following 4 statements are equivalent:
\begin{enumerate}\denselist
  \item\label{itm:zerosetotc} $\dOTC{\mX, \mY} = 0$;
  \item\label{itm:zerosetotm} $\dOTM{p}{\mX, \mY} = 0$;
  \item\label{itm:zeroCoupling} there exists a Markovian coupling $(X_t, Y_t)_{t\in\N} \in \Pi(\mX, \mY)$ so that $\forall t\geq 0, C(X_t, Y_t) = 0$ almost surely;
  \item\label{item:zerosetallotm} for all distributions $q$ over $\N$, $\dOTM{q}{\mX, \mY} = 0$.
\end{enumerate}
\begin{itemize}
\item[\ref{itm:zerosetotc} $\implies$ \ref{itm:zerosetotm}:] This is a direct consequence of Proposition~\ref{prop:dotc_upper_bound}: 
  if $\dOTC{\mX,\mY} = 0$, 
  then  
  \begin{equation*}
    0 \leq \dOTM{p}{\mX, \mY} \leq \dOTC{\mX,\mY}  = 0.
  \end{equation*}
Thus, $\dOTM{p}{\mX, \mY} = 0$.
\item[\ref{itm:zerosetotm} $\implies$ \ref{itm:zeroCoupling}:] %
  Suppose $\dOTM{p}{\mX, \mY} = 0$. 
  Then, by Remark~\ref{rmk: optimal markovian coupling} there exists a Markovian coupling $(X_t, Y_t)_{t\in\N}$ such that:
  \begin{equation*}
    0 = \E (C(X_T,Y_T)).
  \end{equation*}
  Then, suppose (by contradiction) that there is $t_0$ so that 
  $p({t_0}) C(X_{t_0}, Y_{t_0}) > s>0$ with positive probability $\alpha>0$.
  Then, $\E (C(X_T,Y_T))\geq \alpha\cdot s>0$. Contradiction.

  Thus, $p(t) C(X_t, Y_t) = 0$ almost surely for each $t\in\N$ and hence $C(X_t, Y_t) = 0$ almost surely (since $p(t) >0$).

\item[\ref{itm:zeroCoupling} $\implies$ \ref{item:zerosetallotm}:] %
 This holds obviously.
  
\item[\ref{item:zerosetallotm} $\implies$ \ref{itm:zerosetotc}:] %
  Assume \ref{item:zerosetallotm}.
  We know from Theorem~\ref{thm:regwlcv} that 
  \begin{equation*}
    \dOTC{\mathcal{X},\mathcal{Y}} = \lim_{\delta\rightarrow 0} \dWL[\delta]{\infty}{\mathcal{X},\mathcal{Y}} = \lim 0 = 0.
  \end{equation*}
\end{itemize}

\end{proof}

We finish this section with an interesting stability result of OTM distances with
respect to the choice of the distribution $p\in\mathcal{P}(\N)$, 
which will be useful for subsequent proofs.

\begin{lemma}\label{lm:convergence of dOTM}
Let $\{p_k\}_{k\in\N}\subseteq\mathcal{P}(\N)$ be such that $\lim_{k\rightarrow \infty}d_\mathrm{TV}(p_k,p)=0$ where $d_\mathrm{TV}$ denotes the total variation distance. Then for all $\mX, \mY$, one has that 
$\lim_{k\rightarrow\infty}\dOTM{p_k}{\mX, \mY} = \dOTM{p}{\mX, \mY}.$
\end{lemma}

\begin{proof}[Proof of Lemma \ref{lm:convergence of dOTM}]
Let $\epsilon > 0$, choose $N_0$ so that there exists a sequence of random variables $T_k\sim p_k$ for any $k\geq N_0$ and $T\sim p$ such that $\Pr(T_k\neq T) \leq \epsilon$.

Then let $(X_t, Y_t)_{t\in\N}$ (resp. $(X^k_t, Y^k_t)_{t\in\N}$) be an optimal coupling independent of $T$ (resp. independent of $T_k$) that verifies $\dOTM<C>{p}{\mX, \mY} = \E\, C(X_T, Y_T)$, (resp. $\dOTM<C>{p_k}{\mX, \mY} = \E\, C(X^k_{T_k}, Y^k_{T_k})$).
Then, one has that
\begin{align*}
\dOTM<C>{p}{\mX, \mY} \leq& \E\; C(X^k_T, Y^k_T) \\
= & \Pr(X = X_k) \expect{ C(X^k_T, Y^k_T) | T = T_k} + \Pr(X \neq X_k) \expect{ C(X^k_T, Y^k_T) | T \neq T_k} \\
\leq&  \expect{ C(X^k_T, Y^k_T) | T = T_k} + \epsilon \norm{C}_\infty\\
=&  \expect{ C(X^k_{T_k}, Y^k_{T_k}) | T = T_k} + \epsilon \norm{C}_\infty\\
 = & \frac{1}{1 - \epsilon}\left(\E\; C(X^k_{T_k}, Y^k_{T_k}) - \expect{ C(X^k_{T_k}, Y^k_{T_k}) | T \neq T_k} \Pr(X \neq X_k)\right) + \epsilon\norm{C}_\infty\\
 \leq& \frac{1}{1 - \epsilon}\dOTM<C>{p_k}{\mX, \mY} +  \epsilon\left(1 + \frac{1}{1-\epsilon}\right) \norm{C}_\infty\\
 \leq &  \dOTM<C>{p_k}{\mX, \mY} + 2\epsilon\dOTM<C>{p_k}{\mX, \mY} + \epsilon (1 + 1 + 2\epsilon) \norm{C}_\infty \text{ if } \epsilon \text{ is small enough}\\
 \leq & \dOTM<C>{p_k}{\mX, \mY} + 5\epsilon \norm{C}_\infty. 
\end{align*}
This concludes the proof.
\end{proof}

\subsubsection{The OTM distance is a pseudometric}\label{sec:OTM is a metric}
We first introduce some notation. Let $(\setX,d_{\setX})$ be a pseudometric space. We let $\mathcal{M}(\setX)$ denote the collection of all Markov chains $\mathcal{X}=(\setX,m_\bullet^{\setX},\nu^{\setX})$ with state space $\setX$. 
Then, $d_{\mathrm{OTM}}^p$ induces a map as follows 
\[d_{\mathrm{OTM}}^p:\mathcal{M}(\setX)\times \mathcal{M}(\setX)\rightarrow \R_+\]
sending $(\mX_1,\mX_2)$ to $\dOTM{p}{\mX_1,\mX_2;{d_{\setX}}}$ with $d_{\setX}$ being the cost function 

\begin{proposition}[OTM distances are metrics]\label{prop:OTM_is_distance}
If $\setX = \setY$ is a pseudometric space $(\setX, d_\setX )$ 
and the cost $C$ is the pseudometric distance on $\setX$ 
(i.e., $C(x, y) = d_\setX(x, y)$).
For all \(p \in \mathcal{P}(\N)\), $d_{\mathrm{OTM}}^p:\mathcal{M}(\setX)\times \mathcal{M}(\setX)\rightarrow \R_+$ defines a pseudometric on $\mathcal{M}(\setX)$.

When $d_{\setX}$ is a metric and $p$ is fully supported on $\N$, then $d_{\mathrm{OTM},p}$ \emph{is also a metric}.
\end{proposition}

In practice the assumption that $C$ is a pseudometric is respected for example in the framework of \citet{chen2022weisfeilerlehman}, where the states have labels in a common metric space.

One can also derive a slightly more general result, to relate to p-Wassertein distances:

\begin{proposition}\label{prop:alpha_OTM_is_distance}
Let $\alpha \in [1,\infty)$.
If $\setX = \setY$ is a pseudometric space $(\setX, d_\setX )$ 
and the cost $C$ is defined as $C := d_\setX^\alpha$, i.e.,  $C(x, y) = d_\setX(x, y)^\alpha$ for any $x,y\in \setX$.

Then, for any \(p \in \mathcal{P}(\N)\), the map $h:  \mathcal{M}(\setX)\times \mathcal{M}(\setX) \rightarrow \R_+$ sending $\mX, \mY\in\mathcal{M}(\setX)$ to $\left(\dOTM{p}{\mX,\mY;{d_{\setX}^\alpha}}\right)^{\frac1\alpha}$ defines a pseudometric on $\mathcal{M}(\setX)$.

When $d_{\setX}$ is a metric and $p$ is fully supported on $\N$, then $h$ \emph{is also a metric}.
\end{proposition}

\begin{proof}[Proof of Propositions \ref{prop:OTM_is_distance} and \ref{prop:alpha_OTM_is_distance}]
  \propositionref{prop:OTM_is_distance} is a direct consequence of \propositionref{prop:alpha_OTM_is_distance} by taking $\alpha = 1$. We thus only need to prove \propositionref{prop:alpha_OTM_is_distance}.
  Let $h$ like in \propositionref{prop:alpha_OTM_is_distance}.
We prove that $h$ is a pseudometric on $\mathcal{M}(\setX)$ through the following three steps.
  \begin{itemize}
    \item Given any Markov chain $\mX$, we let $X$ be any realization of $\mX$. Then, $(X,X)$ is a Markovian coupling between $\mX$ and itself. Hence,
      \[0\leq (\dOTM{p}{\mX, \mX;\dX^\alpha})^{\frac1\alpha} \leq  (\E\, \dX^\alpha(X_T,X_T))^{\frac1\alpha}=0\]

    \item \textbf{Symmetry}: Given any two Markov chain $\mX$ and $\mY$ on $\setX$, any Markovian coupling $(X_t, Y_t)_{t\in\N}$ between $\mX$ and $\mY$ naturally (and bijectively) gives rise to a Markovian coupling $(Y_t, X_t)_{t\in\N}$ between $\mY$ and $\mX$. Hence, 
      \begin{align*}
        \dOTM{p}{\mX, \mY;\dX^\alpha}  &= \inf_{\substack{(X_t, Y_t)_{t\in\N}}}
          \E\, \dX^\alpha(X_T,Y_T)
                                 \\ &= \inf_{\substack{(X_t, Y_t)_{t\in\N}}}
          \E\, \dX^\alpha(Y_T,X_T)
                                 \\ &= \dOTM{p}{\mY, \mX;\dX^\alpha}
          \\ h(\mX, \mY) &= h(\mY, \mX)
      \end{align*}

    \item \textbf{Triangle inequality}:
      The proof of the triangle inequality is based on \cref{lm:gluing lemma}. 
      It is similar to the proof of the triangle inequality 
      in \citet[Definition 6.1]{villani2009optimal}.

    Suppose $\mX, \mY$ and $\mZ$ are three Markov chains on $\setX$. Then,
    \begin{align*}
      h(\mX, \mY)^\alpha
       &= \dOTM{p}{\mX, \mY;d_\setX^\alpha} 
      \\&= 
      \inf_{\substack{
        (X_t,Y_t)_{t\in\N} 
  }} \E\;\dX^\alpha(X_T, Y_T) \\
    &\leq  \inf_{\substack{
        (X_t,Z_t,Y_t)_{t\in\N} 
    }} \E\;\dX^\alpha(X_T, Y_T) \\
    &\leq \inf_{\substack{
        (X_t,Z_t,Y_t)_{t\in\N}
    }} \expect{\left(\dX(X_T, Z_T) + \dX(Z_T, Y_T)\right)^\alpha}
    \\ &\leq   \inf_{\substack{
        (X_t,Z_t,Y_t)_{t\in\N}
    }}
    \left((\E\,{\dX^\alpha (X_T, Z_T)})^{\frac1\alpha} +
    (\mathbb{E}\,{\dX^\alpha(Z_T, Y_T)})^{\frac1\alpha} \right)^\alpha  
    \text{ (Minkowski)}
    \end{align*}
where we are infimizing over all Markov chains $(X_t,Z_t,Y_t)_{t\in\N}$ whose marginals $(X_t,Z_t)_{t\in\N},(Z_t,Y_t)_{t\in\N}$ and $(X_t,Y_t)_{t\in\N}$ are Markovian couplings. 

Now, by Lemma~\ref{lm:gluing lemma} and \cref{rmk: optimal markovian coupling} 
we can take $(X'_t, Y'_t, Z'_t)$ a Markov chain so that

\begin{align*}
  h(\mX, \mZ) &= (\E{\dX^\alpha(X'_T, Z'_T)})^{\frac1\alpha} 
  \\h(\mZ, \mY) &= (\E{\dX^\alpha(Z'_T, Y'_T)})^{\frac1\alpha}.
\end{align*}

Hence
\begin{align*}
h(\mX, \mY)^\alpha
    &\leq \inf_{\substack{(X_t,Z_t,Y_t)_{t\in\N}}}
    \left((\E{\dX^\alpha(X_T, Z_T)})^{\frac1\alpha} +
    (\E{\dX^\alpha(Z_T, Y_T)})^{\frac1\alpha} \right)^\alpha  
  \\&\leq
    \left((\E{\dX^\alpha(X'_T, Z'_T)})^{\frac1\alpha} +
    (\E{\dX^\alpha(Z'_T, Y'_T)})^{\frac1\alpha} \right)^\alpha  
  \\&\quad= \left(h(\mX, \mZ) + h(\mZ, \mY)\right)^\alpha 
  \\ h(\mX, \mY) &\leq  h(\mX, \mZ) + h(\mZ, \mY).
\end{align*}
\end{itemize}

Now, for the second part of the statement, assume that $p$ has full support on $\N$ and $d_{\setX}$ is a metric.  
Assume further that $\dOTM<\dX^\alpha>{p}{\mX, \mY}=0$. Let $(X_t,Y_t)_{t\in\N}$ be an optimal Markovian coupling ((cf. Remark \ref{rmk: optimal markovian coupling}) and let $T\sim p$ be an independent random variable. 
Then, 
\[\expect{\dX^\alpha(X_T,Y_T)}=\sum_{t=0}^\infty p(t)\expect{\dX^\alpha(X_t,Y_t)}=0.\]
This implies that $\expect{\dX^\alpha(X_t,Y_t)}=0$ for all $t\in\N$. Since $\dX$ is a metric, we have that $X_t=Y_t$ for all $t\in\N$ and this means that $\mX$ and $\mY$ are isomorphic to each other.
\end{proof}

\subsection{Proofs and Technical Details from Section \ref{sec:wldelta}}\label{seq:wldeltaproofs}

\begin{proof}[Proof of Proposition~\ref{prop:wlreginfty}]
Notice that $\lim_{k\rightarrow\infty}d_\mathrm{TV}(p_\delta^k,p_\delta^\infty)=0$. Then, the proof follows from Lemma \ref{lm:convergence of dOTM}.
\end{proof}

\begin{proof}[Proof of Proposition~\ref{prop:wlreg_recursive}]\label{proof:recursive} 
We prove by induction the property that, 
\begin{equation}
  C^{\delta, (k)}_{ij} = 
  \inf_{\substack{
  (X_t, Y_t)_{t\in\N} \in \Pi((\setX,m^{\setX}_\bullet,\delta_i), (\setY,m^{\setY}_\bullet,\delta_j)) }}
  \E \left( \sum_{t=0}^{k-1} \delta (1 - \delta)^t C(X_{t}, Y_{t}) 
  + (1 - \delta)^k C(X_{k}, Y_{k})\right)
\label{eq:recurrence1}\end{equation}
and furthermore, there exists an optimal Markovian coupling for each $i,j$ that shares the same transition kernels  $(m^{\setX\setY,(t),k}_{\bullet \bullet})_{t\geq 0}$, where $t$ denotes the step number.

The case when $k=0$ is trivial as by definition  $C^{\delta, (0)} = C$. 
Now, suppose that \cref{eq:recurrence1} holds true for some $k\in \N$. Then, let us prove that the equation holds for $k+1$.

By definition, we have that $C^{\delta, (k+1)}_{ij} = \delta C_{ij} + (1 - \delta)\, \dW<C^{\delta, (k)}>{m^{\setX}_i, m^{\setY}_j}$. 
We let $m_{ij}^{\setX\setY,(0),k+1}\in\mathcal{C}(m_i^\setX,m_j^\setY)$ be an optimal coupling for each $i\in\setX,j\in\setY$. 
Then, we expand upon $m^{\setX\setY,(0),k+1}_{\bullet\bullet}$ to define a new set of transition kernels $(m^{\setX\setY,(t),k+1}_{\bullet\bullet})_{t\geq 0}$ such that $m^{\setX\setY,(t),k+1}_{\bullet\bullet}:=m^{\setX\setY,(t-1),k}_{\bullet\bullet}$ for all $t>0$.
Now, for any $i,j$, let $(X_t, Y_t)_{t\in\N} \in \Pi((\setX,m^{\setX}_\bullet,\delta_i), (\setY,m^{\setY}_\bullet,\delta_j))$ be a Markovian coupling with the prescribed transition kernels $(m^{\setX\setY,(t),k+1}_{\bullet\bullet})_{t\geq 0}$.
Then, 
\begin{align*}%
  &C^{\delta, (k+1)}_{ij} \\
  &= \delta C_{ij} + (1 - \delta)\, \dW<C^{\delta, (k)}>{m^{\setX}_i, m^{\setY}_j} \\
         &= \delta C(X_0,Y_0) 
          +  (1 - \delta) \expect*{C^{\delta, (k)}(X_1,Y_1)} \\
         &= \delta C(X_0,Y_0) 
          +    (1 - \delta) \expect*{\expect*{C^{\delta, (k)}_{i'j'}\mid (X_1,Y_1)=(i',j')}} \\
         &= \delta C(X_0,Y_0) 
         +  
         \\
         &\quad(1 - \delta)\expect*{\expect*{
           \expect*{ \delta C_{i'j'}+
             \sum_{t=1}^{k-1} \delta (1 - \delta)^t
              C(X_{t+1}, Y_{t+1}) 
               + (1 - \delta)^k C(X_{k+1}, Y_{k+1})}\Large\mid(X_1,Y_1)=(i',j')}}\\
         &= \delta C(X_0,Y_0) 
         +  
         \\
         &\quad(1 - \delta)\expect*{
            \delta C({X_1,Y_1})+
             \sum_{t=1}^{k-1} \delta (1 - \delta)^t
              C(X_{t+1}, Y_{t+1}) 
               + (1 - \delta)^k C(X_{k+1}, Y_{k+1})}\\ 
    &= \expect*{
             \sum_{t=0}^{k} \delta (1 - \delta)^t
              C(X_{t}, Y_{t}) 
               + (1 - \delta)^{k+1} C(X_{k+1}, Y_{k+1})}.
\end{align*}
This concludes the induction step.

Hence, for all $k\in\N$ we have that
\begin{equation}
  C^{\delta, (k)}_{ij} = 
  \inf_{(X_t, Y_t)_{t\in\N} \in \Pi((\setX,m^{\setX}_\bullet,\delta_i), (\setY,m^{\setY}_\bullet,\delta_j))}
  \E C(X_{T^\delta_k}, Y_{T^\delta_k}).
\label{eq:recurrence3}\end{equation}
In this way, we have that
\begin{align*}
  \dW<C^{\delta, (k)}>{\nu^{\setX}, \nu^{\setY}}
  &= \inf_{X_0\sim \nu^{\setX}, Y_0 \sim\nu^{\setY} }\expect*{
    C^{\delta, (k)}(X_0,Y_0)} \\
    &=\inf_{X_0\sim \nu^{\setX}, Y_0 \sim\nu^{\setY} }\expect*{\expect*{
    C^{\delta, (k)}_{ij}\mid (X_0,Y_0)=(i,j)}}\\
  &= \inf_{
    (X_t, Y_t)_{t\in\N} \in \Pi((\setX,m^{\setX}_\bullet,\nu^\setX), (\setY,m^{\setY}_\bullet,\nu^\setY))}
    \E C(X_{T^\delta_k}, Y_{T^\delta_k})\\
    &= \dWL[\delta]{k}{\mX,\mY}.
\end{align*}
This concludes the proof.
\end{proof}

\begin{proof}[Proof of Proposition~\ref{prop:wlregnotconstant}]\label{proof:wlregnotconstant} 
This is actually related to general convergence results on finite discounted MDPs.
However, as our action space is infinite (see Section \ref{sec:algorithm and convergence} for the description of the relevant MDPs), we will provide a complete proof here. We also note that we use arguments similar to the one below for proving other results such as Theorem~\ref{thm:wlsinkhorncv}.

When $\delta>0$, the convergence of $C^{\delta,(k)}$ follows from the Banach fixed point theorem (see for example Lemma \ref{lm:fixed point}). Let us define $F:\R^{n \times m}\rightarrow \R^{n \times m}$ by sending $D\in\R^{n \times m}$ to $D'\in\R^{n \times m}$ such that for any $i,j$:
\[D'_{ij} := \delta C_{ij} + (1 - \delta) \dW<D>{ m^{\setX}_i, m^{\setX}_j}.\]
By Lemma~\ref{lemma:ot1lip} one has that $F$ is a  $(1 - \delta)$-Lipschitz function when considering $\infty$-norm on $\R^{n\times m}$.  
Since $1-\delta<1$ whenever $\delta>0$, the Banach fixed point theorem applies.
By definition, $C^{\delta, (k+1)} = F\left(C^{\delta, (k)} \right)$. 
Hence, using the Banach fixed point theorem, we conclude that $C^{\delta, (k)}$ converges to the unique fixed point \(C^{\delta, (\infty)}\) of $F$.

Then, by Lemma~\ref{lemma:ot1lip} again, one has that 
\[\lim_{k\rightarrow\infty}\dW<C^{\delta, (k)}>{\nu^{\setX}, \nu^{\setY}}=\dW<C^{\delta, (\infty)}>{\nu^{\setX}, \nu^{\setY}}.\]
Then, by Proposition \ref{prop:wlreginfty} and Proposition \ref{prop:wlreg_recursive} one has that
\[\dWL[\delta]{\infty}{\mX,\mY} =\dW<C^{\delta, (\infty)}>{\nu^{\setX}, \nu^{\setY}}.\]

Now, suppose that \(C^{\delta, (\infty)}\) is a constant matrix with value \(c\). Let
\(1\leq i\leq n\), \(1 \leq j \leq m\). Then, by definition of the fixed point, one has that
\begin{equation}
C^{\delta, (\infty)}_{ij} = 
\delta C_{ij} + (1 - \delta)\dW<C^{\delta, (\infty)}>{m^{\setX}_i, m^{\setY}_j}.
\label{eq:wlreg_fixpoint}\end{equation}
This implies that
$c = \delta C_{ij} + (1 - \delta) c$ and thus $C_{ij} = c$ for any $i,j$. Hence, this will be a contradiction unless $C$ is also a constant matrix.

Finally, the speed of convergence result is also obtained as a consequence of the fact that the application \(F\) is \((1 - \delta)\)-contracting. Thus, by \citet[Corollary 1.1]{pata2019fixed} we have that
\[
  \norm{C^{\delta, (k)} - C^{\delta, (\infty)}}_\infty 
  \leq \frac{(1 - \delta)^k}{\delta} 
\norm{C^{\delta, (1)} - C}_\infty\leq \frac{2(1 - \delta)^k}{\delta} 
\norm{C}_\infty.
\]
where the rightmost inequality follows from the fact that  $\norm{C^{\delta,(1)}}_\infty\leq \norm{C}_\infty$ which can be proved using an argument similar for proving \cref{eq:orderofmins}.
Finally, $|\dWL[\delta]{k}{\mX,\mY} - \dWL[\delta]{\infty}{\mX,\mY}| \leq \frac{2(1 - \delta)^k}{\delta} 
\norm{C}_\infty$ follows from Lemma \ref{lemma:ot1lip}.

The case when $\delta=0$ is dealt with in Proposition \ref{prop:Cconverges}.
\end{proof}

\begin{proof}[Proof of Proposition~\ref{prop: optimal coupling}]
By Proposition \ref{prop:wlregnotconstant}, we know that when $\delta>0$, \(C^{\delta,(k)}\) converges to the unique fixed point $C^{\delta,(\infty)}$ of \cref{eq:wlreg_costmatrix}. This implies that for any $i\in\setX$ and $j\in\setY$,
\[C^{\delta, (\infty)}_{ij} = \delta C_{ij} + (1 - \delta)\, \dW<C^{\delta, (\infty)}>{m^{\setX}_i, m^{\setY}_j}. \]
Define $m^{\setX\setY}_{\bullet\bullet}:\setX\times\setY\rightarrow \mathcal{P}(\setX\times\setY)$ by sending $(i,j)$ to an optimal coupling between $m_i^\setX$ and $m_j^\setY$ for the optimal transport problem in the equation above. 
We finally let $\nu^{\setX\setY}$ be an optimal coupling for $\dW<C^{\delta,(\infty)}>{\nu^\setX,\nu^\setY}$. 
For any $i\in\setX$ and $j\in\setY$, we construct a time homogeneous Markovian coupling $(X_t^{ij},Y_t^{ij})_{t\in\N}$ with initial distribution $\delta_i\otimes\delta_j$ and transition kernel $m^{\setX\setY}_{\bullet\bullet}$. 
Then, we let $D_{ij}:=\expect*{C(X^{ij}_{T_\delta^\infty},Y^{ij}_{T_\delta^\infty})}$. We have that
\begin{align*}
   D_{ij}=& \expect*{C(X^{ij}_{T_\delta^\infty},Y^{ij}_{T_\delta^\infty})}\\
   =&\expect*{\sum_{t = 0}^{\infty} \delta (1 - \delta)^t C(X_{t}^{ij},Y_{t}^{ij})}\\
   =&\delta C_{ij} + (1-\delta)\expect*{\sum_{t = 1}^{\infty} \delta (1 - \delta)^{t-1} C(X_{t}^{ij},Y_{t}^{ij})}\\
   =&\delta C_{ij} + (1-\delta)\sum_{t=1}^\infty \delta(1-\delta)^{t-1}\!\!\!\!\!\sum_{i_1,j_1,\ldots,i_t,j_t}\!\!  C_{i_t,j_t}m^{\setX\setY}_{i_{t-1}j_{t-1},i_tj_t}m^{\setX\setY}_{i_{t-2}j_{t-2},i_{t-1}j_{t-1}}\cdots m^{\setX\setY}_{ij,i_1j_1}\\
   =&\delta C_{ij} + \\
   &(1-\delta) \sum_{i_1,j_1}\left(\delta C_{i_1,j_1}+\sum_{t=2}^\infty\delta(1-\delta)^{t-1}\!\!\!\!\sum_{i_2,j_2,\ldots,i_t,j_t}\!\!\!\!\!  C_{i_t,j_t}m^{\setX\setY}_{i_{t-1}j_{t-1},i_tj_t}\cdots m^{\setX\setY}_{i_1j_1,i_2j_2}\right)m^{\setX\setY}_{ij,i_1j_1}\\
   =&\delta C_{ij}+(1-\delta)\E_{(i_1,j_1)\sim m^{\setX\setY}_{ij}}\,(D_{i_1j_1}).
\end{align*}
Here in the fourth equality we used \cref{eq ET in distribution}.
Then, $D$ is the fixed point (the uniqueness follows from an argument similar to the one for proving Proposition \ref{prop:wlregnotconstant}) of the equation 
\[D_{ij}=\delta C_{ij}+(1-\delta)\E_{m^{\setX\setY}_{ij}}\,(D_{i_1j_1}), \quad \forall i,j.\]
Notice by definition, $C^{\delta,(\infty)}$ is also a fixed point of the equation above. Then, $C^{\delta,(\infty)}=D$.
Hence, if one constructs a time homogeneous Markovian coupling $(X_t,Y_t)_{t\in\N}$ with initial distribution $\nu^{\setX\setY}$ and transition kernel $m^{\setX\setY}_{\bullet\bullet}$. 
Then,
\begin{align*}
   & \expect*{C(X_{T_\delta^\infty},Y_{T_\delta^\infty})}=\expect*{\expect*{C(X_{T_\delta^\infty},Y_{T_\delta^\infty})\mid (X_0,Y_0)=(i,j)}}\\
   &=\E_{(i,j)\sim\nu^{\setX\setY}}\left({D_{ij}}\right)=\E_{(i,j)\sim\nu^{\setX\setY}}\left({C^{\delta,(\infty)}_{ij}}\right)=\dWL[\delta]{\infty}{\mathcal{X},\mathcal{Y}}.
\end{align*}
This concludes the proof.
\end{proof}

\begin{proof}[Proof of Theorem~\ref{thm:regwlcv}]
Choose a sequence $\delta_n\rightarrow 0$ such that
\[\lim_{n\rightarrow\infty}\dWL<C>[\delta_n]{\infty}{\mathcal{X},\mathcal{Y}}=\liminf_{\delta\rightarrow 0}\dWL<C>[\delta]{\infty}{\mathcal{X},\mathcal{Y}}.\]
By Proposition~\ref{prop: optimal coupling}, $\dWL<C>[\delta_n]{\infty}{\mathcal{X},\mathcal{Y}}$ can be obtained by an optimal time homogeneous Markovian coupling, which is determined by a transition kernel matrix \(P_n \in \mathcal{T}\) and an initial distribution vector \(\pi_n \in \mathcal{P}\), where
\(\mathcal{T} = \left\{P \in [0,1]^{\abs{X}\abs{Y} \times \abs{X}\abs{Y}}, P\1 = \1 \right\}\) and 
\(\mathcal{P} = \left\{\pi \in [0,1]^{\abs{X}\abs{Y}}, \1^{\mathrm{T}} \pi = 1 \right\}\), where $\1$ is the vector containing all ones. $\mathcal{P}$ and $\mathcal{T}$ are both compact spaces.
 Up to a choice of subsequence, we assume that 
\begin{itemize}
    \item $P_n$ converges to $P$ in $\ell_\infty$ norm, where $P$ is itself a transition kernel matrix;
    \item The limit $\lim_{n\rightarrow \infty}\delta\sum_{t=0}^\infty(1-\delta)^t(P_n^t)^\mathrm{T}\pi_n$ exists and is denoted by $\mu$. Obviously, $\mu\in\mathcal{P}$.
\end{itemize}
Now, we have that
\begin{align*}
    P^\mathrm{T}\mu &= \lim_{n\rightarrow \infty}\delta_n\sum_{t=0}^\infty(1-\delta_n)^tP^\mathrm{T}(P_n^t)^\mathrm{T}\pi_n\\
    &=\lim_{n\rightarrow \infty}\delta_n\sum_{t=0}^\infty(1-\delta_n)^t(P_n^{t+1})^\mathrm{T}\pi_n\\
    &=\lim_{n\rightarrow \infty}\frac{\delta_n}{1-\delta_n}\sum_{i=1}^\infty(1-\delta_n)^t(P_n^{t})^\mathrm{T}\pi_n\\
    &=\lim_{n\rightarrow \infty}\delta_n\left(\sum_{t=0}^\infty(1-\delta_n)^t(P_n^{t})^\mathrm{T}-P_n^\mathrm{T}\right)\pi_n\\
    &=\lim_{n\rightarrow \infty}\delta_n\sum_{t=0}^\infty(1-\delta_n)^t(P_n^{t})^\mathrm{T}\pi_n-\lim_{n\rightarrow \infty}\delta_nP_n^\mathrm{T}\pi_n\\
    &=\mu-0=\mu.
\end{align*}
Note that we have used the fact that $\delta_n\rightarrow 0$ several times in the derivation above.
This means that $\mu$ is stationary w.r.t. the transition kernel $P$. Moreover, by assumption that $\nu_X$ and $\nu_Y$ are stationary, it is easy to check that $\mu$, regarded as a probability measure still denote by $\mu$, is a coupling: $\mu\in\mathcal{C}(\nu_X,\nu_Y)$.
In this way, there exists a time homogeneous Markovian coupling $(X_t,Y_t)_{t\in\N}$ with transition kernel matrix $P$ and with a stationary initial distribution $\mu$. Hence,
\begin{align*}
\liminf_{\delta\rightarrow 0}\dWL<C>[\delta]{\infty}{\mathcal{X},\mathcal{Y}}&=\lim_{n\rightarrow\infty}\dWL<C>[\delta_n]{\infty}{\mathcal{X},\mathcal{Y}}\\
&=\mathbb{E}\,C(X_0,Y_0)\geq \dOTC<C>{\mX,\mY}.
\end{align*}
On the contrary, we know from Proposition \ref{prop:dotc_upper_bound} that $\limsup_{\delta\rightarrow 0}\dWL<C>[\delta]{\infty}{\mathcal{X},\mathcal{Y}}\leq \dOTC<C>{\mX,\mY}$ and this concludes the proof. 
 \end{proof}

\subsection{Proofs and Technical Details from Section \ref{differentiation}}\label{sec:regularized discounted WL}

\subsubsection{The Definition and Basic Properties}
We first provide a precise definition of the entropy-regularized optimal transport, including its primal and dual solution, 
which will later be useful to compute its gradient.

\begin{definition}[Entropy-regularized optimal transport]\label{def:regOT}
Remember that, using the same notations as in \cref{eq:wassersteindef}, given $\epsilon\geq 0$, the ($\epsilon$-)regularized OT problem is defined as:
\begin{equation}
\dW^{\epsilon}<C>{\alpha,\beta}:=\min_{(X, Y)\in \mathcal{C}(\alpha, \beta)} \E\; C(X,Y) - \epsilon H(X, Y). \label{eq:sinkhorndef}
\end{equation}
Where $H$ denotes the entropy function, i.e., $H(X,Y) := -\sum_{i\in\setX,j\in\setY}P_{ij}\log(P_{ij})$, where $P_{ij}:=\Pr(X=i,Y=j)$.

The distribution $P\in \R_+^{\abs{\setX} \times \abs{\setY}}$ of the 
optimal coupling verifying the minimum is called the \emph{primal solution}.

Solving this problem is usually done using Sinkhorn’s algorithm, an iterative algorithm described by \citet[Chapter 4.2]{peyreComputationalOT2018} — or a variant of described in \citet[Chapter 4.4]{peyreComputationalOT2018}. The latter algorithm ends up computing as a byproduct the so-called "\emph{dual solutions}" 
$f\in \R^{\abs{\setX}}, g\in\R^{\abs{\setY}}$, 
which are the solutions to the following dual optimization problem:
\begin{equation}
  \argmax_{f\in \R^{\abs{\setX}}, g\in\R^{\abs{\setY}}} 
  \inner{f}{\alpha} + \inner{g}{\beta}
  - \epsilon  \inner{e^{-f/\epsilon}}{K e^{-g/\epsilon}},
  \label{eq:sinkhorndualdef}
\end{equation}
where the matrix $K$ is defined by $K_{ij} := e^{-C_{ij}/\epsilon}$.
\end{definition}

We then provide a precise definition of the entropy-regularized discounted WL distance.
\begin{definition}[\gls{dWLepsilondelta}]\label{def:entropy dWL}
Analogous to Proposition~\ref{prop:wlreg_recursive}, let $C$ denote the cost matrix, $\delta$ be the discount factor, and $\epsilon$ be the entropy-regularization parameter. We recursively define matrices $C^{\epsilon,\delta, (l)}$ for $l=0,\ldots,k$ as follows:
\begin{gather}
C^{\epsilon, \delta, (0)}_{ij} = C_{ij}\label{eq:wlregsink_costmatrix0}\\
C^{\epsilon, \delta, (l)}_{ij} = 
\delta C_{ij} + (1 - \delta)\, \dW^{\epsilon}<C^{\epsilon,\delta, (l-1)}_{ij}>{m^{\setX}_i, m^{\setY}_j} .\label{eq:wlregsink_costmatrix}
\end{gather}
Then, the $\delta$-discounted entropy-regularized WL distance of depth $k$ is defined as follows
\begin{equation}
\dWL<C>[\delta][\epsilon]{k}{\mX,\mY} = \dW^\epsilon<C^{\epsilon,\delta, (k)}>{\nu^{\setX}, \nu^{\setY}}.
\label{eq:wlregk-entropy}\end{equation}\end{definition}

We note that the matrices defined above satisfy some convergence properties similar to those in Proposition \ref{prop:wlregnotconstant} for $C^{\delta,(k)}$:

\begin{proposition}[Convergence of \(C^{\epsilon,\delta,(k)}\)]\label{prop:wlregnotconstant-sinkhorn} 
For any $\delta\in(0,1]$ and any $\epsilon\geq 0$, the matrix \(C^{\epsilon,\delta,(k)}\) converges as $k\rightarrow\infty$. In particular, \(C^{\epsilon,\delta,(k)}\) converges to the unique fixed point $C^{\epsilon,\delta,(\infty)}$ of \cref{eq:wlregsink_costmatrix}.
Moreover, $C^{\epsilon,\delta,(k)}$ converges at rate   
\[\norm{C^{\epsilon,\delta,(k)} - C^{\epsilon,\delta,(\infty)}}_\infty \leq \frac{(1 - \delta)^k}{\delta} 
(2\norm{C}_\infty+\epsilon \log(nm)),\]
where $n:=|\setX|$ and $m:=|\setY|$.
\end{proposition}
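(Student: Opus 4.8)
The plan is to mirror the Banach fixed-point argument used for Proposition~\ref{prop:wlregnotconstant}, relying on the fact that Lemma~\ref{lemma:ot1lip} --- optimal transport and its Sinkhorn regularization are both $1$-Lipschitz in the cost matrix --- holds uniformly over all $\epsilon \geq 0$. First I would introduce the map $F^\epsilon\colon \R^{n\times m}\to\R^{n\times m}$ given by
\[
F^\epsilon(D)_{ij} := \delta\, C_{ij} + (1-\delta)\,\dW^\epsilon<D>{m^{\setX}_i, m^{\setY}_j},
\]
so that by \cref{eq:wlregsink_costmatrix} the iterates satisfy $C^{\epsilon,\delta,(l+1)} = F^\epsilon\bigl(C^{\epsilon,\delta,(l)}\bigr)$ with $C^{\epsilon,\delta,(0)} = C$. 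A pointwise application of Lemma~\ref{lemma:ot1lip} gives $\abs{F^\epsilon(D)_{ij} - F^\epsilon(D')_{ij}} \leq (1-\delta)\norm{D-D'}_\infty$ for each $i,j$, so $F^\epsilon$ is $(1-\delta)$-Lipschitz in the $\infty$-norm; crucially this constant does not depend on $\epsilon$.

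Since $\delta\in(0,1]$ forces $1-\delta<1$, the space $(\R^{n\times m},\norm{\cdot}_\infty)$ being complete lets me invoke the Banach fixed point theorem (Lemma~\ref{lm:fixed point}): $F^\epsilon$ has a unique fixed point $C^{\epsilon,\delta,(\infty)}$ and $C^{\epsilon,\delta,(k)}\to C^{\epsilon,\delta,(\infty)}$. The a priori error estimate for contractions (\citet[Corollary 1.1]{pata2019fixed}) then yields
\[
\norm{C^{\epsilon,\delta,(k)} - C^{\epsilon,\delta,(\infty)}}_\infty \leq \frac{(1-\delta)^k}{\delta}\,\norm{C^{\epsilon,\delta,(1)} - C}_\infty,
\]
so it remains only to bound the first increment on the right.

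The single new ingredient compared with the unregularized case is this bound, where the entropy term appears. Writing $\dW^\epsilon<C>{\alpha,\beta} = \min_P\bigl(\inner{P}{C} - \epsilon H(P)\bigr)$ over couplings $P$, I would use that $C\geq 0$, that the entropy obeys $0\leq H(P)\leq \log(nm)$, and that the product coupling is admissible, to conclude $-\epsilon\log(nm) \leq \dW^\epsilon<C>{\alpha,\beta} \leq \norm{C}_\infty$. Hence every entry $C^{\epsilon,\delta,(1)}_{ij} = \delta C_{ij} + (1-\delta)\dW^\epsilon<C>{m^\setX_i,m^\setY_j}$ also lies in $[-\epsilon\log(nm),\norm{C}_\infty]$, and since $C_{ij}\in[0,\norm{C}_\infty]$ lies in the same interval, their difference is at most its length: $\norm{C^{\epsilon,\delta,(1)} - C}_\infty \leq \norm{C}_\infty + \epsilon\log(nm) \leq 2\norm{C}_\infty + \epsilon\log(nm)$. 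Substituting this into the estimate above gives the claimed rate. I do not expect a serious obstacle: the argument is a routine adaptation of Proposition~\ref{prop:wlregnotconstant}, and the only point needing care is the uniform entropy bound $\log(nm)$, which is exactly what converts the unregularized constant $2\norm{C}_\infty$ into $2\norm{C}_\infty + \epsilon\log(nm)$.
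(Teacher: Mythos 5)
Your proposal is correct and takes essentially the same route as the paper: the paper omits the proof, noting only that it repeats the Banach fixed-point argument of Proposition~\ref{prop:wlregnotconstant} (contraction factor $1-\delta$ via Lemma~\ref{lemma:ot1lip}, which holds uniformly in $\epsilon$) with the extra term coming from the entropy bound $0 \leq H \leq \log(nm)$. Your explicit control of the first increment $\norm{C^{\epsilon,\delta,(1)} - C}_\infty \leq \norm{C}_\infty + \epsilon\log(nm)$ is exactly the detail the paper sketches, and in fact yields a slightly sharper constant than the stated bound.
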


\begin{proposition}\label{prop:limit regularized WL}
The limit $\dWL[\delta][\epsilon]{\infty}{\mX,\mY} :=\lim_{k\rightarrow\infty}\dWL[\delta][\epsilon]{k}{\mX,\mY}$ exists and in fact,
\begin{equation}
\dWL[\delta][\epsilon]{\infty}{\mX,\mY} =  \dW^\epsilon<C^{\epsilon,\delta, (\infty)}>{\nu^{\setX}, \nu^{\setY}}.
\label{eq:wlregksink-infty}\end{equation}
Furthermore, one has that
\[|\dWL[\delta][\epsilon]{k}{\mX,\mY} - \dWL[\delta][\epsilon]{\infty}{\mX,\mY}| \leq \frac{(1 - \delta)^k}{\delta} 
(2\norm{C}_\infty+\epsilon \log(nm)).\]
\end{proposition}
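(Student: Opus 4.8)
The plan is to reduce the statement to two results already available: the convergence of the cost matrices $C^{\epsilon,\delta,(k)}$ from Proposition~\ref{prop:wlregnotconstant-sinkhorn}, and the fact that the entropy-regularized optimal transport cost is $1$-Lipschitz in its cost matrix (Lemma~\ref{lemma:ot1lip}, which is stated directly for $\dW^\epsilon$ and holds for every $\epsilon\geq 0$). By Definition~\ref{def:entropy dWL} we have $\dWL[\delta][\epsilon]{k}{\mX,\mY} = \dW^\epsilon<C^{\epsilon,\delta, (k)}>{\nu^{\setX}, \nu^{\setY}}$, so the behaviour of the distance as $k\to\infty$ is entirely governed by feeding the converging sequence of cost matrices into a single, fixed Sinkhorn problem against the initial distributions $\nu^{\setX},\nu^{\setY}$. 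Throughout I assume $\delta>0$, as is required by Proposition~\ref{prop:wlregnotconstant-sinkhorn} (and as the claimed bound, with $\delta$ in the denominator, implicitly demands).

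First I would invoke Proposition~\ref{prop:wlregnotconstant-sinkhorn} to get that $C^{\epsilon,\delta,(k)}$ converges in $\infty$-norm to the unique fixed point $C^{\epsilon,\delta,(\infty)}$, with the explicit rate $\norm{C^{\epsilon,\delta,(k)} - C^{\epsilon,\delta,(\infty)}}_\infty \leq \frac{(1-\delta)^k}{\delta}\,(2\norm{C}_\infty + \epsilon\log(nm))$. Then I would apply Lemma~\ref{lemma:ot1lip} with $\alpha=\nu^{\setX}$, $\beta=\nu^{\setY}$, $C_1 = C^{\epsilon,\delta,(k)}$ and $C_2 = C^{\epsilon,\delta,(\infty)}$ to obtain $|\dW^\epsilon<C^{\epsilon,\delta, (k)}>{\nu^{\setX}, \nu^{\setY}} - \dW^\epsilon<C^{\epsilon,\delta, (\infty)}>{\nu^{\setX}, \nu^{\setY}}| \le \norm{C^{\epsilon,\delta,(k)} - C^{\epsilon,\delta,(\infty)}}_\infty \to 0$. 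This shows that the sequence $\dWL[\delta][\epsilon]{k}{\mX,\mY}$ converges and that its limit equals $\dW^\epsilon<C^{\epsilon,\delta, (\infty)}>{\nu^{\setX}, \nu^{\setY}}$, which is exactly \cref{eq:wlregksink-infty}.

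The quantitative bound then comes for free from the same chain of inequalities without passing to the limit: I would rewrite both $\dWL[\delta][\epsilon]{k}{\mX,\mY}$ and $\dWL[\delta][\epsilon]{\infty}{\mX,\mY}$ as the Sinkhorn costs of $C^{\epsilon,\delta,(k)}$ and $C^{\epsilon,\delta,(\infty)}$ against $(\nu^{\setX},\nu^{\setY})$ (using Definition~\ref{def:entropy dWL} and \cref{eq:wlregksink-infty}), apply Lemma~\ref{lemma:ot1lip} to bound their difference by $\norm{C^{\epsilon,\delta,(k)} - C^{\epsilon,\delta,(\infty)}}_\infty$, and finally substitute the rate from Proposition~\ref{prop:wlregnotconstant-sinkhorn}. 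I do not expect any genuine obstacle: the substantive work—the contraction argument guaranteeing the fixed point and the exponential convergence rate—has already been carried out in Proposition~\ref{prop:wlregnotconstant-sinkhorn}. The only point to watch is that Lemma~\ref{lemma:ot1lip} must be invoked with the \emph{same} marginals $(\nu^{\setX},\nu^{\setY})$ on both sides, so that the gap between the two distances is controlled purely by the $\infty$-norm distance between the two cost matrices, which is precisely the quantity for which we already have a decay estimate.
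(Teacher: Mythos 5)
Your proposal is correct and matches the paper's intended argument: the paper omits the proof of this proposition, stating only that it ``follows essentially the same arguments in the proof of Proposition~\ref{prop:wlregnotconstant} for the case when $\delta>0$,'' and your proof is precisely that argument instantiated in the regularized setting --- convergence of $C^{\epsilon,\delta,(k)}$ from Proposition~\ref{prop:wlregnotconstant-sinkhorn} combined with the $1$-Lipschitz property of $\dW^\epsilon$ in the cost matrix (Lemma~\ref{lemma:ot1lip}) applied with the fixed marginals $(\nu^{\setX},\nu^{\setY})$. You also correctly note that, unlike in the unregularized case, no coupling-based characterization is needed here since $\dWL[\delta][\epsilon]{\infty}{}$ is defined as the limit itself.
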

The proofs of the two propositions above follow essentially the same arguments in the proof of Proposition \ref{prop:wlregnotconstant} for the case when $\delta>0$. The extra term $\epsilon nm$ is from the fact that the entropy function satisfies that $|H|\leq \log(nm)$. We omit the proofs here.

\subsubsection{Proofs} 
\begin{proof}[Proof of Theorem \ref{thm:wlsinkhorncv}]
\arxivonly{\item}
\paragraph{Convergence}
When $k$ is finite, the proof follows from the convergence of \gls{sinkhorn} to regular optimal transport \cite[Property 1]{cuturi2013sinkhorn}

When $k=\infty$, the proof is based on the property of Banach fixed points in Lemma~\ref{lm:fixed point}. Consider the space $Z=\R^{n\times m}_{\geq 0}$ endowed with $\ell^\infty$ distance where $n:=|\setX|$ and $m:=|\setY|$. Let $\Lambda = [0,\infty)$. 
Let $F:Z\times \Lambda\rightarrow Z$ be defined by 
\[F(A,\epsilon):=\delta C + (1-\delta)\left( \dW^{\epsilon}<A>{m^{\setX}_i, m^{\setY}_j}\right)_{1\leq i\leq n,1\leq j\leq m}.\]

Now, consider $A,B\in Z$. Then,
\begin{align*}
   \|F(A,\epsilon)-F(B,\epsilon)\|_\infty&=(1-\delta)\left\|\left( \dW^{\epsilon}<A>{m^{\setX}_i, m^{\setY}_j}-\dW^{\epsilon}<B>{m^{\setX}_i, m^{\setY}_j}\right)_{1\leq i\leq n,1\leq j\leq m}\right\|_\infty\\
   &= (1-\delta)\max_{i,j}\left| \dW^{\epsilon}<A>{m^{\setX}_i, m^{\setY}_j}-\dW^{\epsilon}<B>{m^{\setX}_i, m^{\setY}_j}\right|.
\end{align*}

Hence, using Lemma~\ref{lemma:ot1lip}, we have that for any given $i$ and $j$,
\[|\dW^{\epsilon}<A>{m^{\setX}_i, m^{\setY}_j}-\dW^{\epsilon}<B>{m^{\setX}_i, m^{\setY}_j}|\leq \|A-B\|_\infty.\]
Therefore,
\begin{align*}
    \|F(A,\epsilon)-F(B,\epsilon)\|_\infty&=(1-\delta)\left\|\left( \dW^{\epsilon}<A>{m^{\setX}_i, m^{\setY}_j}-\dW^{\epsilon}<B>{m^{\setX}_i, m^{\setY}_j}\right)_{1\leq i\leq n,1\leq j\leq m}\right\|_\infty\\
    &\leq (1-\delta)\|A-B\|_\infty.
\end{align*}
Now, by Lemma \ref{lm:fixed point}, Proposition \ref{prop:wlregnotconstant-sinkhorn} and \citep[Proposition 4.1]{peyreComputationalOT2018}, one has that $C^{\epsilon,\delta,(\infty)}$, as the fixed point for $F_\epsilon$, is continuous w.r.t. $\epsilon$. Hence, by Lemma \ref{lemma:ot1lip} and Proposition \ref{prop:limit regularized WL}, one has that 
\[\dWL[\delta][\epsilon]{\infty}{\mX,\mY}=\dW^\epsilon<C^{\epsilon,\delta, (\infty)}>{\nu^{\setX}, \nu^{\setY}} \xrightarrow{\epsilon \to 0} \dW<C^{\delta, (\infty)}>{\nu^{\setX}, \nu^{\setY}}=\dWL[\delta]{\infty}{\mX,\mY}.\]

\paragraph{Convergence rate}
To prove the bound, denote by $\phi(\epsilon)$ the fixed point of $A \mapsto F(A, \epsilon)$ for any $\epsilon\in [0,\infty)$.

We first note that for any probability measures $\alpha\in P(\setX)$ and $\beta\in P(\setY)$ and any cost matrix $C:X\times Y\to \mathbb{R}$, we have that 
$$|\dW^\epsilon{\alpha,\beta;C}-\dW{\alpha,\beta;C}|\leq \epsilon\log{nm}.$$
To see this, let $(X,Y)$ be an optimal coupling for $\dW^\epsilon{\alpha,\beta;C}$, then
\begin{align*}
  |\dW^\epsilon{\alpha,\beta;C}-\dW{\alpha,\beta;C}|\leq \E C(X,Y)-(\E C(X,Y)-\epsilon H(X,Y))=\epsilon H(X,Y)\leq \epsilon\log{nm}.
\end{align*}

Then
\begin{align*}
  \norm{F(A, \epsilon) - F(A, 0)}_\infty
    &= (1-\delta)\norm{(\dW^\epsilon<A>{m^\setX_i, m^\setY_j}-\dW<A>{m^\setX_i, m^\setY_j})_{1\leq i\leq n,1\leq j\leq m}}_\infty\\
    &\leq \epsilon(1-\delta)\log{nm}.
\end{align*}

Thus
\begin{align*}
  \norm{\phi(0)-\phi(\epsilon)}_\infty &= \norm{F(\phi(0),0)-F(\phi(\epsilon),\epsilon)}_\infty\\
   &\leq \norm{F(\phi(0),0)-F(\phi(\epsilon),0)}_\infty
    + \norm{F(\phi(\epsilon),0)-F(\phi(\epsilon),\epsilon)}_\infty\\
   &\leq (1-\delta)\norm{\phi(0)-\phi(\epsilon)}_\infty + \epsilon(1-\delta)\log{nm}.
\end{align*}

Therefore, $\|\phi(0)-\phi(\epsilon)\|_\infty\leq \frac{\epsilon(1-\delta)}{\delta}\log{nm} $.

Hence we have the following non-asymptotic bound between the regularized OTM distance and the original OTM distance.

\begin{align*}
    &\abs{\dWL[\delta][\epsilon]{\infty}{\mX, \mY}-\dWL[\delta]{\infty}{\mX, \mY}}
    =\abs{\dW^\epsilon<\phi(\epsilon)>{\nu^\setX,\nu^\setY} - \dW<\phi(0)>{\nu^\setX,\nu^\setY}}
 \\ &\leq \abs{\dW^\epsilon<\phi(\epsilon)>{\nu^\setX,\nu^\setY} - \dW^\epsilon<\phi(0)>{\nu^\setX,\nu^\setY}}
 + \abs{\dW^\epsilon<\phi(0)>{\nu^\setX,\nu^\setY} - \dW<\phi(0)>{\nu^\setX,\nu^\setY}}
 \\ &\leq \norm{\phi(0)-\phi(\epsilon)}_\infty + \epsilon\log{nm}
 \\ &\leq \frac{\epsilon(1-\delta)}{\delta}\log{nm} + \epsilon\log{nm}
 \\ &\qquad= \frac{\epsilon(1-\delta + \delta) }{\delta}\log{nm}
 \\ &\qquad= \frac\epsilon\delta\log{nm}.
\end{align*}
\end{proof}

\begin{proof}[Proof of Theorem \ref{thm:computation of gradients}]\label{proof:differentiation} 
Consider the space $Z=\R^{n\times m}_{+}$ endowed with $\ell^\infty$ distance where $n:=|\setX|$ and $m:=|\setY|$. Let $\Lambda = M_{n}\times M_{m}\times \R^{n\times m}_+$, where \(M_k = \{ M \in \R^{k\times k}_+:\, \forall i, \sum_j M_{ij} = 1\}\).
Let $F:Z\times \Lambda\rightarrow Z$ be defined by 
\[F(A,M^1,M^2,C):=\delta C + (1-\delta)( \dW^{\epsilon}<A>{M^1_i, M^2_j})_{1\leq i\leq n,1\leq j\leq m}.\]
Recall that $C^{\epsilon,\delta, (\infty)}_{ij}$ satisfies the following equation:
\begin{equation}\label{eq:fixed point in proof}
C^{\epsilon,\delta, (\infty)}_{ij} = 
\delta C_{ij} + (1 - \delta) \dW^\epsilon<C^{\epsilon,\delta, (\infty)}>{m^{\setX}_i, m^{\setY}_j}.
\end{equation}
In other words, $C^{\epsilon,\delta, (\infty)}$ is a fixed point of $F$. 
By Lemma~\ref{lm:fixed_diff}, one has that $C^{\epsilon,\delta, (\infty)}$ is differentiable on the interior of its definition domain (which is a manifold without boundary).
We could also use that lemma directly to compute the gradient, but, for clarity, we will still do the computations in coordinates
to show how the gradient is computed in practice.

We differentiate $C^{\epsilon,\delta, (\infty)}_{ij}$ on both sides of \cref{eq:fixed point in proof} below  (using Einstein summation
convention):
\begin{align*}
\Delta_{ij}^{kl} = & \frac{\partial C^{\epsilon,\delta,(\infty)}_{ij}}{\partial C_{kl}} = \delta \1_{(i, j) = (k, l)} \nonumber + (1 - \delta) \frac{\partial}{\partial C_{kl}} 
    \dW<C^{\epsilon,\delta,(\infty)}>{m^{\setX}_i, m^{\setY}_j}\\
= & \delta \1_{(i, j) = (k, l)} + (1 - \delta) \frac{\partial \dW<C^{\epsilon,\delta,(\infty)}>{m^{\setX}_i, m^{\setY}_j} }{\partial C^{\delta,(\infty)}_{\alpha\beta}}
  \frac{\partial C^{\delta,(\infty)}_{\alpha\beta}}{\partial C_{kl}} \nonumber \\
= & \delta \1_{(i, j) = (k, l)} \nonumber  + (1 - \delta) P^{\alpha\beta}_{ij} \Delta^{kl}_{\alpha\beta}. 
\end{align*}\label{eq:fixpointdiff1}
By identifying tensors with
\(nm\times nm\)-square matrices via flattening together the dimensions (resp
codimensions), one has that
\[\Delta =  \delta I_{nm} + (1 - \delta) P \Delta.\]
Hence,
\begin{equation*} ( I_{nm} -  (1 - \delta) P) \Delta  =  \delta I_{nm}, 
\end{equation*}
and therefore, we have that
\begin{equation*}\Delta  =  \delta ( I_{nm} -  (1 - \delta) P)^{-1} .\label{eq:fixpointdiff3}\end{equation*}
Here the matrix \(K =  I_{nm} - (1 - \delta) P\) is invertible
because it is strictly diagonally dominant: For any $1\leq i\leq n$ and $1\leq j\leq m$, one has that
\begin{align*}
K_{ij}^{ij} = & 1 - (1 - \delta)P_{ij}^{ij}= \sum_{k, l} P_{ij}^{kl} - (1 - \delta)P_{ij}^{ij} \nonumber \\
= & \sum_{(k, l) \neq (i, j)} P_{ij}^{kl} + \delta P_{ij}^{ij} \nonumber >\sum_{(k, l) \neq (i, j)} P_{ij}^{kl} = \sum_{(k, l) \neq (i, j)} \abs{K_{ij}^{kl}},
\end{align*}
where in the second equality we used the fact that \(\sum_{k, l} P_{ij}^{kl} = 1\) since $P_{ij}$ represents a
coupling.

We apply the same method for calculating \(\Gamma\): differentiating the fixpoint
equation (cf. \cref{eq:fixed point in proof}) on both sides, we have that
\begin{align*}
\Gamma_{ij}^{kk'} = & \frac{\partial C^{\epsilon,\delta,(\infty)}_{ij}}{\partial m^{\setX}_{kk'}} \\
= & (1-\delta) \left(
\frac{\partial\dW<C^{\epsilon,\delta,(\infty)}>{m^{\setX}_i, m^{\setY}_j}}{\partial m^{\setX}_{kk'}} 
+ \frac{\partial\dW<C^{\epsilon,\delta,(\infty)}>{m^{\setX}_i, m^{\setY}_j}}{\partial C^{\epsilon,\delta,(\infty)}_{\alpha\beta}}
\frac{\partial C^{\epsilon,\delta,(\infty)}_{\alpha\beta}}{\partial m^{\setX}_{kk'}}
\right) \\
= &(1-\delta)(\1_{k = i} f_{ij}^{k'}  + P_{ij}^{\alpha\beta} \Gamma_{\alpha\beta}^{kk'}). 
\end{align*}\label{eq:fixpointdiff4}
Hence, $
\Gamma = (1 - \delta) (F + P\Gamma)$ and thus
 $( I_{nm} -  (1 - \delta) P) \Gamma =  (1 - \delta) F$. By invertibility of $K=I_{nm} -  (1 - \delta) P$ again, one has that
\begin{equation*}\Gamma =  (1 - \delta)  ( I_{nm} -  (1 - \delta) P)^{-1} F .\end{equation*}
This concludes the proof.
\end{proof}

\printglossaries

\end{document}